\title{Three Types of Calibration with Properties and their Semantic and Formal Relationships}
\author{Rabanus Derr, Jessie Finocchiaro, Robert C. Williamson}
\begin{document}

\maketitle

\begin{abstract}
    Fueled by discussions around ``trustworthiness'' and algorithmic fairness, calibration of predictive systems has regained scholars attention.
    The vanilla definition and understanding of calibration is, simply put, on all days on which the rain probability has been predicted to be $p$, the actual frequency of rain days was $p$.
    However, the increased attention has led to an immense variety of new notions of ``calibration.''
    Some of the notions are incomparable, serve different purposes, or imply each other.
    In this work, we provide two accounts which motivate calibration: self-realization of forecasted properties and precise estimation of incurred losses of the decision makers relying on forecasts.
    We substantiate the former via the reflection principle and the latter by actuarial fairness.
    For both accounts we formulate prototypical definitions via properties $\Gamma$ of outcome distributions, e.g., the mean or median.
    The prototypical definition for self-realization, which we call \emph{$\Gamma$-calibration}, is equivalent to a certain type of swap regret under certain conditions. These implications are strongly connected to the omniprediction learning paradigm.
    The prototypical definition for precise loss estimation is a modification of \emph{decision calibration} adopted from \citet{zhao_calibrating_2021}.
    For binary outcome sets both prototypical definitions coincide under appropriate choices of reference properties.
    For higher-dimensional outcome sets, both prototypical definitions can be subsumed by a natural extension of the binary definition, called \emph{distribution calibration} with respect to a property.
    We conclude by commenting on the role of groupings in both accounts of calibration often used to obtain multicalibration. 
    In sum, this work provides a semantic map of calibration in order to navigate a fragmented terrain of notions and definitions.
\end{abstract}

\section{Introduction}
Calibration has increasingly gained interest since it seems to provide a mathematical criterion of ``trustworthy''\footnote{We hesitate to provide a concise definition of trustworthiness in this work for the ambiguity and vagueness of its purpose and meaning. Instead, we write ``trustworthiness'' to emphasize to the reader that the term itself is and should be loaded with more than the formal intuitions for some facets of trustworthiness presented in this work.} predictions \citep{noarov2024calibration} and it is a major component of studies on algorithmic fairness \citep{chouldechova2017fair, hebert2018multicalibration}. Furthermore, the advent of capable, deep learning techniques gave rise to investigations of calibration of general deep neural networks \citep{guo_calibration_2017} and large language models \citep{cruz_evaluating_2024,openai_gpt-4_2024,kalai_calibrated_2024}.

While calibration has become a quantity of interest in empirical studies \citep{openai_gpt-4_2024,perdomo_difficult_2023,cruz_evaluating_2024}, theoretical works came up with dozens of new (and old) definitions of calibration investigating particular relationships between them. To name a few: $\Gamma$-calibration \citep{noarov_statistical_2023}, decision calibration \citep{zhao_calibrating_2021}, $u$-calibration \citep{kleinberg_u-calibration_2023}, class-wise calibration \citep{kull2019beyond}, confidence calibration \citep{guo_calibration_2017}, Global Interpretable Calibration Index \citep{cabitza2022global}, distance to calibration \citep{blasiok_unifying_2023}, smooth calibration \citep{foster2018smooth}.
The ``jungle'' of current notions of calibration is difficult to navigate.

As a result, wide-spread use of the term ``calibration'' has blurred the boundaries of what is meant by it. 
However, all usages have in common that calibration is either a criterion to judge predictions in light of obtained data or the process of fulfilling such a criterion. 
In this work we stick to the former (\cf \citep{holtgen2023richness}).
Our goal is \emph{not} to provide an exhaustive list of notions of calibration.\footnote{After trying this for some time, we gave up on this journey due to the immense variety and subtleties of the suggested variants of calibration.} 
Instead, we follow our main question: \textbf{What is the abstract purpose of calibration?} This way we provide a semantic map and guide through key notions of calibration and their central relationships.

Calibration is often contrasted with pure predictive accuracy \citep{cruz_evaluating_2024, van2019calibration, seidenfeld1985calibration}. 
While predictive accuracy guarantees the ``usefulness'' of predictions, calibration guarantees the ``trustworthiness'' of the predictions.
The ``usefulness'' narrative is readily justified. Expected risk minimization, the core principle behind a large portion of learning techniques, is the negative analogue of expected utility maximization. The lower the risk, the more useful the predictions are, when measured with the corresponding loss (respectively utility) function.

The ``trustworthiness''-narrative, however, requires a more detailed explanation. 
We identified two accounts of calibration which could justify it: (a) \emph{Self-realization:} For the instances where some value $c$ was forecasted, the actual outcomes can be summarized to a value close to $c$. (b) \emph{Precise Loss Estimation:} The forecasted values let one provide estimates of incurred losses (for certain loss functions) which are close to the actual materialized losses.
While these accounts are not an exhaustive list, these two paradigms account for the majority of calibration motivations. 
We found a third account of calibration in the literature which focuses on the approximate equivalence of means. Here, predictions are calibrated if the average of outcomes is equal to the average of predictions on a certain subgroups or even individuals \citep{dawid1985calibration, zhao2020individual, luo2022local,holtgen2023richness}. This account is somewhat located between the narratives of ``usefulness'' and ``trustworthiness.''
Note that all accounts extend on a certain facet of the binary vanilla calibration definition (Definition~\ref{def: binary vanilla calibration}) which has historically motivated previous definitions. 

To uncover the two central accounts of this work, we rewrite current definitions of calibration using the language of properties \citep{osband1985providing,fissler_higher_2016,lambert_eliciting_2008}.
Properties, simply put, are functions from distributions on the outcome set $\Y$ to some prediction set $\R$, e.g., $\R = \reals$ or $\R = \Delta(\Y)$.
We distinguish between \emph{optimization-level} and \emph{decision-level} properties. 
Optimization-level properties specify the actual predicted entity, e.g., full distribution, mean, $\alpha$-quantile, or best action.
Decision-level properties most often correspond to downstream uses of the optimization-level property, such as decision makers informing their discrete action via a prediction about outcome probabilities, or ranking of classes being used to deduce the top-$k$ most likely outcomes. 
Decision-level properties borrow their name since they generalize maximum expected utility decision makers.
Notably, decisions are often discrete, and therefore hard to optimize directly, necessitating the distinction between (often continuous) optimization-level properties and (often discrete) decision-level properties.
Properties subsume not only utility-based decision makers, but rather arbitrary statistical properties of distributions as well, such as quantiles\footnote{Calibrated quantiles are relevant for conformal predictions \citep{jung_batch_2022}.}, ratios of expectations, or class marginals, to name a few. Furthermore, the elicitability and identifiability of properties as detailed in Section~\ref{Playground} relate properties to loss functions and their optimization criterion. This makes properties a useful vehicle to study calibration in the abstract.

Our contributions are summarized as follows:
\begin{enumerate}
    \item We subsume many existing definitions under three core ``types'' of definitions of calibration: distribution calibration with respect to a (decision-level) property $\Phi$ (Definition~\ref{def: Distribution Calibration with Respect to Gamma}), property calibration with respect to an abstract property $\Gamma$ (Definition~\ref{def: Gamma-calibration}) and decision calibration with respect to a set of loss functions $\L$ (Definition~\ref{def: decision calibration}).
    \item We show that all ``types'' of definitions of calibration collapse in the case of binary outcome sets and appropriate choices of $\Phi, \Gamma$ and $\L$ (Proposition~\ref{prop:Decision Calibration Equivalent to Mean Calibration for Binary Outcome Set}). More generally, we argue that distribution calibration is the central ``parent'' notion which implies both of the others (Proposition~\ref{prop:Distribution Calibration with Respect to Gamma implies Gamma-Calibration} and Proposition~\ref{prop:Distribution Calibration Implies Decision Calibration}), as summarized in Figure~\ref{fig: calibration backbone}. When generalizing to the approximate case, see Propositions~\ref{prop:approximate distribution calibration implies gamma calibration} and~\ref{prop:Approximate Distribution Calibration Implies Approximate Decision Calibration} and Figure~\ref{fig: calibration backbone approximate}. %
    \item We elaborate on property calibration as a prototypical definition for the account of self-realization. 
    We show it can equivalently be expressed as a type of swap regret (Proposition~\ref{prop:Perfect Calibration via Loss Function or Identification Function} and Proposition~\ref{prop:Approximate Calibration and Low Swap Regret}). Furthermore, we prove that self-realization is inherited by \emph{refined} properties--- those properties $\Phi$ that can be defined by composing a function with some original property $\Gamma$ (Proposition~\ref{prop:Calibration is Inherited to Refined Properties} and Proposition~\ref{prop:Approximate Calibration is Inherited to Refined Properties}), a characteristic of the notion which is exploited in omniprediction (\cf Proposition~\ref{prop:Low Swap Regret Guarantee for Refined Properties}).
    \item We contextualize decision calibration as a prototypical notion of precise loss estimation. In particular, we show it requires simultaneous precise Bayes risk estimation for several loss functions of interest (Proposition~\ref{prop:Decision Calibration Implies Precise Bayes Risk Estimation}). Furthermore, we dissect the relationship between self-realization and precise loss estimation, concluding that these accounts are incommensurable (Section~\ref{sec:When Self-Realization Implies Precise Loss Estimation, and Vice-Versa}).
    \item We provide a non-exhaustive categorization of existing notions of calibration in the three ``types'' in Table~\ref{tab:notions of calibration - strict derivatives} and Table~\ref{tab:notions of calibration - philosophy}.
\end{enumerate}

The distinction of the three ``types'' of definitions of calibration and their relationship to the presented accounts of self-realization and precise loss estimation has, to the best of our knowledge, not been made in the literature. Properties have been used in \citep{gneiting_regression_2023} and \citep{noarov_statistical_2023} to formalize calibration. Some of the formal relationships have been discussed already in literature (e.g., distribution calibration implies decision calibration has been argued for in \citep{zhao_calibrating_2021}), but not within the more general picture of properties. %

Figure~\ref{fig: calibration backbone} graphically summarizes the two strands which we will follow in this work. The figure depicts the idealized ``perfect'' calibration case. For approximate notions the implications require additional assumptions (Figure~\ref{fig: calibration backbone approximate}). Clearly, we don't expect the reader to yet make sense of the definitions and implications.
\begin{figure}[ht]
    \centering
    \def\svgwidth{0.99\columnwidth}
    {\footnotesize
    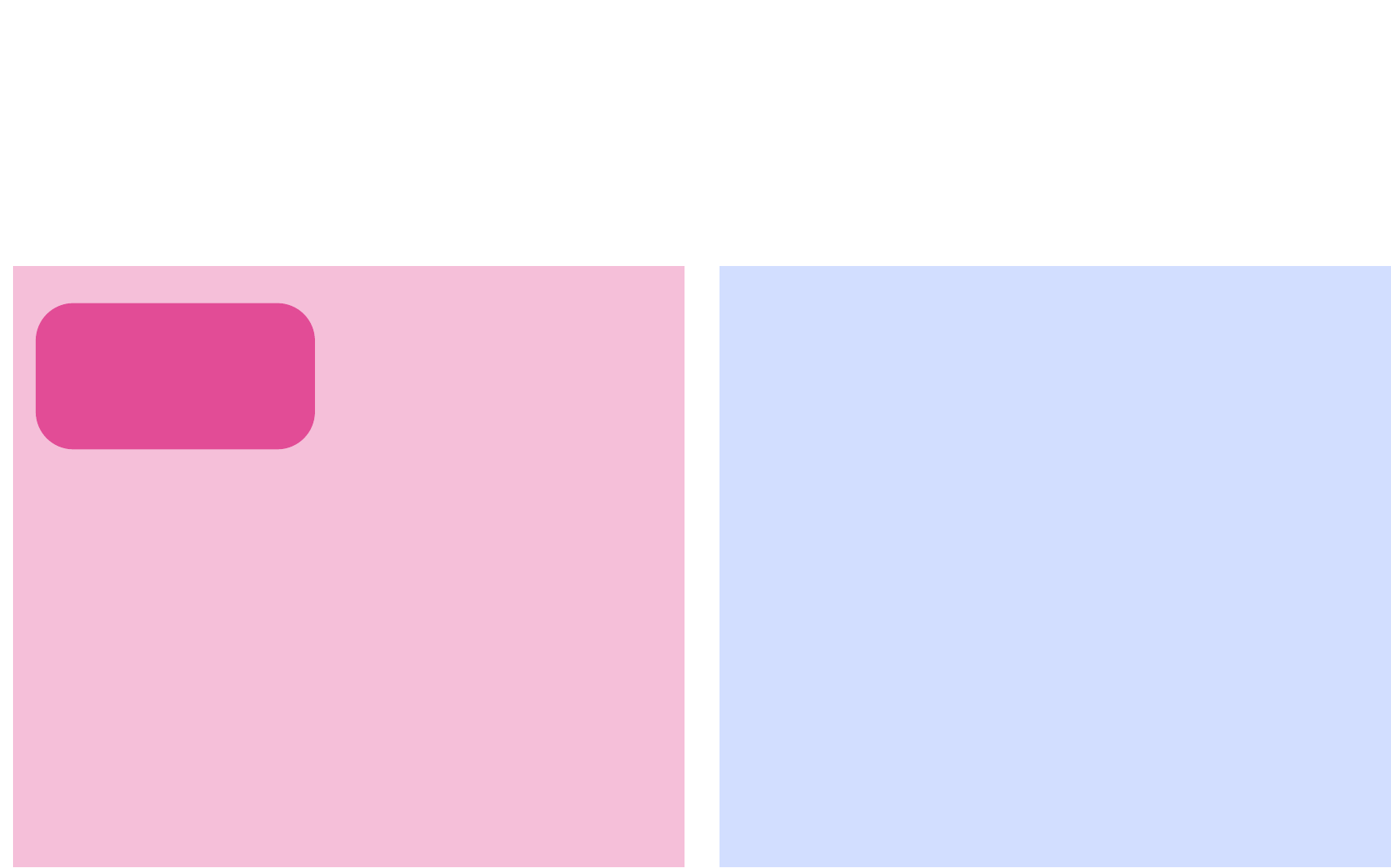
    }
    \caption{Relationships between Notions of Calibration. Implications under perfect calibration, finite $\Y$ and elicitable property $\Gamma$ and $\Phi$. The three types of calibration are marked in different colors. The abstract accounts of calibration are shaded.}
    \label{fig: calibration backbone}
\end{figure}
\begin{figure}[ht]
    \centering
    \def\svgwidth{0.99\columnwidth}
    {\footnotesize
    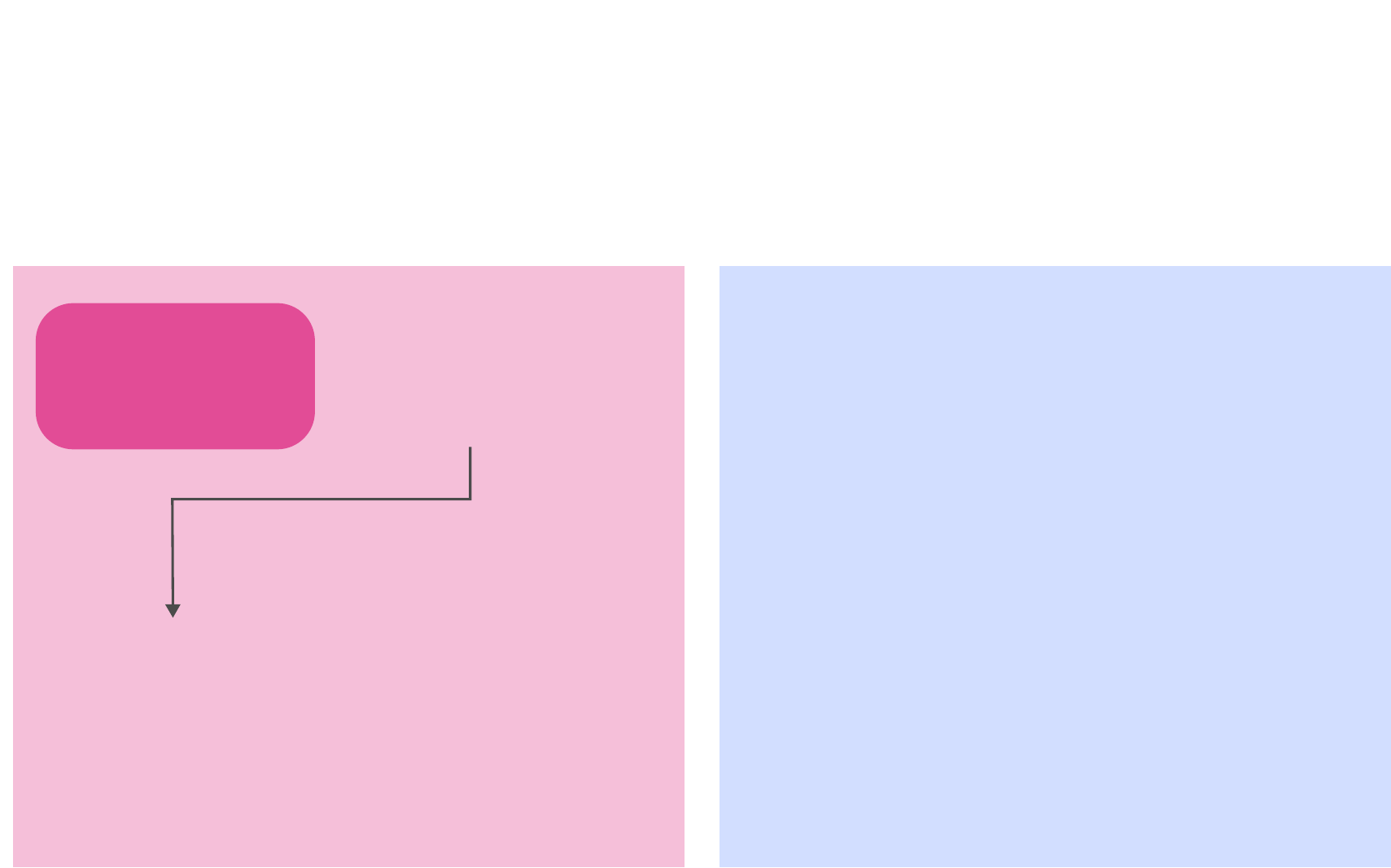
    }
    \caption{Relationship between Approximate Notions of Calibration. Implications Under Approximate Calibration and Finite $\Y$. Further conditions are stated in the the referenced propositions. Those conditions particularly contain Lipschitz and Smoothness assumptions. The three types of calibration are marked in different colors. The abstract accounts of calibration are shaded.}
    \label{fig: calibration backbone approximate}
\end{figure}

\section{An Exemplary Motivation}
\label{an exemplary motivation}
Let us walk through an example to get an intuitive understanding of the subtleties of different formalizations of calibration.

\citet{perdomo_difficult_2023} study the Dropout Early Warning System (DEWS) implemented at Wisconsin Public Schools. Based on state level testing in the 8th grade, the system predicted on-time high school graduation likelihood. Hence, the DEWS predictions themselves, i.e., the \emph{optimization-level property}, are probability forecasts on a binary outcome set. As the authors observe, the forecasts are not perfectly calibrated following \emph{binary vanilla calibration} (Definition~\ref{def: binary vanilla calibration}), i.e., the graduation rate of students who get a certain prediction $p$ is not $p$. 

The predictions of the DEWS is given to the department authorities which defines three risk categories ``high risk,'' ``medium risk,'' and ``low risk'' for dropout. The risk category determines the measures to be taken to increase the graduation likelihood. In other words, the risk categories form the \emph{decision-level property}. Are the \emph{predictions calibrated conditioned on the risk categories}, i.e., is the average graduation rate among all persons who are categorized as ``high risk'' (or ``medium risk'' or ``low risk'') equal to the average prediction score of those persons? This property, which we call \emph{distribution calibration with respect to the risk categories} (Definition~\ref{def: Distribution Calibration with Respect to Gamma}), is not fulfilled as immediately derived from \citep[Figures 1 and 2]{perdomo_difficult_2023}.

But, the average graduation rate among all persons who are categorized as ``high risk'' is lower than the average graduation rate in all other categories (for the other categories respectively). In other words, the ``high risk'' category self-realizes. Hence, the predictions are \emph{property calibrated} (Definition~\ref{def: Gamma-calibration}).

Finally, self-realization of the decision-level property is not necessarily the intention of calibration. Instead one can ask whether the department authorities can estimate their expected loss in mis-assigning students to the wrong group. For this one would need to introduce a loss function, whose minimizer is the risk category assignment. Note that several such loss functions exist. Depending on the choice of the loss function, the predictions are precisely estimating the loss, i.e., the predictions are \emph{decision calibrated} (Definition~\ref{def: decision calibration}), or not.

The detailed numbers of how calibrated the DEWS predictions are is irrelevant for the lessons we want to convey in this section. Some of the notions of calibration are fulfilled to a lesser degree than others. We use this example to demonstrate that there seems to be no ``right'' definition of calibration. This leaves a choice open which type and definition of calibration to consider in a contextualized problem. This choice has semantics and implications which we disentangle in the following.

\section{Formal Setup}
\label{Playground}

\paragraph{Data Distribution}
Let $(\Omega, \Sigma, \lambda)$ be a standard probability space. Let $\X$ (and $\Y$) be an input set (and an outcome set respectively).
For the sake of simplicity, we assume that those sets are finite dimensional Euclidean spaces or finite sets, equipped with the standard Borel-$\sigma$-algebra $\B(\X)$ (respectively $\B(\Y)$). We define two random variables $X \colon \Omega \rightarrow \X$ and $Y \colon \Omega \rightarrow \Y$. The distribution induced on $\X \times \Y$ by the composed random variable $(X,Y)$ is denoted $D$, and called the \emph{data distribution}. The set of all data distributions is denoted $\Delta(\X \times \Y)$.
For the expected value of a measurable function $g \colon \X \times \Y \rightarrow \reals$ we write $\mathbb{E}_{(X,Y) \sim D}[g(X,Y)]$ or simply $\mathbb{E}_{D}[g(X,Y)]$.

The marginals are denoted $D_X$ (respectively $D_Y$).
Whenever we use the notation $\P$ we mean a subset of $\Delta(\Y)$ the set of all probability distributions on the measurable space $(\Y, \B(\Y))$.
We call a data distribution $D$ \emph{regular with respect to $\P$} if and only if for every $x \in \X$ it is true that $D_{Y|  X} \in \P$.

Let $\kappa_{Y|X}\colon \X \times \B(\Y) \rightarrow [0,1]$ be a regular conditional probability distribution defined through $D$. In particular, we write $D_{Y|\{ x \colon X=x\}}$ for the probability measure $\kappa_{Y | X}(x, \cdot)$ with any fixed $x \in \X$. Note this regular conditional distribution exists (since $(\Y, \B(\Y))$ is a Borel space) \citep[Theorem 8.37]{klenke2013probability}. Let $f \colon \X \rightarrow\reals$ be a measurable function. An often used conditional distribution for a fixed value $\gamma \in \reals$ is, for all $A \in \B(\Y)$,
\begin{align*}
    D_{Y | f(X)=\gamma}(A) \coloneqq \frac{1}{\int_\X \llbracket x \colon f(x) =\gamma \rrbracket dD_X(x) } \int_\X \llbracket x \colon f(x) =\gamma \rrbracket \kappa_{Y|X}(x, A) dD_X(x).
\end{align*}
(\citep[Definition 8.28]{klenke2013probability}).
For the corresponding conditional expectation for a measurable function $g \colon \Y \rightarrow \reals$ we write,
\begin{align*}
    \mathbb{E}_{D}\left[ g(Y) |f(X) = \gamma \right] \coloneqq \mathbb{E}_{Y \sim D_{Y | f(X)=\gamma}}\left[ g(Y) \right].
\end{align*}

\paragraph{Properties}
\label{paragraph:properties}
Let $\R$ be a set of property values equipped with a metric $m$, i.e., $(\R,m)$ forms a metric space. Generally, we see three common choices of value sets: (i) if $\R \subseteq \reals$ is an interval, we set $m$ to be the absolute difference, (ii) if $|\R| < \infty$, we set $m$ to be the discrete metric. Finally, (iii) if $\R = \Delta(\Y)$, we set $m$ to be the total variation distance. Furthermore, $(\R, \mathcal{B}(\R))$ is a measurable space for $\mathcal{B}(\R)$ being the Borel-$\sigma$-algebra induced by the topology given via $m$.
A measurable function $\Gamma \colon \P \rightarrow \R$ is called a \emph{property}.\footnote{Note that measurability can be guaranteed by inducing a $\sigma$-algebra on $\P$ or simply referring to the Borel-$\sigma$-algebra on $\Delta(\Y)$ induced by the total variation distance, in particular if $\Y$ is finite.} Without loss of generality we assume that $\R = \im \Gamma$.
To distinguish between optimization-level and decision-level properties we sometimes use $\Gamma$ and $\Phi$ respectively. In Section~\ref{sec:calibration-as-precise-loss-estimation}, we additionally use $\Theta$ to denote Bayes risk properties.

The property is \emph{continuous} iff $\Gamma$ is continuous with respect to the total variation distance on $\P$ and the metric $m$ on $\R$. In particular, a property is \emph{Lipschitz continuous} iff $\Gamma$ is Lipschitz continuous with respect to the total variation distance on $\P$ and the metric $m$ on $\R$.
    
A property $\Gamma \colon \P \rightarrow \R$ is called \emph{elicitable} if there exists a loss function $\ell \colon \Y \times \R \rightarrow \reals$ which is measurable in the first variable for all fixed $\gamma \in \R$ in the second variable and such that,
\begin{align*}
    \Gamma(P) \in \argmin_{\gamma \in \R} \mathbb{E}_{Y \sim P}[\ell(Y, \gamma)],
\end{align*}
for all $P \in \P$. In the case $\ell$ elicits $\Gamma$, we say the loss $\ell$ is $\P$-consistent with respect to $\Gamma$.
Overloading notation, we sometimes write $\ell(P, \gamma) = \mathbb{E}_{Y \sim P}[\ell(Y, \gamma)]$. An elicitable property can be understood as a ``best response'' for a minimum expected loss decision maker.

A property $\Gamma \colon \P \rightarrow \R$ is called \emph{identifiable} if there exists an identification function $V \colon \Y \times \R \rightarrow \reals$ which is measurable in both variables, and %
\begin{align*}
    \Gamma(P) = \gamma \Leftrightarrow \mathbb{E}_{Y\sim P}[V(Y, \gamma)] = 0,
\end{align*}
for all $P \in \P$. We overload notation and write $V(P, \gamma) = \mathbb{E}_{Y \sim P}[V(Y, \gamma)]$.

A measurable function $f \colon \X \to \R$ is called a \emph{$\Gamma$-predictor} if $\im f \subseteq \R$ for the property $\Gamma : \P \to \R$.
Properties, as well as property predictors can be simply stacked to vectors, e.g., the distribution predictor $f \colon \X \to \Delta(\Y)$ is a $(\Gamma_y)_{y \in \Y}$-predictor, where $\Gamma_y$ denotes the property: probability of outcome $y \in \Y$.

\section{Distribution Calibration}
Calibration has been well-studied long before its evolution as a ``trustworthiness'' criterion or ``fairness'' criterion (as multi-calibration) in machine learning \citep{murphy_verification_1967, murphy_reliability_1977, dawid_well-calibrated_1982,degroot_comparison_1983}.\footnote{In some of the older literature the terms ``reliability'' and ``validity'' were (inconsistently) used instead of ``calibration''.}\footnote{ \citet{murphy_reliability_1977} provide the first calibration plots, objective frequency versus prediction.} Calibration was considered part of a canon of ``goodness''-measures of (meteorological) forecasts \citep{murphy_verification_1967, degroot_comparison_1983}. In particular, calibration captured the following intuition: if it rains a $p$-proportion of times on the days on which the prediction is $p$ for rain, the predictions are calibrated. Definition~\ref{def: binary vanilla calibration} makes this formal.
\begin{definition}[Binary Vanilla Calibration]
\label{def: binary vanilla calibration}
    Let $\X \subseteq \mathbb{R} $ be an input set and $\Y = \{ 0,1\}$ a binary outcome set. A predictor $f \colon \X \rightarrow [0,1]$ predicting the mean, i.e., the probability $D_{Y|X=x}(Y = 1) = \mathbb{E}[Y|X=x]$, is \emph{calibrated} on a distribution $D \in \Delta(\X \times \Y)$ if and only if, for all $\gamma \in \im f$,
    \begin{align*} 
        \mathbb{E}_{(X,Y) \sim D}[Y | f(X) = \gamma] = \gamma.
    \end{align*}
\end{definition}
Consider $f\colon \X \rightarrow \P$ to be a full distribution forecast for a finite $\Y$. It is not always relevant, nor attainable to provide full distribution estimates which are calibrated conditioned on the distributional predictions. In particular, in high-dimensional class probability prediction problems it is not realistic to achieve reasonable calibration guarantees \citep{roth_forecasting_2024,Noarov_2024_letall}. The number of events to condition on grows exponential in the dimensionality of the outcome set $\Y$. Hence, scholarship suggested conditioning on ``decision-events'' \citep{noarov2023high, roth_forecasting_2024}, marginals \citep{kull2019beyond}, or top-labels \citep{guo_calibration_2017, gupta_top-label_2021}. In other words, the number of conditioning events is reduced by focusing only on the ``relevant'' ones.

We subsume all those notions via a specified property. For the sake of readable notation we use $\Gamma$ as the symbol for the property here, even though it can refer to a decision-level or optimization-level property. For instance, $\Gamma$ could be the map which maps all distributions to one of their marginals (cf. \citep{kull2019beyond}) or best responses with respect to some utility function (cf. \citep{noarov2023high}).
\begin{definition}[Distribution Calibration with Respect to $\Gamma$]
\label{def: Distribution Calibration with Respect to Gamma}
    Let $\Gamma \colon \P \rightarrow \R$ be a property and $D$ a regular data distribution on $\X \times \Y$, where $\Y$ is finite. Let $f \colon \X \to \P$ be a distributional predictor. The predictor $f$ is \emph{distribution calibrated with respect to $\Gamma$} on $D$ if for every $\gamma \in \im \Gamma \circ f$,
    \begin{align*}
        \mathbb{E}_{D}[\llbracket Y = y \rrbracket|\Gamma \circ f(X) = \gamma] = \mathbb{E}_{D}[f_y(X)|\Gamma \circ f(X) = \gamma], \quad \forall y \in \Y,
    \end{align*}
    where $f_y(x) \in [0,1]$ denotes the $y$-component of the prediction $f(x) \in \Delta(\Y)$ for $x \in \X$. The predictor $f$ is \emph{$\alpha(\gamma)$-approximate distribution calibrated with respect to $\Gamma$} on $D$ if for every $\gamma \in \im \Gamma \circ f$, $\alpha(\gamma) \in \reals$,
    \begin{align*}
        \left| \mathbb{E}_{D}[\llbracket Y = y \rrbracket|\Gamma \circ f(X) = \gamma] - \mathbb{E}_{D}[f_y(X)|\Gamma \circ f(X) = \gamma] \right|\le \alpha(\gamma), \quad \forall y \in \Y.
    \end{align*}
\end{definition}
We illustrate Definition~\ref{def: Distribution Calibration with Respect to Gamma} in Figure~\ref{fig: distribution calibration on simplex}, where we demonstrate exactly (left) and approximately (right) satisfying distribution calibration with respect to the property $\Gamma(p) = \argmax_y p_y$ representing the mode.
\begin{figure}
\centering
\begin{subfigure}{0.48\textwidth}
    \includegraphics[width=\textwidth]{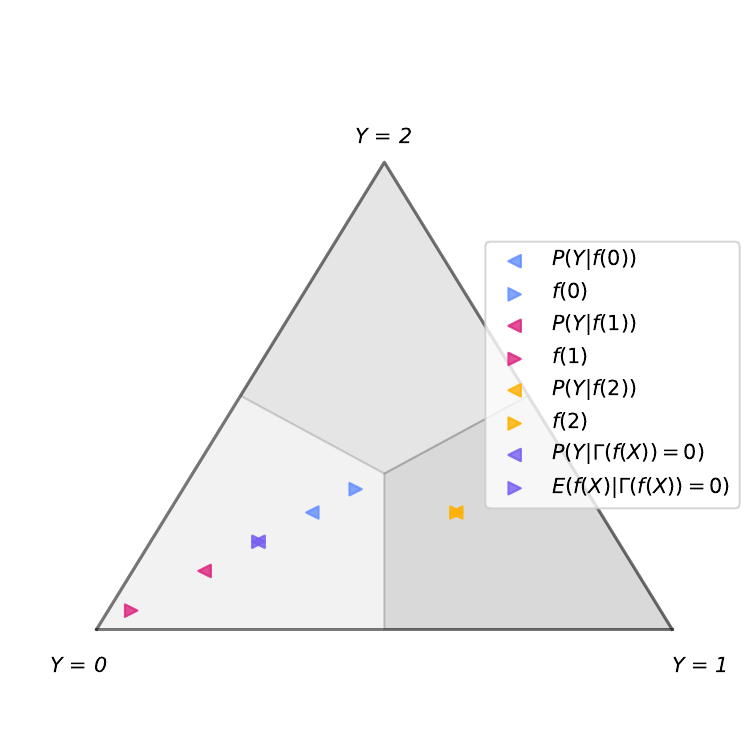}
    \caption{Perfect distribution calibration with respect to $\Gamma$.}
    \label{fig:distribution calibration perfect}
\end{subfigure}
\hfill
\begin{subfigure}{0.48\textwidth}
    \includegraphics[width=\textwidth]{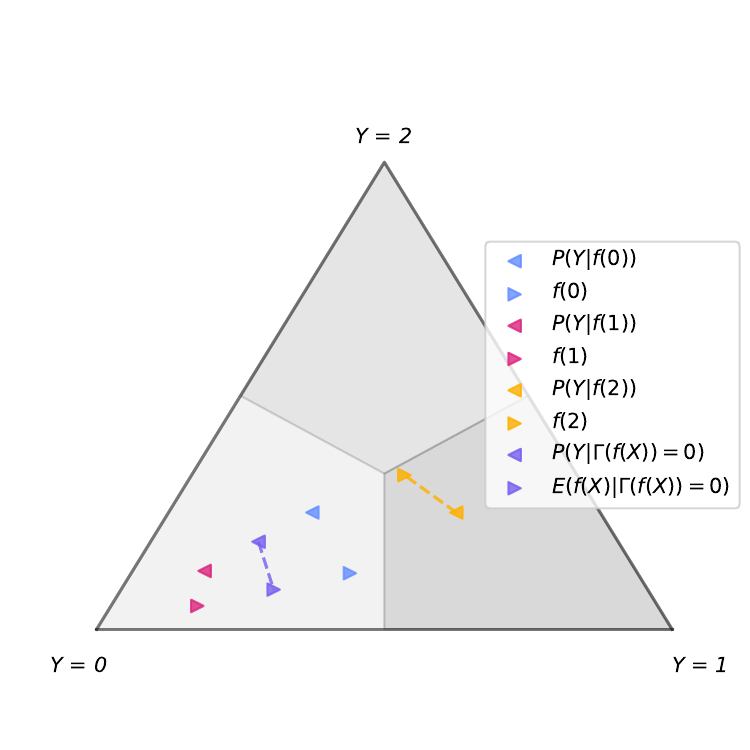}
    \caption{Approximate distribution calibration with respect to $\Gamma$.}
    \label{fig:distribution calibration notperfect}
\end{subfigure}
        
\caption{Illustration of distribution calibration. The outcome set is defined as $\Y = \{ 0,1,2\}$, the input set $\X = \{ 0,1,2\}$. We define $\Gamma(P) = \argmax_{y \in \Y} P(Y = y)$. The level sets of $\Gamma$ are drawn in different shades of gray. The left-directing markers denote the true conditional distribution for different choices of $x \in \X$. The right-directing markers denote the predicted distribution by a predictor $f$. The purple markers are convex combinations of the blue and red markers, where the convex combination is defined through the marginal distribution on $\X$ which is in our case fixed to be uniform. The dashed lines highlight the deviation from the true outcome distribution conditioned on a value of $\Gamma \circ f$ versus the expected forecast conditioned on a value of $\Gamma \circ f$. Only the forecasts change when comparing Figure~\ref{fig:distribution calibration perfect} versus Figure~\ref{fig:distribution calibration notperfect}.}
\label{fig: distribution calibration on simplex}
\end{figure}

Note that, in principle, $\Y$ does not need to be finite. However, it reduces technicalities. Furthermore, if $\Y$ is infinite, most predictions actually directly refer to properties and not the full distribution (e.g., in Gaussian Process Regression, the mean and covariance are estimated, which are properties of the full distribution).

\subsection{Distribution Calibration is Inherited}
Distribution calibration with respect to some property is naturally inherited by \emph{refined properties}. A property $\Phi$ is refined by another property $\Gamma$, if there exists a mapping which applied on $\Gamma$ gives $\Phi$.
\begin{definition}[Property Refinement~{\citep[Definition 12]{frongillo_elicitation_2021}}]
\label{def:property refinement}
    A property $\Phi \colon \P \to \R'$ is called \emph{refined by $\Gamma$}, if $\Gamma \colon \P \to \R$ is a property such that there exists a function $\phi \colon \R \to \R'$ with $\Phi = \phi \circ \Gamma$.\footnote{Property refinement is intricately related to the notion of \emph{indirect property elicitation} ~\citep{frongillo_elicitation_2021,finocchiaro_embedding_2024,finocchiaro_unifying_2021}}%
\end{definition}
\begin{proposition}[Distribution Calibration is Inherited]
\label{prop:Distribution Calibration is Inherited}
    Let $\Gamma \colon \P \to \R$ be a property and $D$ a regular data distribution on $\X \times \Y$, where $\Y$ is finite. Let $f \colon \X \to \P$ be a distributional predictor with $| \im f | < \infty$. If the predictor $f$ is $\alpha(\gamma)$-approximate distribution calibrated with respect $\Gamma$ on $D$, then it is $\alpha'(c)$-approximate distribution calibrated with respect to $\Phi \coloneqq \phi \circ \Gamma$ for all $\phi \colon \R \to \R'$ where $\alpha'(c) \coloneqq \sup_{\gamma \in \im f\colon \phi(\gamma) = c} \alpha(\gamma)$.
\end{proposition}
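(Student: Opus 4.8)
The driving observation is that, since $\Phi = \phi\circ\Gamma$, every level set of $\Phi\circ f$ is a finite disjoint union of level sets of $\Gamma\circ f$, so conditioning on $\Phi\circ f(X)=c$ is a \emph{finite convex combination} of conditioning on $\Gamma\circ f(X)=\gamma$ over those $\gamma$ mapping to $c$. The approximate-calibration bound is then inherited by averaging. Concretely, fix $c\in\im\Phi\circ f$ and $y\in\Y$, and set
\[
S_c \coloneqq \{\gamma\in\im\Gamma\circ f \colon \phi(\gamma)=c,\ \mathbb{P}_D(\Gamma\circ f(X)=\gamma)>0\}.
\]
Because $|\im f|<\infty$ we have $|\im\Gamma\circ f|<\infty$, hence $S_c$ is finite, and $\{x\colon \Phi\circ f(x)=c\}$ agrees up to a $D_X$-null set with $\bigsqcup_{\gamma\in S_c}\{x\colon\Gamma\circ f(x)=\gamma\}$.

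First I would establish the decomposition: for any $D$-integrable function $g$ of $(X,Y)$,
\[
\mathbb{E}_D\big[g \,\big|\, \Phi\circ f(X)=c\big] \;=\; \sum_{\gamma\in S_c} w_\gamma\,\mathbb{E}_D\big[g\,\big|\,\Gamma\circ f(X)=\gamma\big],
\qquad
w_\gamma \coloneqq \frac{\mathbb{P}_D(\Gamma\circ f(X)=\gamma)}{\mathbb{P}_D(\Phi\circ f(X)=c)}\ge 0,
\]
with $\sum_{\gamma\in S_c} w_\gamma = 1$. This is just the tower property / law of total probability applied to the partition above, using finiteness of $S_c$ to write the conditional expectation as an honest finite sum rather than an integral over a mixing measure.

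Next, apply this decomposition to $g=\llbracket Y=y\rrbracket$ and to $g=f_y(X)$, subtract the two identities, and use the triangle inequality together with $w_\gamma\ge 0$, $\sum_\gamma w_\gamma=1$:
\[
\Big|\mathbb{E}_D[\llbracket Y=y\rrbracket\mid \Phi\circ f(X)=c] - \mathbb{E}_D[f_y(X)\mid \Phi\circ f(X)=c]\Big|
\le \sum_{\gamma\in S_c} w_\gamma \Big|\mathbb{E}_D[\llbracket Y=y\rrbracket\mid \Gamma\circ f(X)=\gamma] - \mathbb{E}_D[f_y(X)\mid \Gamma\circ f(X)=\gamma]\Big|.
\]
By the hypothesis that $f$ is $\alpha(\gamma)$-approximate distribution calibrated with respect to $\Gamma$, each summand is at most $\alpha(\gamma)\le \sup_{\gamma\in S_c}\alpha(\gamma)\le \sup_{\gamma\in\im f\colon \phi(\gamma)=c}\alpha(\gamma)=\alpha'(c)$, so the convex combination is bounded by $\alpha'(c)$ as well. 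Since $c\in\im\Phi\circ f$ and $y\in\Y$ were arbitrary, this is exactly $\alpha'(c)$-approximate distribution calibration with respect to $\Phi$. (The exact case is the special case $\alpha\equiv 0$.)

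\textbf{Main obstacle.} The only delicate point is the measure-theoretic bookkeeping in the decomposition step: one must be careful that $\im\Gamma\circ f$ can in principle contain values of $D_X$-probability zero, which is why $S_c$ is restricted to positively-weighted $\gamma$; and one must check that the $\sigma$-algebra generated by $\Gamma\circ f$ refines that generated by $\Phi\circ f$ so that the tower property applies in the claimed direction. Given $|\im f|<\infty$, all of this reduces to elementary conditioning on finitely many atoms, so no real difficulty remains beyond stating it cleanly.
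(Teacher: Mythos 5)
Your proof is correct and follows the same decomposition strategy as the paper's: partition the event $\{\Phi\circ f(X)=c\}$ into the finitely many level sets $\{\Gamma\circ f(X)=\gamma\}$ with $\phi(\gamma)=c$, invoke the tower property, and bound by the supremum over those $\gamma$. You are more careful with the weights, and this actually matters. You use the normalized conditional weights $w_\gamma = P_D(\Gamma\circ f(X)=\gamma)/P_D(\Phi\circ f(X)=c)$, which sum to one, so the tower property gives an exact convex-combination identity and the final step is an equality. The paper's displayed chain instead uses the unconditional probabilities $P_D(f(X)=\gamma)$; under that reading the sum equals $P_D(\Phi\circ f(X)=c)\cdot\bigl(\mathbb{E}_D[\llbracket Y=y\rrbracket\mid\Phi\circ f(X)=c]-\mathbb{E}_D[f_y(X)\mid\Phi\circ f(X)=c]\bigr)$, which is a factor $\le 1$ of the target quantity, so the last printed $\ge$ runs in the wrong direction unless one reads those probabilities as conditional on $\{\Phi\circ f(X)=c\}$. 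Your version is the sound rendering of the argument, and your restriction of the index set $S_c$ to positive-probability atoms is a correct, if minor, clean-up of the measure-theoretic bookkeeping the paper elides.
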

\begin{proof}
    By assumption,
    \begin{align*}
        \left| \mathbb{E}_{D}[\llbracket Y = y \rrbracket|\Gamma \circ f(X) = \gamma] - \mathbb{E}_{D}[f_y(X)|\Gamma \circ f(X) = \gamma] \right|\le \alpha(\gamma), \quad \forall y \in \Y.
    \end{align*}
    Hence, in particular, $\forall y \in \Y.$
    \begin{align*}
        &\alpha'(c) = \sup_{\gamma \in \im f\colon \phi(\gamma) = c} \alpha(\gamma)\\
        &\ge \sum_{\gamma \in \im f\colon \phi(\gamma) = c} P_D(f(X) = \gamma) \left| \mathbb{E}_{D}[\llbracket Y = y \rrbracket|\Gamma \circ f(X) = \gamma] - \mathbb{E}_{D}[f_y(X)|\Gamma \circ f(X) = \gamma] \right|\\
        &\ge  \left| \sum_{\gamma \in \im f\colon \phi(\gamma) = c} P_D(f(X) = \gamma) \left(\mathbb{E}_{D}[\llbracket Y = y \rrbracket|\Gamma \circ f(X) = \gamma] - \mathbb{E}_{D}[f_y(X)|\Gamma \circ f(X) = \gamma] \right) \right|\\
        &\ge  \left| \mathbb{E}_{D}[\llbracket Y = y \rrbracket|\Phi \circ f(X) = c] - \mathbb{E}_{D}[f_y(X)|\Phi \circ f(X) = c] \right|.
    \end{align*}
\end{proof}
Under mild geometric conditions on the predictions, perfect distribution calibration with respect to all elicitable binary properties implies distribution calibration (Appendix~\ref{app: dist calibration wrt to all binary properties}). This statement can be understood as a reverse implication to above Proposition~\ref{prop:Distribution Calibration is Inherited}.
Figure~\ref{fig: property refinement and inheritance of distribution calibration} illustrates the idea of property refinement and inheritance of distribution calibration. 
\begin{figure}[ht]
\centering
    \includegraphics[width=0.5\textwidth]{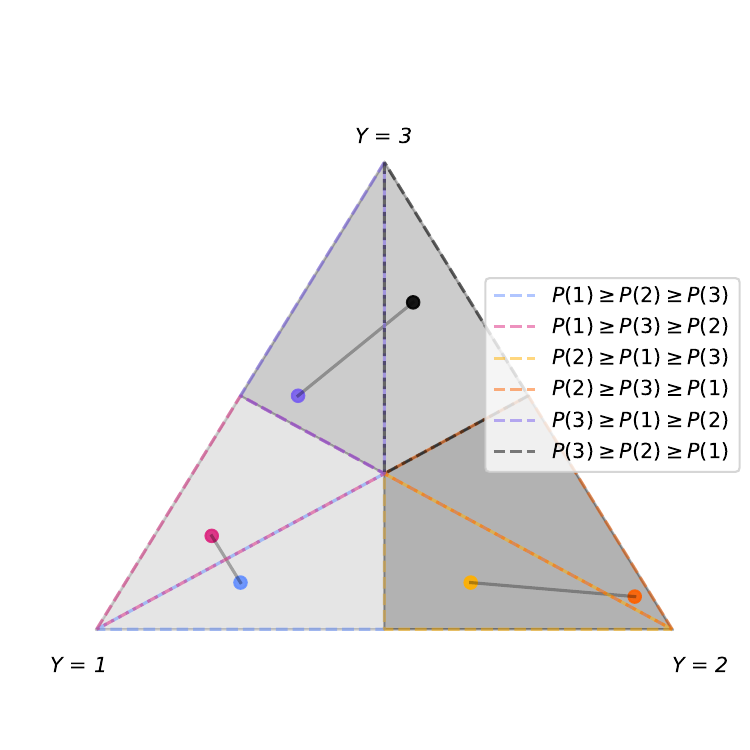}
    \caption{Illustration of property refinement and inheritance of distribution calibration. The outcome set is defined as $\Y = \{ 0,1,2\}$, the input set $|\X| = 6$. We define $\Gamma(P) = (y_1, y_2, y_3)$ such that $P(Y = y_1) \ge P(Y = y_2) \ge P(Y = y_3)$ and $\Phi(P) = \argmax_{y \in \Y} P(Y = y)$. The level sets of $\Gamma$ have colored boundaries listed in the legend. The level sets of $\Phi$ are colored respectively drawn in different shades of gray. The property $\Gamma$ refines $\Phi$.
    We assume that $f$ is a distributional predictor whose predictions are marked as dots. The color of the dots represents $\Gamma\circ f(x)$. The lines indicate the convex combination of predictions happening when conditioning on $\Phi \circ f(X)$ instead of $\Gamma \circ f(X)$. Since the level sets of $\Phi$ are all convex the line is always contained \emph{within} a single level set.}
    \label{fig: property refinement and inheritance of distribution calibration}
\end{figure}

Distribution calibration with respect to $\Gamma$ forms the entry point to the two accounts of calibration mentioned earlier (cf. Figure~\ref{fig: calibration backbone} and igure~\ref{fig: calibration backbone approximate}). Distribution calibration is close to the account of mean-matching briefly mentioned in the introduction. However, this account puts more focus on groups and individuals, i.e., information provided through the input $\X$ (\cf \citep{holtgen2023richness}).

\section{\texorpdfstring{$\Gamma$}{Gamma}-Calibration as Self-Realization of Predictions}
\label{Calibration as Self-realization of predictions}
Central to vanilla calibration is the self-referential aspect in the conditioning, i.e., given the predicted Bernoulli distribution has parameter $\gamma$, the actual distribution of outcomes is Bernoulli-distributed with parameter $\gamma$. This self-reference is linked to a widely discussed principle for deference of belief: the \emph{reflection principle} \citep{van1984belief} (Appendix~\ref{calibration and the reflection principle}). In this section we generalize calibration around this perspective.

As observed by \citep{jung_batch_2022,jung_moment_2021,gupta_online_2022} the expectation operator in Definition~\ref{def: binary vanilla calibration} can be replaced by moments, respectively quantile functions. From these works, \citet{noarov_statistical_2023} distilled a general definition of calibration with respect to properties. Different to \citet{noarov_statistical_2023} we ignore groupings based on input $X$. Hence, we consider $\Gamma$-calibration and not ``multi''-$\Gamma$-calibration.

\begin{definition}[$\Gamma$-Calibration \citep{noarov_statistical_2023}]
\label{def: Gamma-calibration}
    Suppose $(\R, m)$ is a metric space.
    Let $\Gamma \colon \P \rightarrow \R$ be a property and $D$ a regular data distribution. Let $f \colon \X \to \R$ be a $\Gamma$-predictor. The $\Gamma$-predictor $f$ is \emph{$\Gamma$-calibrated} on $D$ if for every $\gamma \in \im f$,
    \begin{align*}
        \Gamma(D_{Y | f(X) = \gamma}) = \gamma.
    \end{align*}
    The $\Gamma$-predictor $f$ is \emph{$\alpha(\gamma)$-approximate $\Gamma$-calibrated} on $D$ if for every $\gamma \in \im f$,
    \begin{align*}
        m(\Gamma(D_{Y | f(X) = \gamma}), \gamma) \le \alpha(\gamma).
    \end{align*}
\end{definition}
Central to the definition is the self-realization aspect. The property of the distribution of outcomes on which the predictor $f$ predicted $\gamma$ is $\gamma$. The reader familiar with \citep{noarov_statistical_2023} might question the use of a general property $\Gamma$ in Definition~\ref{def: Gamma-calibration}. For a discussion see Appendix~\ref{app: A note on sensibility to calibration}.

\citet{gneiting_regression_2023} introduced ``T-calibration'' (Definition 2.7 therein), which is a measure-theoretic definition of $\Gamma$-calibration. The authors already note that certain properties $\Gamma$ are not sensible to calibration following \citep{noarov_statistical_2023}, but only give necessary not sufficient conditions for the sensibility.

In practice it is rarely the case that a $\Gamma$-predictor is perfectly $\Gamma$-calibrated. However, there are algorithms, such as \citep[Algorithm 1]{noarov_statistical_2023} which post-hoc approximately calibrate $\Gamma$-predictors (for identifiable $\Gamma$).

Remember our conventions for $m$ listed in Section~\ref{paragraph:properties}, e.g., if $\R$ is an interval on the real line, then $m$ is set to be the absolute difference.\footnote{Note that metric $m$ does \emph{not} define a distance to calibration in the sense of \citep{blasiok_unifying_2023}.}
Note that in many existing definitions the authors commit to a specified aggregation of $\alpha(\gamma)$ over all $\gamma \in \im f$. For instance, \citep{noarov_statistical_2023} consider the expected squared value, $\alpha \coloneqq \mathbb{E}_{\gamma \sim D_{f(X)}}[\alpha(\gamma)^2]$. For a summary of such aggregations see \citep{garg_oracle_2023}. We illustrate Definition~\ref{def: Gamma-calibration} in Figure~\ref{fig: property calibration on simplex}.
\begin{figure}
\centering
\begin{subfigure}{0.4\textwidth}
    \includegraphics[width=\textwidth]{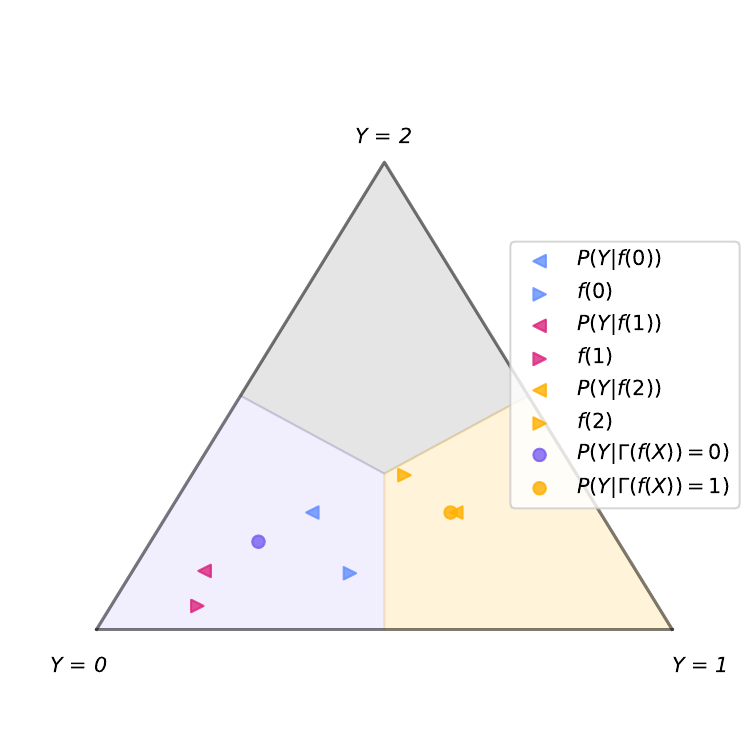}
    \caption{Perfect $\Gamma$-calibration.}
    \label{fig:property calibration perfect}
\end{subfigure}
\hfill
\begin{subfigure}{0.4\textwidth}
    \includegraphics[width=\textwidth]{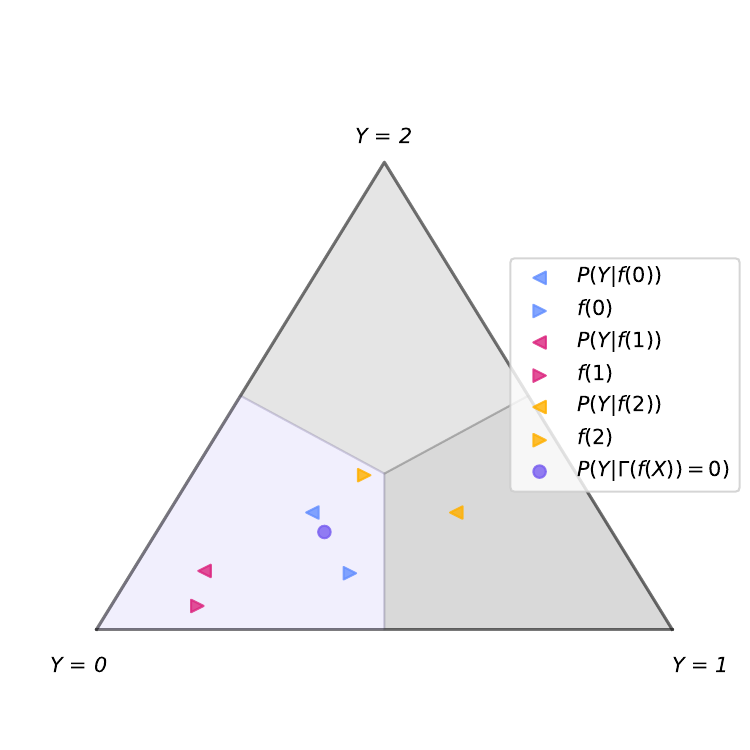}
    \caption{Perfect $\Gamma$-calibration II.}
    \label{fig:property calibration perfect 2}
\end{subfigure}
\hfill
\begin{subfigure}{0.4\textwidth}
    \includegraphics[width=\textwidth]{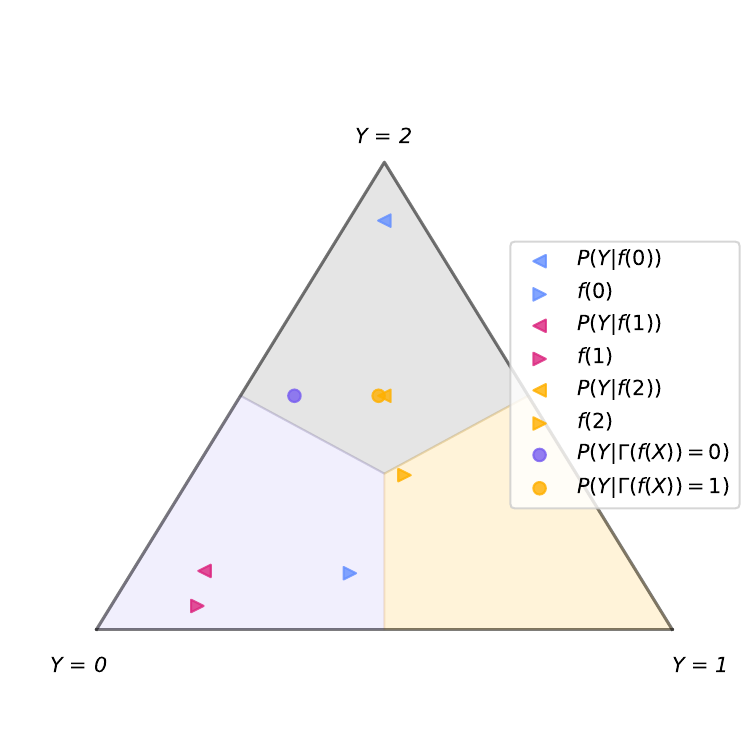}
    \caption{Approximate $\Gamma$-calibration.}
    \label{fig:property calibration notperfect}
\end{subfigure}
        \caption{Illustration of $\Gamma$-calibration. The outcome set is defined as $\Y = \{ 0,1,2\}$, the input set $\X = \{ 0,1,2\}$. We define $\Gamma(P) = \argmax_{y \in \Y} P(Y = y)$. The level sets of $\Gamma$ are colored respectively drawn in different shades of gray. The left-directing markers denote the true conditional distribution for different choices of $x \in \X$. We assume that $f$ is a distributional predictor (right-directing markers) which is then fed into $\Gamma$. The dots denote the true outcome distribution conditioned on a value of $\Gamma \circ f$. If the dot is in the level set of the same color, then the prediction is perfectly $\Gamma$-calibrated. Note that Figure~\ref{fig:property calibration perfect} uses the same predictions and outcome distributions as Figure~\ref{fig:distribution calibration notperfect} showing that predictions could be perfect $\Gamma$-calibrated while only being approximately distribution calibrated with respect to $\Gamma$. Figure~\ref{fig:property calibration perfect 2} shows that $\Gamma$-calibration does no necessarily require the true conditional distributions to all live in the correct level set. Only the predictor $f$ is changed from Figure~\ref{fig:property calibration perfect} to Figure~\ref{fig:property calibration perfect 2}. Figure~\ref{fig:property calibration notperfect} then changes the true conditional distribution compared to Figure~\ref{fig:property calibration perfect} but fixes $f$, which lead to the violation of the perfect $\Gamma$-calibration constraint.}
\label{fig: property calibration on simplex}
\end{figure}

\subsection{Distribution Calibration Implies \texorpdfstring{$\Gamma$}{Gamma}-Calibration}
If $\Y$ is finite, then predictors realistically can provide full distribution estimates. In this case, distribution calibration with respect to a property $\Gamma$ almost immediately implies $\Gamma$-calibration. The argument seems more involved as it requires careful treatment of convex combination of distribution on which distributional predictions with equal property values have been made. For the sake of readability, we restrict Proposition~\ref{prop:Distribution Calibration with Respect to Gamma implies Gamma-Calibration} to finite-valued predictors, $|\im f | < \infty$, as it prevents us from using technical machinery as in \citep[Theorem 3.6]{noarov_statistical_2023}. Furthermore, this assumption is arguably a mild one which is fulfilled in every practical setting on empirical distributions.
\begin{proposition}[Distribution Calibration with Respect to $\Gamma$ implies $\Gamma$-Calibration]
    \label{prop:Distribution Calibration with Respect to Gamma implies Gamma-Calibration}
    Let $\Gamma \colon \P \rightarrow \R$ be a property with convex level sets and $D$ a regular data distribution on $\X \times \Y$, where $\Y$ is finite. Let $f \colon \X \to \P$ be a predictor which is distribution calibrated with respect to $\Gamma$. We assume that $|\im f| < \infty$. Then, $\Gamma \circ f$ is $\Gamma$-calibrated.
\end{proposition}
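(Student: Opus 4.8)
The plan is to unwind both definitions at a fixed prediction value and show they coincide. Fix $c \in \im \Gamma \circ f$ and note that, since $|\im f| < \infty$, the fiber $\{\gamma \in \im f : \Gamma(\gamma) = c\}$ is finite; call its elements $\gamma_1, \dots, \gamma_k$. Here I am writing $\Gamma \circ f$ where $f$ is the distributional predictor and $\Gamma \circ f$ is a $\Gamma$-predictor in the sense of Definition~\ref{def: Gamma-calibration}, so $c$ plays the role of the predicted property value. The key object is the conditional outcome distribution $D_{Y \mid \Gamma \circ f(X) = c}$, which — because the predictor takes finitely many values — decomposes as a convex combination
\begin{align*}
    D_{Y \mid \Gamma \circ f(X) = c} = \sum_{i=1}^{k} w_i \, D_{Y \mid f(X) = \gamma_i}, \qquad w_i \coloneqq P_D\bigl(f(X) = \gamma_i \,\big|\, \Gamma\circ f(X) = c\bigr),
\end{align*}
with $\sum_i w_i = 1$ and $w_i \ge 0$ (this is just the law of total probability restricted to the event $\{\Gamma \circ f(X) = c\}$; finiteness of $\im f$ is what makes it a finite sum rather than an integral, avoiding the measure-theoretic machinery referenced in the remark before the proposition).

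Next I would invoke distribution calibration with respect to $\Gamma$. For each $\gamma_i$, the defining identity gives, for all $y \in \Y$,
\begin{align*}
    \mathbb{E}_{D}\bigl[\llbracket Y = y \rrbracket \,\big|\, \Gamma \circ f(X) = c, \, f(X) = \gamma_i\bigr] = \mathbb{E}_{D}\bigl[f_y(X) \,\big|\, \Gamma \circ f(X) = c,\, f(X) = \gamma_i\bigr] = (\gamma_i)_y,
\end{align*}
where the last equality holds because conditioning on $f(X) = \gamma_i$ pins the prediction to $\gamma_i$ exactly. (Strictly, distribution calibration conditions only on $\Gamma \circ f(X) = \gamma_i'$ for $\gamma_i'$ in the image of $\Gamma \circ f$; here I should condition directly on $f(X) = \gamma_i$, which is the finer event, and observe that on this event $f_y(X) \equiv (\gamma_i)_y$ is constant, so the distribution-calibration identity at the value $\Gamma(\gamma_i) = c$ specializes correctly — this is a point to state carefully.) In distribution terms this says $D_{Y \mid f(X) = \gamma_i} = \gamma_i$ as elements of $\Delta(\Y)$. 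Substituting into the convex decomposition above yields $D_{Y \mid \Gamma \circ f(X) = c} = \sum_i w_i \gamma_i$, a convex combination of the points $\gamma_1, \dots, \gamma_k$, each of which lies in the level set $\Gamma^{-1}(c)$.

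Finally I would apply the convex-level-set hypothesis: since each $\gamma_i \in \Gamma^{-1}(c)$ and $\Gamma^{-1}(c)$ is convex, the combination $\sum_i w_i \gamma_i$ also lies in $\Gamma^{-1}(c)$, i.e.
\begin{align*}
    \Gamma\bigl(D_{Y \mid \Gamma \circ f(X) = c}\bigr) = \Gamma\Bigl(\textstyle\sum_{i} w_i \gamma_i\Bigr) = c,
\end{align*}
which is exactly the $\Gamma$-calibration condition for the predictor $\Gamma \circ f$ at the value $c$. Since $c \in \im \Gamma \circ f$ was arbitrary, this completes the proof. The main obstacle — and the only place real care is needed — is the bookkeeping in the first two steps: correctly justifying that conditioning on $\{\Gamma \circ f(X) = c\}$ splits into a finite convex combination over the fibers $\{f(X) = \gamma_i\}$ (which is clean thanks to $|\im f| < \infty$, but does require that $D$ is regular so the conditional distributions are well-defined), and correctly reading off $D_{Y \mid f(X) = \gamma_i} = \gamma_i$ from distribution calibration given that the definition as stated conditions on $\Gamma \circ f$ rather than on $f$ directly. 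Everything after the convex decomposition is immediate from convexity of the level sets.
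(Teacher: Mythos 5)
Your proposal has the right shape --- write $D_{Y\mid \Gamma\circ f(X)=c}$ as a convex combination of the finitely many predicted distributions lying in the fiber $\Gamma^{-1}(c)\cap\im f$, then invoke convexity of the level set --- and the final conclusion is correct. But there is a genuine gap in the middle step, at exactly the point you flag as ``a point to state carefully.'' You assert that distribution calibration yields
\[
\mathbb{E}_{D}\bigl[\llbracket Y = y \rrbracket \,\big|\, f(X) = \gamma_i\bigr] = \mathbb{E}_{D}\bigl[f_y(X) \,\big|\, f(X) = \gamma_i\bigr] = (\gamma_i)_y,
\]
i.e.\ $D_{Y\mid f(X)=\gamma_i}=\gamma_i$ in $\Delta(\Y)$. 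This does \emph{not} follow from Definition~\ref{def: Distribution Calibration with Respect to Gamma}. That definition equates conditional expectations given the coarser $\sigma$-algebra generated by $\Gamma\circ f$, and equality there is an \emph{average} over the fibers $\{f(X)=\gamma_i\}$ with $\Gamma(\gamma_i)=c$; it does not specialize to each fiber separately. For a concrete failure: with $\Y=\{0,1,2\}$, $\Gamma(p)=p_0$, two inputs $x_1,x_2$ with $f(x_1)=(0.5,0.3,0.2)$, $f(x_2)=(0.5,0.2,0.3)$, and $D_{Y\mid X=x_1}=D_{Y\mid X=x_2}=(0.5,0.25,0.25)$, distribution calibration with respect to $\Gamma$ is satisfied (both sides average to $(0.5,0.25,0.25)$) yet $D_{Y\mid f(X)=f(x_1)}\neq f(x_1)$.

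The fix is simple, and it is exactly what the paper does: do not decompose $D_{Y\mid\Gamma\circ f(X)=c}$ by the law of total probability, and do not claim anything about $D_{Y\mid f(X)=\gamma_i}$. Instead apply the distribution-calibration identity \emph{once} at the conditioning event $\{\Gamma\circ f(X)=c\}$ to get
\[
D_{Y\mid\Gamma\circ f(X)=c}(Y=y)=\mathbb{E}_{D}[f_y(X)\mid\Gamma\circ f(X)=c],
\]
and then observe that, because $f_y(X)$ is constant equal to $(\gamma_i)_y$ on each fiber $\{f(X)=\gamma_i\}$ and only fibers with $\Gamma(\gamma_i)=c$ contribute, the right-hand side is $\sum_i w_i\,(\gamma_i)_y$ with the same weights $w_i$ you defined. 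This hands you $D_{Y\mid\Gamma\circ f(X)=c}=\sum_i w_i\gamma_i$ directly, at which point the convex-level-set argument closes the proof as you wrote it. In short: the decomposition and the fiberwise claim are both unnecessary, and the fiberwise claim as stated is false in general; the paper's route sidesteps it by reading the convex combination off the right-hand side of the defining identity rather than off $D_{Y\mid f(X)=\gamma_i}$.
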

\begin{proof}
    We have to show that for all $\gamma \in \im \Gamma \circ f$,
    \begin{align}
    \label{eq:perfect gamma calibration in proof}
        \Gamma(D_{Y | \Gamma \circ f(X) = \gamma}) = \gamma.
    \end{align}
    Note that for all $y \in \Y$,
    \begin{align*}
        D_{Y | \Gamma \circ f(X) = \gamma}(Y = y) &= \mathbb{E}_{D}[\llbracket Y = y \rrbracket|\Gamma \circ f(X) = \gamma]\\
        &= \mathbb{E}_{D}[f_y(X)|\Gamma \circ f(X) = \gamma]\\
        &= \sum_{d \in \im f} D_{X}(f(X) = d| \Gamma\circ f(X) = \gamma) d_y,
    \end{align*}
    where $d_y \in [0,1]$ denotes the $y$-component of the distribution $d \in \Delta(\Y)$.
    It follows,
    \begin{align*}
        D_{Y | \Gamma \circ f(X) = \gamma} = \sum_{d \in \im f} D_{X}(f(X) = d| \Gamma\circ f(X) = \gamma) d.
    \end{align*}
    Since $\Gamma$ has convex level sets, the convex combination of $d \in \im f$ gives the desired Equation~\eqref{eq:perfect gamma calibration in proof}.
\end{proof}
The above Proposition~\ref{prop:Distribution Calibration with Respect to Gamma implies Gamma-Calibration} holds for arbitrary properties with convex level sets. Unfortunately, in the approximate case the property $\Gamma$ needs to be smooth. Discrete properties $\Gamma$ don't fulfill this condition, because a slight mismatch of the average predictions can lead to a catastrophic mismatch in the according $\Gamma$-values.
\begin{proposition}[Approximate Distribution Calibration w.r.t. $\Gamma$ implies approximate $\Gamma$-Calibration for smooth $\Gamma$]
\label{prop:approximate distribution calibration implies gamma calibration}
Let $\Gamma \colon \P \rightarrow \R$ with $\R \subseteq \reals$ be a $K$-Lipschitz property with convex level sets and $D$ a regular data distribution on $\X \times \Y$, where $\Y$ is finite. Let $f \colon \X \to \P$  be a distributional predictor that is $\alpha(\gamma)$-approximately distributionally calibrated with respect to $\Gamma$. Suppose that $|\im f| < \infty$. Then, $\Gamma \circ f$ is $|\Y|K\alpha(\gamma)$-approximately $\Gamma$-calibrated.
\end{proposition}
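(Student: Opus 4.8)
The plan is to follow exactly the structure of the proof of Proposition~\ref{prop:Distribution Calibration with Respect to Gamma implies Gamma-Calibration}, but carry the approximation error through the convex-combination step using the Lipschitz property of $\Gamma$. First I would fix $\gamma \in \im \Gamma \circ f$ and write, as in the perfect case, the conditional outcome distribution as a convex combination of the distributional predictions sharing property value $\gamma$: set $w_d \coloneqq D_X(f(X)=d \mid \Gamma \circ f(X) = \gamma)$ for $d \in \im f$ with $\Gamma(d) = \gamma$, so that $\sum_d w_d = 1$, and define $\bar{P} \coloneqq \sum_d w_d \, d$. Since $\Gamma$ has convex level sets, $\Gamma(\bar P) = \gamma$. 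The quantity we must bound is $m\big(\Gamma(D_{Y \mid \Gamma \circ f(X) = \gamma}), \gamma\big) = |\Gamma(D_{Y \mid \Gamma \circ f(X) = \gamma}) - \Gamma(\bar P)|$, and by $K$-Lipschitzness of $\Gamma$ this is at most $K \cdot d_{\mathrm{TV}}(D_{Y \mid \Gamma \circ f(X) = \gamma}, \bar P)$.

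Next I would bound that total variation distance. Because $\Y$ is finite, $d_{\mathrm{TV}}(D_{Y \mid \Gamma \circ f(X) = \gamma}, \bar P) = \tfrac12 \sum_{y \in \Y} \big| D_{Y \mid \Gamma \circ f(X) = \gamma}(Y=y) - \bar P_y \big|$. Now the approximate distribution calibration hypothesis says precisely that for each $y$,
\begin{align*}
\big| D_{Y \mid \Gamma \circ f(X) = \gamma}(Y=y) - \bar P_y \big| = \big| \mathbb{E}_D[\llbracket Y = y\rrbracket \mid \Gamma\circ f(X) = \gamma] - \mathbb{E}_D[f_y(X)\mid \Gamma\circ f(X) = \gamma]\big| \le \alpha(\gamma),
\end{align*}
using that $\mathbb{E}_D[f_y(X)\mid \Gamma\circ f(X) = \gamma] = \sum_d w_d\, d_y = \bar P_y$. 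Summing over the $|\Y|$ outcomes gives $d_{\mathrm{TV}} \le \tfrac{1}{2}|\Y|\alpha(\gamma)$, hence $m(\Gamma(D_{Y \mid \Gamma\circ f(X)=\gamma}), \gamma) \le \tfrac12 |\Y| K \alpha(\gamma) \le |\Y| K \alpha(\gamma)$, which is the claimed bound (the factor $\tfrac12$ is simply absorbed).

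The only subtle point — the "main obstacle" — is the identification $\mathbb{E}_D[f_y(X)\mid \Gamma\circ f(X)=\gamma] = \bar P_y$ together with the claim $\Gamma(\bar P)=\gamma$: this is where convexity of the level sets is used and where one must be careful that $|\im f| < \infty$ so the convex combination is genuinely a finite sum over $\{d \in \im f : \Gamma(d) = \gamma\}$ with well-defined conditional probabilities as weights. This is exactly the computation already carried out in the proof of Proposition~\ref{prop:Distribution Calibration with Respect to Gamma implies Gamma-Calibration}, so I would simply invoke it. Everything else is the one-line Lipschitz estimate plus the finite-$\Y$ expansion of total variation; no further machinery (and in particular none of the measure-theoretic apparatus of \citep[Theorem 3.6]{noarov_statistical_2023}) is needed.
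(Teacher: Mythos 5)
Your proposal is correct and follows essentially the same route as the paper's proof: write $\bar P = \mathbb{E}_D[f(X)\mid \Gamma\circ f(X)=\gamma]$ as a finite convex combination over $\im f$, use convexity of level sets to get $\Gamma(\bar P)=\gamma$, bound $d_{\mathrm{TV}}(D_{Y\mid\Gamma\circ f(X)=\gamma},\bar P)$ by the finite sum of per-coordinate gaps from the approximate distribution-calibration hypothesis, and finish with $K$-Lipschitzness. Your version is marginally tighter (you keep the factor $\tfrac12$ from the $L^1$ form of total variation, then absorb it into the stated constant), but the structure and all key steps coincide with the paper's argument.
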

\begin{proof}
Observe that, since $\Gamma$ is real-valued, the metric $m$ is the absolute difference.
    We have to show that for all $\gamma \in \im \Gamma \circ f$,
    \begin{align}
    \label{eq:approximate gamma calibration in proof}
        | \Gamma(D_{Y | \Gamma \circ f(X) = \gamma}) - \gamma| \le |\Y|K\alpha(\gamma).
    \end{align}

    To this end, first observe that
    \begin{align*}
        \mathbb{E}_{D}[f(X)|\Gamma \circ f(X) = \gamma] = \sum_{d \in \im f} d \cdot D_{X}(f(X) = d| \Gamma\circ f(X) = \gamma) \quad \in \Delta(\Y),
    \end{align*}
    is a convex combination of predicted values $d$.
    Because $\Gamma(d) = \gamma$ for $d \in \im f$ such that $\Gamma\circ f(D) = \gamma$ and $\Gamma$ has convex level sets, $\Gamma(\mathbb{E}_{D}[f(X)|\Gamma \circ f(X) = \gamma]) = \gamma$.

    Second, the total variation distance between $D_{Y | \Gamma \circ f(X) = \gamma}$ and $\mathbb{E}_{D}[f(X)|\Gamma \circ f(X) = \gamma]$ is bounded above for all $\gamma \in \im f$,
    \begin{align*}
        \sup_{A \subseteq \Y} \left| \sum_{y \in A} D_{Y | \Gamma \circ f(X) = \gamma}(Y = y) -  \mathbb{E}_{D}[f_y(X)|\Gamma \circ f(X) = \gamma] \right| \le | \Y | \alpha(\gamma)
    \end{align*}
    where $f_y(X)$ denotes the $y$-component of the prediction $f(X) \in \Delta(\Y)$. This follows from the fact that $f$ is $\alpha(\gamma)$-approximate distribution calibrated with respect to $\Gamma$. Hence, by applying $\Gamma$ and exploiting the Lipschitzness of $\Gamma$, we obtain the desired Equation~\eqref{eq:approximate gamma calibration in proof}.
\end{proof}

\subsection{\texorpdfstring{$\Gamma$}{Gamma}-Calibration is Equivalent to Low Swap Regret}
 It is no surprise that property calibration can be equivalently reformulated for loss (respectively identification) functions, if the property is elicitable (respectively identifiable).\footnote{Surprisingly, already in \citep{a_verification_2016} the terms ``calibration'' and ``identification'' got linked to each other. However, their Definition 4.2 of calibration is largely disconnected from the use of the term in current machine learning literature.}
\begin{proposition}[Perfect $\Gamma$-Calibration via Loss Function or Identification Function]
\label{prop:Perfect Calibration via Loss Function or Identification Function}
    Let $\Gamma \colon \P \rightarrow \R$ be a property and $D$ a regular data distribution. Let $f$ be $\Gamma$-calibrated on a regular data distribution $D$.
    \begin{enumerate}[(a)]
        \item If $\Gamma$ is elicitable with $\ell$, then $f$ is $\Gamma$-calibrated on $D$ if and only if, for all $\gamma \in \im f$,
        \begin{align*}
             \mathbb{E}_{D}\left[ \ell(Y, \gamma) | f(X) = \gamma \right] -  \min_{\hat{\gamma} \in \im \Gamma} \mathbb{E}_{D}\left[ \ell(Y, \hat{\gamma}) | f(X) = \gamma \right] = 0.
        \end{align*}
        \item If $\Gamma$ is identifiable with $V$, then $f$ is $\Gamma$-calibrated on $D$ if and only if, for all $\gamma \in \im f$,
        \begin{align*}
            \mathbb{E}_{D}\left[ V(Y, \gamma)|f(X) = \gamma\right] = 0.
        \end{align*}
    \end{enumerate}
\end{proposition}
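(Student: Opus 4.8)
The plan is to reduce each of the two equivalences to a pointwise statement about the conditional outcome law $P_\gamma \coloneqq D_{Y \mid f(X) = \gamma}$, one $\gamma \in \im f$ at a time. By Definition~\ref{def: Gamma-calibration}, $f$ is $\Gamma$-calibrated on $D$ exactly when $\Gamma(P_\gamma) = \gamma$ for every $\gamma \in \im f$, so it suffices to characterize the single equation ``$\Gamma(P_\gamma) = \gamma$'' in terms of $\ell$ (for (a)) and in terms of $V$ (for (b)), and then take the conjunction over $\gamma \in \im f$. Throughout I would use the overloaded notation from Section~\ref{Playground}, namely $\mathbb{E}_{D}[\ell(Y, \hat\gamma) \mid f(X) = \gamma] = \ell(P_\gamma, \hat\gamma) = \mathbb{E}_{Y \sim P_\gamma}[\ell(Y, \hat\gamma)]$ and likewise $\mathbb{E}_{D}[V(Y, \gamma) \mid f(X) = \gamma] = V(P_\gamma, \gamma)$; this is legitimate because $D$ is regular, so $P_\gamma \in \P$ and $\Gamma$, $\ell(\cdot, \hat\gamma)$, $V(\cdot, \hat\gamma)$ are all defined on it. I would also invoke the standing convention $\R = \im \Gamma$, so that the minimization range $\im \Gamma$ appearing in part (a) is the same set $\R$ over which elicitability is defined.

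Part (b) I expect to be immediate. The defining property of an identification function is the two-sided implication $\Gamma(P) = \gamma \Leftrightarrow V(P, \gamma) = 0$; instantiating $P \coloneqq P_\gamma$ and the second argument $\coloneqq \gamma$ gives $\Gamma(P_\gamma) = \gamma \Leftrightarrow V(P_\gamma, \gamma) = 0 \Leftrightarrow \mathbb{E}_{D}[V(Y, \gamma) \mid f(X) = \gamma] = 0$. Conjoining over all $\gamma \in \im f$ yields the claim. There is no real obstacle here, since identifiability is already stated as an equivalence.

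Part (a) needs a little more care. First I would note that the displayed regret-zero condition says precisely $\ell(P_\gamma, \gamma) = \min_{\hat\gamma \in \im \Gamma} \ell(P_\gamma, \hat\gamma)$, i.e.\ $\gamma \in \argmin_{\hat\gamma \in \R} \ell(P_\gamma, \hat\gamma)$. The forward direction is then routine: if $f$ is $\Gamma$-calibrated then $\Gamma(P_\gamma) = \gamma$, and since $\ell$ is $\P$-consistent with $\Gamma$ we have $\Gamma(P_\gamma) \in \argmin_{\hat\gamma \in \R} \ell(P_\gamma, \hat\gamma)$, so $\gamma$ lies in that $\argmin$ and the regret is zero. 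The converse is the step I expect to be the main obstacle: from $\gamma \in \argmin_{\hat\gamma \in \R} \ell(P_\gamma, \hat\gamma)$ one wants to deduce $\Gamma(P_\gamma) = \gamma$, and this is not forced by bare consistency when $\ell(P_\gamma, \cdot)$ has more than one minimizer. To close it I would use \emph{strict} $\P$-consistency of $\ell$ with $\Gamma$ --- so that $\Gamma(P_\gamma)$ is the unique minimizer of $\ell(P_\gamma, \cdot)$ and hence equals $\gamma$ --- either by adding this to the hypotheses of (a) or by adopting the usual convention that a loss witnessing elicitability of $\Gamma$ is strictly consistent. With that one point settled, taking the conjunction over $\gamma \in \im f$ completes the argument; everything else is just unwinding the definitions of elicitability, identifiability, and Definition~\ref{def: Gamma-calibration}.
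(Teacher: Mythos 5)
Your proposal matches the paper's approach — the paper's own proof is a one-liner, ``By definition of elicitability and identifiability,'' and your argument is exactly the careful unwinding of that one line: instantiate the definitions at $P_\gamma \coloneqq D_{Y \mid f(X) = \gamma}$ and conjoin over $\gamma \in \im f$. Part (b) is genuinely immediate since identifiability is stated as a biconditional. The one place where you go beyond the paper is the observation in part (a): the paper's definition of elicitability only asserts $\Gamma(P) \in \argmin_{\hat\gamma \in \R} \ell(P, \hat\gamma)$ (membership, not uniqueness), so the reverse implication — from $\gamma$ achieving zero regret to $\Gamma(P_\gamma) = \gamma$ — does not actually follow without strict $\P$-consistency. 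This is a real gap in the paper's proof as written; your fix (either add strict consistency to the hypotheses of (a), or adopt the convention that elicitability means strict consistency) is the standard and correct repair, and the paper should arguably state it. So: same route as the paper, but with a necessary caveat the paper omits.
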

\begin{proof}
    By definition of elicitability and identifiability.
\end{proof}
Statement (a) is essentially a ``no swap regret'' statement. It is different from precise loss estimation as in decision calibration (Section~\ref{sec:calibration-as-precise-loss-estimation}). We now argue that this relationship extends to approximate calibration. To this end, we formally introduce a distributional definition of swap regret similar in spirit to \citep[p. 91]{cesa2006prediction}.
\begin{definition}[Swap Regret]
\label{def:swap regret}
    Let $\Gamma \colon \P \to \R$ be an elicitable property with $\P$-consistent scoring function $\ell \colon \Y \times \R \rightarrow \reals$ and $D$ a regular data distribution. Let $f \colon \X \to \R$ be a $\Gamma$-predictor. The $\Gamma$-predictor $f$ has $\beta(\gamma)$ \emph{swap regret} on $D$ if for every $\gamma \in \im f$,
    \begin{align*}
        \mathbb{E}_{D}\left[ \ell(Y, \gamma)| f(X) = \gamma \right] - \min_{\hat{\gamma} \in \im \Gamma} \mathbb{E}_{D}\left[\ell(Y, \hat{\gamma}) | f(X) = \gamma \right] \le \beta(\gamma).
    \end{align*}
\end{definition}
One of the crucial insights in \citep{degroot_comparison_1983} was that mean-calibration relates to swap regret for the squared loss function.\footnote{\citep{globus2023multicalibration} and \citep{gopalan2024swap} extend this relationship to multi-calibration and swap agnostic learners.} In this section, we more generally show that if the property, which ought to be calibrated, is identifiable with a certain regular identification function, approximate calibration is equivalent to low swap regret. In particular, this extends the equivalence spelled out in Proposition~\ref{prop:Perfect Calibration via Loss Function or Identification Function} to the approximate case in Proposition~\ref{prop:Approximate Calibration and Low Swap Regret}, which requires certain regularity conditions on the identification function given in Definition~\ref{def:propertis of identification function}.
\begin{definition}[Properties of Identification Function]
\label{def:propertis of identification function}
    Let $\R \subseteq \reals$ be an interval and $\Gamma \colon \P \to \R$ an identifiable property with identification function $V\colon \Y \times \R \to \reals$.
    Let $\gamma, \gamma' \in \im \Gamma$. The identification function $V$ is called 
    \begin{description}
        \item[oriented]\label{eq:oriented identification function} if and only if for all $P \in P$, $ V(P, \gamma) > 0 \Leftrightarrow \gamma > \Gamma(P)$.
        \item[locally non-constant]\label{eq:locally nonconstant identification function} if and only if there exists $N > 0$ such that for all $P \in \P$, $N |\gamma - \Gamma(P)| \le |V(P, \gamma)|$.
        \item[locally Lipschitz]\label{eq:Lipschitz identification function} if and only if there exists $M > 0$ such that for all $P \in \P$, $|V(P, \gamma)| \le M |\gamma - \Gamma(P)|$.
    \end{description}
\end{definition}
Let us shortly discuss the regularity conditions. Orientedness is arguably a weak assumption. \citet{finocchiaro_convex_2018} have shown that in finite dimensions, \ie, $|\Y| < \infty$, there exists a reweighting of any identification function, such that the reweighted identification function is oriented and identifies the same property.\footnote{\citet{finocchiaro_convex_2018} show that the reweighted identification function is monotone increasing, which implies orientedness.} Furthermore, we obtain orientedness for free in the main theorem of \citep{steinwart2014elicitation}. The non-constantness and Lipschitzness assumptions have been considered, in its non-local variant, in the context of composite properties \citep[Assumption 5.2 and 5.3]{noarov_statistical_2023}. Note that $N \le M$ by definition, which intuitively captures the idea that an identification function cannot be more non-constant than Lipschitz-smooth. In Appendix~\ref{app: Examples of Properties with Regular Identification Functions} we provide several examples of properties beyond the mean which have identification functions which fulfill all of the above properties.
\begin{proposition}[Approximate $\Gamma$-Calibration Equivalent to Low Swap Regret]
\label{prop:Approximate Calibration and Low Swap Regret}
    Let $\Gamma \colon \P \rightarrow \R$ be an identifiable property with oriented identification function $V\colon \Y \times \R \to \reals$. Then, $\Gamma$ is elicitable with $\P$-consistent scoring function $\ell \colon \Y \times \R \rightarrow \reals$,
    \begin{align}
    \label{eq:loss function induced by identification function}
        \ell(y, \gamma) \coloneqq \int_{\gamma_0}^\gamma V(y,r) dr + \kappa(y),
    \end{align}
    for some $\gamma_0 \in \im \Gamma$ and $\kappa \colon \Y \rightarrow \reals$ having a finite expected value with respect to all $P \in \P$. Let $f$ be a $\Gamma$-predictor and $D$ a regular data distribution.
    \begin{enumerate}
        \item Let $V$ be locally Lipschitz on $\P$ with parameter $M$. If $f$ is $\alpha(\gamma)$-approximately $\Gamma$-calibrated, then $f$ has $\beta(\gamma)\coloneqq \frac{M}{2} \alpha(\gamma)^2$ swap regret.
        \item  Let $V$ be locally non-constant on $\P$ with parameter $N$. If $f$ has $\beta(\gamma)$ swap regret, then $f$ is $\alpha(\gamma)\coloneqq \sqrt{\frac{2}{N}\beta(\gamma)}$-approximate calibrated.
    \end{enumerate}
\end{proposition}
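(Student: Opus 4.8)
The plan is to first verify that the loss $\ell$ defined in~\eqref{eq:loss function induced by identification function} is genuinely $\P$-consistent with $\Gamma$, and then to reduce both directions of the equivalence to a single elementary estimate: the swap regret at a level $\gamma$ equals the integral of $|V|$ over the interval whose endpoints are $\gamma$ and $\Gamma(D_{Y|f(X)=\gamma})$.

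For the elicitability claim, fix $P\in\P$. By Fubini's theorem (the integrand $V$ being jointly measurable, with finiteness supplied by the local Lipschitz/non-constant bounds on $V$ together with $\kappa$ having finite $P$-expectation),
\[
\mathbb{E}_{Y\sim P}[\ell(Y,\gamma)] = \int_{\gamma_0}^{\gamma} V(P,r)\,dr + \mathbb{E}_{Y\sim P}[\kappa(Y)].
\]
The map $\gamma\mapsto\int_{\gamma_0}^{\gamma} V(P,r)\,dr$ is absolutely continuous with a.e.\ derivative $V(P,\gamma)$, which by orientedness is strictly negative for $\gamma<\Gamma(P)$, strictly positive for $\gamma>\Gamma(P)$, and zero at $\gamma=\Gamma(P)$ (the last point by identifiability). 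Hence $\gamma\mapsto\mathbb{E}_{Y\sim P}[\ell(Y,\gamma)]$ is strictly decreasing then strictly increasing on the interval $\R=\im\Gamma$, so its unique minimizer is $\Gamma(P)$; this is $\P$-consistency.

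Now fix $\gamma\in\im f$, write $P_\gamma\coloneqq D_{Y|f(X)=\gamma}$ and $g\coloneqq\Gamma(P_\gamma)$, and use the overloaded notation $\ell(P,\cdot)=\mathbb{E}_{Y\sim P}[\ell(Y,\cdot)]$. Since $\ell$ elicits $\Gamma$ and $g\in\R=\im\Gamma$, the minimum in Definition~\ref{def:swap regret} is attained at $\hat\gamma=g$, so the swap regret at $\gamma$ equals $\ell(P_\gamma,\gamma)-\ell(P_\gamma,g)=\int_{g}^{\gamma} V(P_\gamma,r)\,dr$. By orientedness, $V(P_\gamma,r)$ has the sign of $r-g$ throughout the open interval between $g$ and $\gamma$, so this quantity is nonnegative and equals $\int_{I}|V(P_\gamma,r)|\,dr$, where $I$ is the closed interval with endpoints $g$ and $\gamma$. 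From here: for part 1, $\alpha(\gamma)$-approximate $\Gamma$-calibration gives $|\gamma-g|\le\alpha(\gamma)$ (as $m$ is the absolute difference on $\R\subseteq\reals$) and local Lipschitzness gives $|V(P_\gamma,r)|\le M|r-g|$ on $I$, so the swap regret is at most $M\int_I|r-g|\,dr=\tfrac{M}{2}|\gamma-g|^2\le\tfrac{M}{2}\alpha(\gamma)^2$. For part 2, local non-constantness gives $|V(P_\gamma,r)|\ge N|r-g|$ on $I$, hence $\beta(\gamma)\ge\tfrac{N}{2}|\gamma-g|^2$; rearranging yields $|\gamma-g|\le\sqrt{2\beta(\gamma)/N}=\alpha(\gamma)$, i.e.\ $\alpha(\gamma)$-approximate $\Gamma$-calibration.

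The main obstacle is bookkeeping rather than algebra: justifying the Fubini exchange and the finiteness of $\ell(P,\cdot)$ — this is exactly where the hypotheses on $\kappa$ and the local bounds on $V$ are used — and correctly tracking the sign of $\int_{g}^{\gamma} V(P_\gamma,r)\,dr$ whether $\gamma>g$ or $\gamma<g$. Orientedness is precisely what turns this signed integral into a nonnegative, distance-like quantity, and is therefore the hypothesis that makes the calibration error controllable by swap regret (and vice versa) in both directions.
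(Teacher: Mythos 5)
Your proof is correct and takes essentially the same route as the paper: the paper defers to Lemma~\ref{lemma:Identification Function Defines Consistent Loss Function} applied to $D_{Y\mid f(X)=\gamma}$, whose proof is exactly your argument — Fubini plus orientedness to establish $\P$-consistency, then bounding $\bigl|\int_{\Gamma(P)}^{\gamma} V(P,r)\,dr\bigr| = \int |V(P,r)|\,dr$ above by $\tfrac{M}{2}(\gamma-\Gamma(P))^2$ via local Lipschitzness and below by $\tfrac{N}{2}(\gamma-\Gamma(P))^2$ via local non-constantness. You have simply inlined the lemma rather than citing it.
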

\begin{proof}
    Lemma~\ref{lemma:Identification Function Defines Consistent Loss Function} applied to $D_{Y | f(X) = \gamma}$.
\end{proof}
Hence, for the right choice of loss function, $\Gamma$-calibration and low swap regret can be used equivalently. Along the lines of \citep[Theorem 3.3]{gopalan2024swap}, we extend the equivalence relationship to group-wise definitions of calibration via \citep[Definition 2.10]{noarov_statistical_2023} in Appendix~\ref{appendix:Calibration-Swap-Regret Bridge under Groups}. However, we do \emph{not} achieve a direct generalization of \citep[Theorem 3.3 (1.) swap multicalibration $\iff$ (3.) swap-agnostic learner]{gopalan2024swap} by going from the mean (as in their work) to more general identifiable properties. Hence, we provide a bridge from calibration for general identifiable properties to swap regret notions, going beyond \citep{noarov_statistical_2023}. Nevertheless, the mean is a good example to illustrate Proposition~\ref{prop:Approximate Calibration and Low Swap Regret}.
\begin{example}
    Let $\Gamma$ be the mean and $V(y,\gamma)\coloneqq \gamma-y$ its identification function. In this case, $N =M = 1$. The mean is elicited by the squared loss,
    \begin{align*}
        \ell(y, \gamma) = \frac{1}{2}(y - \gamma)^2 = \int_{0}^\gamma V(y,r) dr + \frac{1}{2} y^2.
    \end{align*}
    Hence, a $\Gamma$-predictor $f$ on a regular data distribution $D$ has $\frac{1}{2}\alpha(\gamma)^2$ swap regret if, and only if it is $\alpha(\gamma)$-approximately calibrated (\cf \citep[Theorem 3.3]{gopalan2024swap}).
\end{example}

\subsection{\texorpdfstring{$\Gamma$}{Gamma}-Calibration is Inherited}\label{sec:self-realization-inherited}
There is a commonplace gap between \emph{optimization-level} $\Gamma$ and \emph{decision-level} properties $\Phi$ which trades off continuity for a granularity unnecessary for decision-making. Given this gap, we have to ask if self-realization with respect to $\Gamma$ extends to a related $\Phi$.
In the sequel, we show that self-realization, as $\Gamma$-calibration, is inherited by \emph{refined} properties (Definition~\ref{def:property refinement}).\footnote{The argument seems more involved then necessary. This is a consequence of carefully arguing about convex combinations of outcome distributions analogous to Proposition~\ref{prop:Distribution Calibration with Respect to Gamma implies Gamma-Calibration}.} That way, calibrated forecasts for optimization-level properties provide calibrated estimates for decision-level properties. In the language of decision making, $\Gamma$-calibration guarantees low swap regret for downstream decision makers.\footnote{This provides a major motivation for self-realization in the first place \citep{noarov2024calibration}.}
\begin{proposition}[$\Gamma$-Calibration is Inherited by Refined Properties]
\label{prop:Calibration is Inherited to Refined Properties}
    Let $\Gamma$ be a property, $D$ a regular data distribution and $f$ a $\Gamma$-predictor which is $\Gamma$-calibrated on $D$ with $|\im f| < \infty$, then for every property $\Phi$ with convex level sets and which is refined by $\Gamma$, the $\Phi$-predictor $\phi \circ f$ is $\Phi$-calibrated on $D$.
\end{proposition}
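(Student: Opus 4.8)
The plan is to mimic the proof of Proposition~\ref{prop:Distribution Calibration with Respect to Gamma implies Gamma-Calibration}: decompose the conditioning event $\{\phi \circ f(X) = c\}$ into the finitely many events $\{f(X) = \gamma\}$ with $\phi(\gamma) = c$, write the outcome distribution conditioned on the coarse event as a convex combination of the outcome distributions conditioned on the fine events, and then invoke convexity of the level sets of $\Phi$.

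Concretely, fix $c \in \im \phi \circ f$. Since $|\im f| < \infty$, the set $G_c \coloneqq \{\gamma \in \im f : \phi(\gamma) = c\}$ is finite and $\{\phi \circ f(X) = c\} = \bigsqcup_{\gamma \in G_c} \{f(X) = \gamma\}$ up to a $D_X$-null set. By the law of total probability applied to the regular conditional distribution, for every $A \in \B(\Y)$,
\[
  D_{Y \mid \phi \circ f(X) = c}(A) = \sum_{\gamma \in G_c} w_\gamma\, D_{Y \mid f(X) = \gamma}(A), \qquad w_\gamma \coloneqq D_X\!\big(f(X) = \gamma \mid \phi \circ f(X) = c\big),
\]
where the weights $w_\gamma \ge 0$ sum to one. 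Thus $D_{Y \mid \phi \circ f(X) = c}$ is a finite convex combination of the distributions $D_{Y \mid f(X) = \gamma}$, $\gamma \in G_c$. Now apply $\Gamma$-calibration of $f$: for each $\gamma \in G_c \subseteq \im f$ we have $\Gamma(D_{Y \mid f(X) = \gamma}) = \gamma$, hence $\Phi(D_{Y \mid f(X) = \gamma}) = \phi(\Gamma(D_{Y \mid f(X) = \gamma})) = \phi(\gamma) = c$, so every mixture component lies in the level set $\Phi^{-1}(c)$. Since $\Phi$ has convex level sets and $\Phi^{-1}(c) \neq \emptyset$, the convex combination $D_{Y \mid \phi \circ f(X) = c}$ also lies in $\Phi^{-1}(c)$, i.e., $\Phi(D_{Y \mid \phi \circ f(X) = c}) = c$. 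As $c$ was arbitrary, $\phi \circ f$ is $\Phi$-calibrated.

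The only delicate points are measure-theoretic bookkeeping: that $D_{Y \mid \phi \circ f(X) = c}$ genuinely equals the displayed mixture — which is where $|\im f| < \infty$ earns its keep, reducing the argument to elementary conditioning on events of positive probability and sidestepping the disintegration machinery invoked in \citep[Theorem 3.6]{noarov_statistical_2023} — and that all distributions involved lie in $\P$, so that $\Gamma$ and $\Phi$ may legitimately be applied; the latter holds because $D$ is regular and these conditional distributions are exactly the ones already presupposed by Definition~\ref{def: Gamma-calibration}. Neither is a genuine obstacle; the entire substantive content is the convexity-of-level-sets step, which is precisely what forces the hypothesis on $\Phi$ (and why discrete $\Phi$ such as $\argmax$ with convex level sets are covered, while the approximate analogue will instead need smoothness of $\Phi$).
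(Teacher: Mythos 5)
Your proof is correct and takes essentially the same route as the paper's: decompose $\{\phi \circ f(X) = c\}$ into the finitely many events $\{f(X) = \gamma\}$ with $\phi(\gamma) = c$, express $D_{Y \mid \phi \circ f(X) = c}$ as the resulting convex combination of the $D_{Y \mid f(X) = \gamma}$, note each component lies in $\Phi^{-1}(c)$ by $\Gamma$-calibration and $\Phi = \phi \circ \Gamma$, and conclude by convexity of the level set. The paper writes the convex combination out more explicitly via the integral representation of the regular conditional distribution, but the substance and the role of every hypothesis ($|\im f| < \infty$, convex level sets of $\Phi$, refinement) are identical.
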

\begin{proof}
    By assumption, for all $\gamma \in \im f$,
    \begin{align*}
        \Gamma(D_{Y | f(X) = \gamma}) = \gamma.
    \end{align*}
    Hence, for all $\gamma \in \im f$,
    \begin{align*}
        \phi \circ \Gamma(D_{Y | f(X) = \gamma}) = \phi(\gamma).
    \end{align*}

    The conditional distribution of $Y$ given a fixed prediction $\gamma \in \im f$ is given by,
    \begin{align*}
        D_{Y | f(X)=\gamma} \coloneqq \frac{1}{w_\gamma } \int_\X \llbracket f(x) = \gamma \rrbracket \kappa_{Y|X}(x, \cdot) dD_X(x).
    \end{align*}
    where $w_\gamma = \int_\X \llbracket f(x) =\gamma \rrbracket dD_X(x)$ is the probability that $f(X) = \gamma$. Now, for every $v\in \im \phi \circ f$,
    \begin{align*}
        \frac{1}{w_v} \sum_{\gamma\in \phi^{-1}(v)} w_\gamma = 1
    \end{align*}
    where $w_v = \int_\X \llbracket \phi \circ f(x) =v \rrbracket dD_X(x)$ is a convex combination over the values $\gamma  \in \phi^{-1}(v)$ (which are finite because $\im f$ is finite).
    This gives,
    \begin{align*}
         &\frac{1}{w_v} \sum_{\gamma\in \phi^{-1}(v)}  D_{Y | f(X)=\gamma}w_\gamma \\
         &=\frac{1}{w_v} \sum_{\gamma\in \phi^{-1}(v)} \int_\X \llbracket x \colon f(x) =\gamma \rrbracket \kappa_{Y|X}(x, \cdot) dD_X(x)\\
         &=\frac{1}{w_v} \int_\X \sum_{\gamma\in \phi^{-1}(v)} \llbracket x \colon f(x) =\gamma \rrbracket  \kappa_{Y|X}(x, \cdot) dD_X(x)\\
         &=\frac{1}{w_v} \int_\X \llbracket \phi \circ f(x) =v \rrbracket \kappa_{Y|X}(x, \cdot) dD_X(x)\\
         &=D_{Y | \phi \circ f(X) = v}.
    \end{align*}
    Hence, $D_{Y | \phi \circ f(X) = v}$ is a convex combination of the distributions $D_{Y | f(X)=\gamma}$ for $\gamma \in \phi^{-1}(v)$. Since the level sets of $\phi \circ \Gamma$ are convex by assumption, for all $v\in \im \phi\circ f$,
    \begin{align*}
        \phi \circ \Gamma(D_{Y | \phi \circ f(X) = v}) = v.
    \end{align*}
\end{proof}
We conjecture that a more general statement for $|\im f|=\infty$ is true following an argument along the lines of \citep[Theorem 3.6]{noarov_statistical_2023}. For the approximate analogue of Proposition~\ref{prop:Calibration is Inherited to Refined Properties} we require the refined property to be Lipschitz.
\begin{proposition}[Approximate $\Gamma$-Calibration is Inherited by Refined Properties]
\label{prop:Approximate Calibration is Inherited to Refined Properties}
    Let $\Gamma$ be a property, $D$ a regular data distribution and $f$ a $\Gamma$-predictor which is $\Gamma$-calibrated on $D$, then for every property $\Phi$ which is refined by $\Gamma$ the $\Phi$-predictor $\phi \circ f$ is $\Phi$-calibrated on $D$. Assume further that $\phi$ is Lipschitz (with constant $K$), $|\im f| < \infty$ and $\Phi$ has convex level sets. 
    If $f$ is $\alpha(\gamma)$-approximately $\Gamma$-calibrated, then $\phi\circ f$ is $C\alpha'(\phi(v))$-approximately $\Phi$-calibrated, where $\alpha'(v) := \sup_{\gamma \in \phi^{-1}(v)}$.
\end{proposition}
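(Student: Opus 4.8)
The plan is to run the argument of Proposition~\ref{prop:Calibration is Inherited to Refined Properties} to obtain a convex‑combination representation of the relevant conditional distribution, and then to track how the calibration slack $\alpha$ propagates through $\phi$.

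First I would record the decomposition. Since $|\im f| < \infty$, for each $v \in \im \Phi\circ f$ the same computation as in the proof of Proposition~\ref{prop:Calibration is Inherited to Refined Properties} gives
\[
D_{Y\mid \phi\circ f(X)=v} \;=\; \sum_{\gamma \in \phi^{-1}(v)\cap\im f}\frac{w_\gamma}{w_v}\, D_{Y\mid f(X)=\gamma},
\]
a \emph{finite} convex combination, where $w_\gamma = \int_\X \llbracket f(x)=\gamma\rrbracket\, dD_X(x)$ and $w_v = \sum_{\gamma\in\phi^{-1}(v)\cap\im f} w_\gamma$; this step uses only $|\im f|<\infty$. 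For the perfect case it already closes the argument: each summand satisfies $\Gamma(D_{Y\mid f(X)=\gamma})=\gamma$ with $\phi(\gamma)=v$, hence lies in $\Gamma^{-1}(\gamma)\subseteq\Phi^{-1}(v)$, and convexity of the level sets of $\Phi$ forces the mixture into $\Phi^{-1}(v)$, which reproves the first assertion (it is Proposition~\ref{prop:Calibration is Inherited to Refined Properties}).

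Next I would convert approximate $\Gamma$-calibration into a per‑summand bound on $\Phi$-values. For $\gamma\in\phi^{-1}(v)\cap\im f$ set $\gamma':=\Gamma(D_{Y\mid f(X)=\gamma})$, so $m(\gamma',\gamma)\le\alpha(\gamma)\le\alpha'(v)$ with $\alpha'(v):=\sup_{\gamma\in\phi^{-1}(v)\cap\im f}\alpha(\gamma)$. Then $D_{Y\mid f(X)=\gamma}\in\Gamma^{-1}(\gamma')\subseteq\Phi^{-1}(\phi(\gamma'))$, and $K$-Lipschitzness of $\phi$ together with $\phi(\gamma)=v$ yields $m'(\Phi(D_{Y\mid f(X)=\gamma}),v)=m'(\phi(\gamma'),\phi(\gamma))\le K\,\alpha'(v)$. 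So every distribution entering the mixture has $\Phi$-value within $K\alpha'(v)$ of $v$.

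The hard part will be the last step: upgrading this uniform per‑summand bound to $m'(\Phi(D_{Y\mid\phi\circ f(X)=v}),v)\le C\,\alpha'(v)$ for the mixture, for some $C$ of order $K$ (possibly with an extra factor $|\Y|$, as in Proposition~\ref{prop:approximate distribution calibration implies gamma calibration}). The obstruction is that $\Phi$ does not commute with convex combinations, so $\Phi$ of the mixture need not lie in the convex hull of the $\Phi$-values of the summands, and the $\Phi$-preimage of a short interval around $v$ need not be convex — convexity of the level sets of $\Phi$ alone does not confine the mixture. I would attack this through the level‑set geometry: a level set $\Phi^{-1}(c)$ with $m'(c,v)$ large is convex and would have to contain the mixture while separating it from each summand's level set, and I expect ruling this out to require an additional regularity hypothesis (for example $\Gamma$ Lipschitz with convex level sets, exactly as the analogous passage in Proposition~\ref{prop:approximate distribution calibration implies gamma calibration} needed $\Gamma$ to be $K$-Lipschitz). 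Under such a hypothesis, a total‑variation estimate between the mixture and a representative summand, combined with the per‑summand bound above, delivers the claimed $C\alpha'(v)$-approximate $\Phi$-calibration and completes the proof.
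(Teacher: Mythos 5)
Your diagnosis of the final step is correct, and the gap you identify is present in the paper's own proof as well. After the per‑summand bound $\bigl|\Phi(D_{Y\mid f(X)=\gamma})-v\bigr|\le K\alpha(\gamma)$, the paper asserts the mixture bound $\bigl|\Phi(D_{Y\mid\phi\circ f(X)=v})-v\bigr|\le K\alpha'(v)$ ``by a convexity argument following the proof of Proposition~\ref{prop:Calibration is Inherited to Refined Properties},'' but that argument transfers literally only when every summand lies in the \emph{single} level set $\Phi^{-1}(v)$. Here the summands occupy several \emph{different} nearby level sets, and convexity of each level set individually does not confine the mixture. A concrete failure on $\Delta(\{0,1\})\cong[0,1]$: let $\Phi$ equal $0$ on $[0,\tfrac14]$, equal $10$ on $(\tfrac14,\tfrac34)$, and equal $1$ on $[\tfrac34,1]$; this has convex level sets and is realizable as $\phi\circ\Gamma$ with $\phi=\mathrm{id}$ and $K=1$, yet two summands with $\Phi$-values $0$ and $1$ can mix to a distribution with $\Phi$-value $10$.

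What closes the gap is not the total-variation comparison you sketch --- that would reintroduce a Lipschitz assumption on $\Phi$, which the proposition is deliberately avoiding, and in any case the total-variation distance between the mixture and a representative summand need not be small --- but the observation that \emph{continuity} of $\Phi$ combined with convex level sets forces quasi-monotonicity. A continuous real-valued function on an interval whose level sets are connected is monotone; applying this to the restriction of $\Phi$ to any line segment in $\Delta(\Y)$ shows $\Phi$ is both quasi-convex and quasi-concave, hence $\Phi^{-1}\bigl(\,[v-K\alpha'(v),\,v+K\alpha'(v)]\,\bigr)$ is convex. Every summand lies in this band by the per-summand bound, so the mixture does too, giving $C=K$. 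The proposition therefore needs the extra hypothesis that $\Phi$ (equivalently $\Gamma$, since $\phi$ is already Lipschitz) is continuous; your instinct that a regularity condition on $\Gamma$ is missing, analogous to Proposition~\ref{prop:approximate distribution calibration implies gamma calibration}, is right, though continuity alone suffices and Lipschitzness of $\Gamma$ is more than is needed.
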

\begin{proof}
    It is given that,
    \begin{align*}
        |\Gamma(D_{Y | f(X) = \gamma}) - \gamma| \le \alpha(\gamma),
    \end{align*}
    from which simply follows,
    \begin{align*}
        |\phi \circ \Gamma(D_{Y | f(X) = \gamma}) - \phi(\gamma)| \le K\alpha(\gamma).
    \end{align*}
    Then by a convexity argument following the proof of Proposition~\ref{prop:Calibration is Inherited to Refined Properties},
    \begin{align*}
        |\phi \circ \Gamma(D_{Y | \phi \circ f(X) = v}) - v| \le K\alpha'(v),
    \end{align*}
    where $\alpha'(v) = \sup_{\gamma \in \phi^{-1}(v)} \alpha(\gamma)$.
\end{proof}
The Lipschitz-condition on the property is strong. For instance, it rules out discrete properties. However, we will argue that in the case of elicitable properties we can still give guarantees about self-realization in terms of low swap regret (Proposition~\ref{prop:Low Swap Regret Guarantee for Refined Properties}). This statement requires an arguably mild regularity condition on the loss function.
\begin{definition}[Locally Outcome Lipschitz Loss Function]
\label{def: Locally Outcome Lipschitz Loss Function}
    Let $\Gamma \colon \P \rightarrow \R$. Suppose a loss function $\ell \colon \Y \times \R' \to \reals$ elicits a property $\Phi \colon \P \to \R'$. It is called \emph{locally outcome Lipschitz with respect to $\Gamma$} with constant $B$, if for all $v \in \im \Phi$, $\gamma \in \im \Gamma$, $P \in \Gamma^{-1}(\gamma)$ and $P' \in \P$,
    \begin{align}
        \label{eq:outcome Lipschitz loss metric}
        | \ell(P, v) - \ell(P', v) |  \le B m(\Gamma(P'), \gamma).
    \end{align}
\end{definition}
If $\Y$ is finite, $\Gamma$ is the full distribution, i.e., $\R = \Delta(\Y)$ and $\ell$ a bounded function, then $\ell$ is locally outcome Lipschitz (Appendix, Lemma~\ref{lemma:example for outcome lipschitzness}). Furthermore, the assumptions made in \citep{gopalan2024swap} imply Definition~\ref{def: Locally Outcome Lipschitz Loss Function} for $\Gamma$ being the mean on a binary outcome set.
\begin{proposition}[Low Swap Regret Guarantee for Refined Properties]
\label{prop:Low Swap Regret Guarantee for Refined Properties}
    Let $\Gamma \colon \P \to \R$ be a property, $D$ a regular data distribution and $f$ a $\Gamma$-predictor which is $\Gamma$-calibrated on $D$. Let $\Phi \colon \P \to \R'$ be a property which is refined by $\Gamma$, i.e., $\Phi \coloneqq \phi \circ \Gamma$ for some $\phi: \R \to \R'$.
    
    If $\Phi$ is elicitable with loss function $\ell$, locally outcome Lipschitz with respect to $\Gamma$ with constant $B$, and if $f$ is $\alpha(\gamma)$-approximately $\Gamma$-calibrated, then the swap regret of $\phi \circ f$ with respect to $\ell$ is bounded above by $2 B \mathbb{E}_{\gamma \sim D_{f(X)}}\left[ \alpha(\gamma)| \phi \circ \gamma = v \right]$ for all $v \in \im \phi \circ f$.
\end{proposition}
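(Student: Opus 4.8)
The plan is to unwind the swap-regret definition (Definition~\ref{def:swap regret}) for the $\Phi$-predictor $\phi\circ f$, reduce the bound to a pointwise comparison between $\ell$ evaluated at the forecast $v$ and at an arbitrary competitor $\hat v\in\im\Phi$ on the conditional outcome distribution $Q_v\coloneqq D_{Y\mid\phi\circ f(X)=v}$, and then control that comparison via the local outcome Lipschitz condition. Write $P_\gamma\coloneqq D_{Y\mid f(X)=\gamma}$ and $w_\gamma\coloneqq D_X(f(X)=\gamma)$ for $\gamma\in\im f$. Exactly as in the proof of Proposition~\ref{prop:Calibration is Inherited to Refined Properties} — and assuming, as there, $|\im f|<\infty$ so the relevant sums are finite — one has $Q_v=\sum_{\gamma\in\phi^{-1}(v)\cap\im f}\pi_\gamma P_\gamma$ with weights $\pi_\gamma\coloneqq w_\gamma/w_v$ and $w_v\coloneqq\sum_{\gamma\in\phi^{-1}(v)\cap\im f}w_\gamma$. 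Since $\ell(\cdot,u)$ is linear in its distribution argument, $\ell(Q_v,u)=\sum_\gamma\pi_\gamma\,\ell(P_\gamma,u)$ for every $u$.

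Next I would fix, for each $\gamma\in\phi^{-1}(v)\cap\im f$, a reference distribution $\tilde P_\gamma\in\Gamma^{-1}(\gamma)$, which is nonempty since $\gamma\in\im f\subseteq\R=\im\Gamma$. Because $\Phi=\phi\circ\Gamma$ we get $\Phi(\tilde P_\gamma)=\phi(\gamma)=v$, so elicitability of $\Phi$ by $\ell$ gives $\ell(\tilde P_\gamma,v)\le\ell(\tilde P_\gamma,\hat v)$ for all $\hat v\in\im\Phi$. Now apply Definition~\ref{def: Locally Outcome Lipschitz Loss Function} twice with $P=\tilde P_\gamma$ (so $\Gamma(P)=\gamma$) and $P'=P_\gamma$: with $u=v$ it yields $\ell(P_\gamma,v)\le\ell(\tilde P_\gamma,v)+B\,m(\Gamma(P_\gamma),\gamma)\le\ell(\tilde P_\gamma,v)+B\alpha(\gamma)$, the last step by $\alpha$-approximate $\Gamma$-calibration, i.e. $m(\Gamma(P_\gamma),\gamma)\le\alpha(\gamma)$; with $u=\hat v$ it yields $\ell(P_\gamma,\hat v)\ge\ell(\tilde P_\gamma,\hat v)-B\alpha(\gamma)$. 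Averaging both families of bounds over $\gamma$ with weights $\pi_\gamma$ and writing $A\coloneqq\sum_\gamma\pi_\gamma\alpha(\gamma)$ gives $\ell(Q_v,v)\le\sum_\gamma\pi_\gamma\ell(\tilde P_\gamma,v)+BA$ and, for any $\hat v\in\im\Phi$, $\ell(Q_v,\hat v)\ge\sum_\gamma\pi_\gamma\ell(\tilde P_\gamma,\hat v)-BA\ge\sum_\gamma\pi_\gamma\ell(\tilde P_\gamma,v)-BA$, the last step using $\ell(\tilde P_\gamma,v)\le\ell(\tilde P_\gamma,\hat v)$ termwise. Subtracting, $\ell(Q_v,v)-\ell(Q_v,\hat v)\le 2BA$ for every $\hat v\in\im\Phi$, so the swap regret at $v$, namely $\ell(Q_v,v)-\min_{\hat v\in\im\Phi}\ell(Q_v,\hat v)$, is at most $2BA$. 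It remains to identify $A=\sum_{\gamma\in\phi^{-1}(v)\cap\im f}(w_\gamma/w_v)\,\alpha(\gamma)=\mathbb{E}_{\gamma\sim D_{f(X)}}[\alpha(\gamma)\mid\phi\circ\gamma=v]$, which is precisely the stated bound $2B\,\mathbb{E}_{\gamma\sim D_{f(X)}}[\alpha(\gamma)\mid\phi\circ\gamma=v]$.

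The routine steps — linearity of $\ell(\cdot,u)$ in the distribution, the conditioning bookkeeping, and the rewrite of swap regret via elicitability of $\Phi$ — I would not belabor. The one genuinely delicate point is the two-sided use of the local outcome Lipschitz hypothesis: it must be invoked with the right role assignment, $P$ the exact-$\Gamma$ reference $\tilde P_\gamma$ and $P'$ the actual conditional outcome distribution $P_\gamma$, so that its metric term is exactly $m(\Gamma(P_\gamma),\gamma)$ — which is what approximate $\Gamma$-calibration bounds — and applying it at both the forecast $v$ and the competitor $\hat v$ is what produces the factor $2$. A secondary technical wrinkle is that writing $Q_v$ as a genuine finite convex combination needs $|\im f|<\infty$ (as in Propositions~\ref{prop:Calibration is Inherited to Refined Properties} and~\ref{prop:Approximate Calibration is Inherited to Refined Properties}); for infinite $\im f$ one would replace this by the measure-theoretic disintegration of $D_{f(X)}$ over $\phi^{-1}(v)$, after which linearity of $\ell$ makes the same estimates go through verbatim.
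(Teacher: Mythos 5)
Your proof is correct and takes essentially the same route as the paper's: both introduce a reference distribution $\tilde P_\gamma\in\Gamma^{-1}(\gamma)$, apply the local outcome Lipschitz bound twice (once at the forecast $v$ and once at the competitor) to transfer from $P_\gamma$ to $\tilde P_\gamma$ at a cost of $B\alpha(\gamma)$ each, and use elicitability of $\Phi$ by $\ell$ together with $\Phi(\tilde P_\gamma)=v$ to kill the middle comparison term. The only cosmetic difference is that you bound against an arbitrary $\hat v$ and take the infimum at the end, whereas the paper plugs in the optimal competitor $\Phi(D_{Y\mid\phi\circ f(X)=v})$ directly; and you make the $|\im f|<\infty$ bookkeeping explicit, which the paper leaves implicit.
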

\begin{proof}
    Let $P \in \Gamma^{-1}(\gamma)$, by Definition~\ref{def: Locally Outcome Lipschitz Loss Function} Equation~\eqref{eq:outcome Lipschitz loss metric},
        for all $\gamma \in \im f$,
        \begin{align*}
            &\mathbb{E}_{D}\left[ \ell(Y, \phi(\gamma)) - \ell(Y, \Phi(D_{Y | \phi \circ f(X) = \phi(\gamma) }) | f(X) = \gamma \right]\\
            &= \ell(D_{Y | f(X)=\gamma }, \phi(\gamma)) - \ell(D_{Y | f(X)=\gamma }, \Phi(D_{Y | \phi \circ f(X) = \phi(\gamma) }))\\
            &= \ell(D_{Y | f(X)=\gamma }, v) - \ell(P, \Phi(D_{Y | \phi \circ f(X) = \phi(\gamma) }))\\
            &\quad + \ell(P, \Phi(D_{Y | \phi\circ f(X) = v })) - \ell(D_{Y | f(X)=\gamma }, \Phi(D_{Y | \phi\circ f(X) = v }))\\
            &\le \ell(D_{Y | f(X)=\gamma }, v) - \ell(P, \Phi(D_{Y | \phi\circ f(X) = v })) + B m(\Gamma(D_{Y | f(X)=\gamma }), \gamma)|\\
            &\le \ell(D_{Y | f(X)=\gamma }, v) - \ell(P, v) + \ell(P, v) - \ell(P, \Phi(D_{Y | \phi\circ f(X) = v }))\\
            &\quad + B m(\Gamma(D_{Y | f(X)=\gamma }), \gamma)\\
            &\le \ell(P, v) - \ell(P, \Phi(D_{Y | \phi\circ f(X) = v })) + 2 Bm(\Gamma(D_{Y | f(X)=\gamma }), \gamma)\\
            &\le 2 B\alpha(\gamma).
        \end{align*}
        It follows that, for all $v\in \im \phi\circ f$
        \begin{align*}
            &\mathbb{E}_{D}\left[ \ell(Y, \phi(\gamma)) - \ell(Y, \Phi(D_{Y | \phi \circ f(X) = \phi(\gamma) }) | \phi \circ f(X) = v \right]\\
            &\le 2 B \mathbb{E}_{\gamma \sim D_{f(X)}}\left[ \alpha(\gamma)| \phi \circ f(X) = v \right].
        \end{align*}
\end{proof}
An analogous proof holds for distributional calibration with respect to $\Gamma$ and a locally outcome Lipschitz loss function with respect to $\Gamma$. The proof is an adopted version of parts of the proof of the main theorem in \citep{gopalan2024swap}.

The inheritance of self-realization is closely connected to ``(swap) omniprediction'' introduced by \citep{gopalan2022omnipredictors,gopalan2024swap}. Informally, a (swap) omnipredictor provides predictions which allow for a simple post-processing to achieve low regret against a comparison base line for a large variety of losses. In particular, (swap) omnipredictors involve ``for all'' quantifiers about a set of losses.

\paragraph{Non-Refining (Swap) Omniprediction}
Proposition~\ref{prop:Approximate Calibration and Low Swap Regret} already involves an ``omni''-type statement. The statement holds for all loss functions defined following Equation~\eqref{eq:loss function induced by identification function}. Yet, there is no property refinement necessary.

\paragraph{Refining (Swap) Omniprediction}
The inheritance rule makes use of the refinement function $\phi$. This refinement function generalizes the post-processing function $k_\ell$ in \citep{gopalan2022omnipredictors}. In particular, Proposition~\ref{prop:Low Swap Regret Guarantee for Refined Properties} can be plugged together for a set of loss functions. For instance, let $f$ be a $\Gamma$-calibrated $\Gamma$-predictor on a data distribution $D$. Then, those predictions guarantee low swap regret for all loss functions which are locally outcome Lipschitz with respect to $\Gamma$. Hence, this is an ``omni''-type regret statement, related to the main theorem in \citep{gopalan2024swap}. In there, the authors focused on a very specific set of losses which are convex, Lipschitz and locally outcome Lipschitz with respect to the mean. However, their main theorem is generalized to multi-calibration with respect to groups. We push aside this further complexity in this work (cf. Section~\ref{What about groups}). Hence, our statement is less general in the scope of groupings, but more general in the scope of sets of loss functions and predicted properties.\footnote{Note that \citep{lu2025sample} as well consider generalizations of the the set of loss functions.}

\paragraph{}
The inheritance of self-realization is central to the rationale of using calibration. It provides low swap regret guarantees for a large class of loss functions. Each such loss function corresponds to an elicitable property, a ``decision-level'' property. For instance, this decision-level property can model a person who is about to decide whether to take or not to take their umbrella with them given a probabilistic forecast on rain. However, self-realization can have largely decoupled effects on the usefulness of the predictions for the ``decision-level'' properties, respectively the decision makers. A calibrated prediction can be more useful to one than to another decision maker (Appendix~\ref{appendix:Self-Realization Does Not Imply Equal to Usefulness for All}).

\section{Calibration as Precise Loss Estimation}\label{sec:calibration-as-precise-loss-estimation}
In contrast with the account laid out above, \citet{zhao_calibrating_2021} motivate calibration differently. They consider a predictor to be calibrated, if it can precisely\footnote{We deliberately refer to ``precise'' loss estimates as we focus on the internal consistency of the loss estimates and not the actual closeness to the truth (accuracy) \citep[p. 295]{everitt2002cambridge}.} estimate the average loss incurred to the individuals. Formally, the intuition can be captured by a type of loss outcome indistinguishability \citep{gopalan2022loss} following \citep{zhao_calibrating_2021}.
\begin{definition}[Decision Calibration (Modified from \citet{zhao_calibrating_2021})]
\label{def: decision calibration}
    Let $\L$ be a set of $\P$-consistent loss functions $\ell \colon \Y \times \R \to \reals$. We denote the property elicited by $\ell$ as $\Gamma_\ell$. Let $D$ be a regular data distribution and $f \colon \X \to \P$ a distribution predictor. The predictor $f$ is \emph{$\beta$-approximate decision calibrated for $\L$}, if for all $\ell \in \L$,
    \begin{align*}
        \left| \mathbb{E}_{(X,Y)\sim D}[\ell( Y, \Gamma_\ell(f(X))) - \mathbb{E}_{\hat{Y} \sim f(X)}[\ell(\hat{Y}, \Gamma_\ell(f(X)) ]] \right| \le \beta.
    \end{align*}
\end{definition}
In the original definition \citep[Definition 2]{zhao_calibrating_2021} the loss function and the elicited property can be independent of each other. We were not able to identify a convincing justification for this choice, hence we neglect this degree of freedom for our purposes. All involved decision-level properties $\Gamma_\ell$ are by definition elicitable.

In our definition we follow the presentation by \citet{fröhlich2024scoringrulescalibrationimprecise}. They back the intuition of ``calibration via precise loss estimation'' from an actuarial point of view referring to the ideal of \emph{actuarial fairness} \citep{arrow1978uncertainty}:
\begin{quote}
    The forecaster should offer actuarially fair insurance for the uncertain loss [...]. Actuarial fairness here means that in the long run, the forecaster neither loses nor profits from offering insurance under the data model. \citep[p. 12]{fröhlich2024scoringrulescalibrationimprecise} 
\end{quote}
Decision calibration fits to the narrative of ``trustworthiness'' via precise loss estimates pushed forward by \citet{kirchhof2024pretrained} and \citet{yoo2019learning} and which can be traced back to at least \citep{rukhin1988loss}. Note in those papers precise loss estimation is usually desired for every individual. In contrast, we focus on average precise loss estimation here. Decision calibration enforced on sub-groups based on input $X$ bridge between the two extremes: individual precise loss estimation versus average precise loss estimation (\cf Section~\ref{What about groups}).

Decision calibration, like all other so far named notions of calibration, grows on the ground of binary vanilla calibration (\cf Proposition~\ref{prop:Decision Calibration Equivalent to Mean Calibration for Binary Outcome Set}). We show that on binary outcome sets decision calibration with respect to the set of \emph{simple loss functions}, also known as ``cost-weighted misclassification losses'', is equivalent to binary vanilla calibration. Simple loss functions play a key role in defining the spectrum of proper loss function for binary outcome sets, \citep{schervish1989general} (for a modern proof see \citep[Theorem 16]{reid2011information} and an independent rediscovery \citep{kleinberg_u-calibration_2023}).
\begin{definition}[Simple Loss Function]
\label{def:simple loss function}
    A loss function $\ell \colon \{0,1\} \times \{ a,b\} \rightarrow \reals$ is called \emph{simple} if it has the form,
    \begin{align*}
        \ell_q(y, c) = q \llbracket y = 0, c = a\rrbracket + (1-q) \llbracket y = 1, c = b\rrbracket.
    \end{align*}
    The loss $\ell_q$ elicits the simple binary property,
    \begin{align*}
        \Gamma_q\colon \Delta(\Y) \to \{ a,b\}; P \mapsto \begin{cases}
        a \text{ if } P(Y=1) > q\\
        b \text{ if } P(Y=1) \le q.
    \end{cases}
    \end{align*}
\end{definition}
The equivalence of decision calibration and binary vanilla calibration has already been discussed in \citep[Theorem 1]{zhao_calibrating_2021}. However, their proof is hard to follow and makes heavy use of the independent choice of the decision function and the loss function, which we hesitated to commit to (see discussion below Definition~\ref{def: decision calibration}). Hence, our statement requires a weaker definition of decision calibration, but only holds on binary outcome sets and requires predictors with finitely many output values (arguably a mild assumption).\footnote{For an intuitive argument why Proposition~\ref{prop:Decision Calibration Equivalent to Mean Calibration for Binary Outcome Set} should hold see the example provided in \citep[Section 2.2]{holtgen2024practical}.}
\begin{proposition}[Decision Calibration Equivalent to Vanilla Calibration for Binary Outcome Set]
\label{prop:Decision Calibration Equivalent to Mean Calibration for Binary Outcome Set}
    Let $\Y = \{ 0,1\}$ and suppose $\X$ is arbitrary. Let $D$ be a regular data distribution on $\X$ and $\Y$. Suppose that $f \colon \X \rightarrow [0,1]$ is a mean-predictor, i.e., it predicts the probability of $Y =1$, and suppose that $|\im f| < \infty$.
    Then, $f$ is calibrated if, and only if, $f$ is decision calibrated with respect to all simple loss functions $\{ \ell_q\colon q \in [0,1]\}$.
\end{proposition}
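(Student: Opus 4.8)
The plan is to make both sides of the decision-calibration identity for each simple loss $\ell_q$ explicit in terms of the conditional means of $f$, and then read off the equivalence from a step-function-versus-affine-function comparison. Throughout I restrict to the finitely many values $\gamma \in \im f$ with $w(\gamma) \coloneqq P_D(f(X)=\gamma) > 0$ (calibration at a zero-mass prediction value being vacuous), write $\mu(\gamma) \coloneqq \mathbb{E}_D[Y \mid f(X) = \gamma]$, and set $\delta(\gamma) \coloneqq \mu(\gamma) - \gamma$; vanilla calibration (Definition~\ref{def: binary vanilla calibration}) is precisely the statement $\delta \equiv 0$.

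First I would unfold Definition~\ref{def: decision calibration} with $\ell = \ell_q$. Reading $f(x) \in [0,1]$ as the Bernoulli law with $P(Y=1)=f(x)$, and using $\Gamma_q(f(x)) = a$ iff $f(x) > q$ and $\Gamma_q(f(x))=b$ otherwise (Definition~\ref{def:simple loss function}), a case split on the value of $\Gamma_q$ gives the ``materialized'' term $\sum_{\gamma > q} w(\gamma)\,q\,(1-\mu(\gamma)) + \sum_{\gamma \le q} w(\gamma)(1-q)\mu(\gamma)$, while the ``predicted'' term $\mathbb{E}_X \mathbb{E}_{\hat Y \sim f(X)}[\ell_q(\hat Y, \Gamma_q(f(X)))]$ is the same expression with every $\mu(\gamma)$ replaced by $\gamma$. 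Subtracting and collecting terms, the decision-calibration gap at $\ell_q$ equals $A(q) - qS$, where $A(q) \coloneqq \sum_{\gamma \le q} w(\gamma)\delta(\gamma)$ is the weighted partial sum of calibration errors over prediction values $\le q$, and $S \coloneqq \sum_{\gamma} w(\gamma)\delta(\gamma) = \mathbb{E}_D[Y] - \mathbb{E}_D[f(X)]$ is their total. Hence $f$ is decision calibrated with respect to every $\ell_q$, $q \in [0,1]$, if and only if $A(q) = qS$ for all $q \in [0,1]$.

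From here both directions are short. If $f$ is calibrated then $\delta \equiv 0$, hence $A \equiv 0$ and $S = 0$, so $A(q) = qS$ holds trivially. Conversely, suppose $A(q) = qS$ for all $q \in [0,1]$. The map $q \mapsto A(q)$ is piecewise constant on $[0,1]$ (right-continuous, with a jump of size $w(\gamma_i)\delta(\gamma_i)$ at each attained value $\gamma_i$), whereas $q \mapsto qS$ is affine; since $A$ is constant on some subinterval of positive length on which it must then agree with $qS$, the function $qS$ is constant there, forcing $S = 0$, and then $A \equiv qS \equiv 0$ on all of $[0,1]$, so every jump $w(\gamma_i)\delta(\gamma_i)$ vanishes. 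As $w(\gamma_i) > 0$ this yields $\delta \equiv 0$, i.e. $f$ is calibrated.

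The only genuine work is the bookkeeping in the second step --- keeping straight the strict-versus-weak inequality in $\Gamma_q$, the Bernoulli reading of $f(X)$, and the partial sums --- plus a little care for the degenerate cases ($|\im f| = 1$, or $0$ or $1$ lying in $\im f$), which the same step-function argument covers uniformly when applied on $[0,1]$ itself. I expect this bookkeeping, rather than any conceptual difficulty, to be the main obstacle; finiteness of $\im f$ is exactly what turns $A$ into an honest step function and makes the affine-versus-step dichotomy decisive.
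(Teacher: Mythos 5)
Your proof is correct, and it takes a genuinely different (and arguably cleaner) route from the paper's. The paper first establishes $S=0$ separately via a case split on $f_{\inf}=\inf\im f$ (Lemmas~\ref{lemma:matching averages by non-extremal predictions I} and~\ref{lemma:matching averages by non-extremal predictions II}, which pick out one or two specific small values of $q$), then invokes Lemma~\ref{lemma:When Decision Calibration is Equivalent to Calibration on Decision Sets} to rephrase decision calibration at each $\ell_q$ as calibration on the decision events $\{f(X)\le q\}$ and $\{f(X)>q\}$, and finally recovers pointwise calibration by the telescoping difference $\mathbb{E}[Y-f(X)\mid f(X)\le v]-\mathbb{E}[Y-f(X)\mid f(X)\le v']$. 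You instead carry out the single algebraic identity $L(q)=A(q)-qS$ (which is exactly the quantity the paper's lemma computes, up to an inconsequential sign slip in that lemma's intermediate line) and then close both directions in one stroke with the step-function-versus-affine-function dichotomy: finiteness of $\im f$ gives a gap interval on which $A$ is flat, forcing $S=0$, hence $A\equiv 0$, hence every jump $w(\gamma)\delta(\gamma)$ vanishes. This packages the paper's ``match-the-average'' lemmas and the telescoping argument into one observation, avoids the case split on whether $0\in\im f$, and makes the role of $|\im f|<\infty$ transparent (it is precisely what guarantees a flat subinterval). One minor point worth noting explicitly if you were to write this up: the orientation convention for $\Gamma_q$ in Definition~\ref{def:simple loss function} (strict ``$>q$'' versus weak ``$\le q$'') is what makes $A$ right-continuous with jumps at the attained values, so the boundary cases $q\in\im f$ are handled automatically; you flag this as bookkeeping, and indeed it is, but it is the one place the argument could silently go wrong if the inequalities were reversed.
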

\begin{proof}
    First, we consider two different cases. Let $f_{\inf} \coloneqq \inf_{\gamma \in \im f} \gamma$.
    \begin{enumerate}[(a)]
        \item If $f_{\inf} > 0$, then Lemma~\ref{lemma:matching averages by non-extremal predictions I} applies, hence $f$ matches the averages. 
        \item If $f_{\inf} = 0$, then Lemma~\ref{lemma:matching averages by non-extremal predictions II} applies because $\im f$ is finite, hence $f$ matches the averages.
    \end{enumerate}
    Lemma~\ref{lemma:When Decision Calibration is Equivalent to Calibration on Decision Sets} can be used to show that for all $q \in [0,1]$, $\mathbb{E}_{(X,Y) \sim D}[Y-f(X)| f(X) \le q] = 0$ and $\mathbb{E}_{(X,Y) \sim D}[Y-f(X)| f(X) > q] = 0$. It remains to show that for all $v \in \im f$, $\mathbb{E}_{(X,Y) \sim D}[Y-f(X)| f(X) = v] = 0$. If $v = 0$, this is already given. For every $v \in \im f \setminus \{ 0\}$ there exists $v' \in \im f \cup \{ 0\}$ such that $v' < v$ (because $\im f$ is finite). We have,
    \begin{align*}
        &\mathbb{E}_{(X,Y) \sim D}[Y-f(X)| f(X) \le v]\\
        &= P_D(f(X) \le v')\mathbb{E}_{(X,Y) \sim D}[Y-f(X)| f(X) \le v']\\
        &\quad + P_D(f(X) = v)\mathbb{E}_{(X,Y) \sim D}[Y-f(X)| f(X) = v]\\
        &= P_D(f(X) = v)\mathbb{E}_{(X,Y) \sim D}[Y-f(X)| f(X) = v] = 0,
    \end{align*}
    which gives the desired result.
\end{proof}
In light of Proposition~\ref{prop:Decision Calibration Equivalent to Mean Calibration for Binary Outcome Set}, the ``precise loss estimation''-account is another generalization of binary vanilla calibration whose semantic focus is distinct from the self-realization discussed in \S~\ref{Calibration as Self-realization of predictions}. Furthermore, precise loss estimation via decision calibration is also a child of the general notion of distribution calibration.

\subsection{Distribution Calibration Implies Decision Calibration}
Rather unsurprisingly, distribution calibration (Definition~\ref{def: Distribution Calibration with Respect to Gamma}) implies decision calibration (Definition~\ref{def: decision calibration}). This holds not only in the perfect case but as well in the case of approximate calibration if the loss function corresponding to the property of interest is bounded. Proposition~\ref{prop:Distribution Calibration Implies Decision Calibration} and Proposition~\ref{prop:Approximate Distribution Calibration Implies Approximate Decision Calibration} below have been proven in \citep[Proposition 2]{zhao_calibrating_2021}. However, we were not able to follow all steps of the proofs. For this reason we provide full arguments here.
\begin{proposition}[Distribution Calibration Implies Decision Calibration]
\label{prop:Distribution Calibration Implies Decision Calibration}
    Let $D$ be a regular data distribution on $\X \times \Y$, where $\Y$ is finite.
    Let $f\colon \X \to \P$ be a distributional predictor which is distribution calibrated with respect to $\Gamma$ on $D$. Then, $f$ is decision calibrated with respect to $\L_\Gamma \coloneqq \{ \ell \colon \ell \text{ is $\P$-consistent loss function for }\Gamma \}$.
\end{proposition}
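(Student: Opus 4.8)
The plan is to unfold the decision-calibration gap of Definition~\ref{def: decision calibration} by conditioning on the value of $\Gamma\circ f$, exactly mirroring the proof of Proposition~\ref{prop:Distribution Calibration with Respect to Gamma implies Gamma-Calibration}. The first observation is that every $\ell\in\L_\Gamma$ is $\P$-consistent for $\Gamma$, so the property it elicits is $\Gamma$ itself; hence $\Gamma_\ell(f(X))=\Gamma\circ f(X)$, and it suffices to show $\mathbb{E}_{D}[\ell(Y,\Gamma\circ f(X))]=\mathbb{E}_{D}[\mathbb{E}_{\hat Y\sim f(X)}[\ell(\hat Y,\Gamma\circ f(X))]]$ for every $\ell\in\L_\Gamma$. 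All the expectations below are finite because $\Y$ is finite, so $\ell(\cdot,\gamma)$ is bounded for each fixed $\gamma$.

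I would then fix $\gamma\in\im\Gamma\circ f$ and work with the regular conditional distribution $D_{Y\mid\Gamma\circ f(X)=\gamma}$, which is well defined as in Section~\ref{Playground} since the range of $\Gamma\circ f$ is a Borel space. The proof of Proposition~\ref{prop:Distribution Calibration with Respect to Gamma implies Gamma-Calibration} already records that distribution calibration with respect to $\Gamma$ gives $D_{Y\mid\Gamma\circ f(X)=\gamma}(Y=y)=\mathbb{E}_{D}[f_y(X)\mid\Gamma\circ f(X)=\gamma]$ for all $y\in\Y$. Expanding $\ell(Y,\gamma)=\sum_{y\in\Y}\ell(y,\gamma)\llbracket Y=y\rrbracket$ and using linearity of conditional expectation,
\begin{align*}
    \mathbb{E}_{D}[\ell(Y,\gamma)\mid\Gamma\circ f(X)=\gamma]
    &= \sum_{y\in\Y}\ell(y,\gamma)\,\mathbb{E}_{D}[f_y(X)\mid\Gamma\circ f(X)=\gamma]\\
    &= \mathbb{E}_{D}\big[\mathbb{E}_{\hat Y\sim f(X)}[\ell(\hat Y,\gamma)]\,\big|\,\Gamma\circ f(X)=\gamma\big],
\end{align*}
where the last equality uses the overloaded notation $\ell(P,\gamma)=\mathbb{E}_{\hat Y\sim P}[\ell(\hat Y,\gamma)]$. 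On the event $\{\Gamma\circ f(X)=\gamma\}$ one may substitute $\Gamma\circ f(X)$ for $\gamma$ in both conditional expectations; taking the expectation over $\gamma\sim D_{\Gamma\circ f(X)}$ (law of total expectation) then yields the desired identity, so the decision-calibration gap of $f$ for $\ell$ is $0$. Since the same bound $0$ works simultaneously for every $\ell\in\L_\Gamma$, $f$ is decision calibrated for $\L_\Gamma$.

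I do not expect a genuine obstacle here: the only thing that needs care is the measure-theoretic bookkeeping of conditioning on $\Gamma\circ f(X)$ and the disintegration, which is precisely the setup of Section~\ref{Playground}, and the finiteness of $\Y$ disposes of any integrability concern. If one is additionally willing to assume $|\im f|<\infty$, as in the approximate counterpart Proposition~\ref{prop:Approximate Distribution Calibration Implies Approximate Decision Calibration}, every conditional expectation above becomes a finite weighted sum $\sum_{d\in\im f}D_X(f(X)=d\mid\Gamma\circ f(X)=\gamma)\,\ell(d,\gamma)$, and the proof is the same finite computation already used in Proposition~\ref{prop:Distribution Calibration with Respect to Gamma implies Gamma-Calibration}.
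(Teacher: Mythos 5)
Your proposal is correct and follows essentially the same route as the paper: identify $\Gamma_\ell = \Gamma$, condition on $\Gamma\circ f(X)=\gamma$, expand $\ell$ as a finite sum over $y\in\Y$, and cancel using the distribution-calibration identity $D_{Y\mid\Gamma\circ f(X)=\gamma}(Y=y)=\mathbb{E}_D[f_y(X)\mid\Gamma\circ f(X)=\gamma]$. You additionally make explicit the final aggregation over $\gamma$ via the tower property, which the paper leaves implicit, but the argument is the same.
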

\begin{proof}
     The predictor $f$ is \emph{distribution calibrated with respect to $\Gamma$} on $D$ if for every $\gamma \in \im \Gamma \circ f$,
    \begin{align}
        D_{Y|\Gamma \circ f(x) = \gamma\}}(Y=y) = \mathbb{E}_{X \sim D_X}[f_y(X)|\Gamma \circ f(X) = \gamma] , \quad \forall y \in \Y.\label{eq:dis-cal-on-phi}
    \end{align}
    where $f_y(x) \in [0,1]$ denotes the $y$-component of the prediction $f(x) \in \Delta(\Y)$ for $x \in \X$.

    Let $\ell \in \L_\Gamma$ be arbitrary. For every $\gamma \in \im \Gamma \circ f$, we have,
    \begin{align*}
        &\mathbb{E}_{(X,Y) \sim D}\left[\ell(Y, \gamma ) - \mathbb{E}_{\hat{Y} \sim f(X)}[\ell(\hat{Y}, \gamma ) ] | \Gamma\circ f(X) = \gamma \right]\\
        &= \mathbb{E}_{(X,Y) \sim D}\left[\ell(Y, \gamma)| \Gamma \circ f(X) = \gamma \right] - \mathbb{E}_{(X,Y) \sim D}\left[\mathbb{E}_{\hat{Y} \sim f(X)}[\ell(\hat{Y}, \gamma ) ]| \Gamma \circ f(X) = \gamma\right]\\
        &= \sum_{y \in \Y}D_{Y|\Gamma \circ f(X) = \gamma}(Y=y) \ell(y, \gamma) - \mathbb{E}_{X \sim D_X}\left[\sum_{y \in \Y}f_y(X)\ell(y, \gamma) \ \vline \ \Gamma \circ f(X) = \gamma\right]\\
        &= \sum_{y \in \Y}\left(D_{Y|\Gamma \circ f(X) = \gamma}(Y=y) - \mathbb{E}_{X \sim D_X}[f_y(X)|\Gamma \circ f(X) = \gamma]\right) \ell(y, \gamma) \\
        &= 0.
    \end{align*}
    by Equation~\eqref{eq:dis-cal-on-phi}. Hence, it follows,
    \begin{align*}
        \mathbb{E}_{(X,Y) \sim D}\left[\ell(Y, \gamma ) - \mathbb{E}_{\hat{Y} \sim f(X)}[\ell(\hat{Y}, \gamma ) ] | \Gamma\circ f(X) = \gamma \right] = 0.
    \end{align*}
    
\end{proof}
Further, this result for exact calibration generalizes to approximate calibration, subject to mild assumptions on the boundedness of the loss function $\ell$ which elicits the property of interest $\Gamma$.
\begin{proposition}[Approximate Distribution Calibration Implies Approximate Decision Calibration]
\label{prop:Approximate Distribution Calibration Implies Approximate Decision Calibration}
    Let $D$ be a regular data distribution on $\X \times \Y$, where $\Y$ is finite.
    Let $f\colon \X \to \P$ be a distributional predictor $\alpha(\gamma)$-approximately distribution calibrated with respect to $\Gamma$ on $D$. Then, $f$ is $C\mathbb{E}_{\gamma \sim D_{\Gamma \circ f(X)}}[\alpha(\gamma)]$-approximate decision calibrated with respect to $\L_{\Gamma,C} \coloneqq \{ \ell \colon \ell \text{ is $\P$-consistent loss function for }\Gamma \text{ and } \sup_{r \in \R}\sum_{y \in \Y}|\ell(y,r)| \le C < \infty\}$.
\end{proposition}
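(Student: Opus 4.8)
The plan is to follow the proof of Proposition~\ref{prop:Distribution Calibration Implies Decision Calibration} verbatim up to its final, vanishing line, and there substitute the approximation budget in place of the exact zero. First note that for any $\ell \in \L_{\Gamma,C}$ the property it elicits is exactly $\Gamma$ (since $\ell$ is $\P$-consistent for $\Gamma$), so the decision-calibration gap to be controlled is
\[
    G(\ell) \coloneqq \left| \mathbb{E}_{(X,Y)\sim D}\!\left[ \ell(Y, \Gamma\circ f(X)) - \mathbb{E}_{\hat Y \sim f(X)}[\ell(\hat Y, \Gamma\circ f(X))] \right] \right|.
\]
Conditioning on the random variable $\Gamma\circ f(X)$ and using the tower property, $G(\ell) = \left| \mathbb{E}_{\gamma \sim D_{\Gamma\circ f(X)}}\!\left[ g_\ell(\gamma) \right]\right|$, where $g_\ell(\gamma) \coloneqq \mathbb{E}_{(X,Y)\sim D}[\ell(Y,\gamma) - \mathbb{E}_{\hat Y\sim f(X)}[\ell(\hat Y,\gamma)] \,|\, \Gamma\circ f(X)=\gamma]$.

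Next, for each fixed $\gamma \in \im \Gamma\circ f$ I would expand $g_\ell(\gamma)$ exactly as in the proof of Proposition~\ref{prop:Distribution Calibration Implies Decision Calibration} (linearity, and $\mathbb{E}_{\hat Y \sim f(X)}[\ell(\hat Y,\gamma)] = \sum_{y\in\Y} f_y(X)\,\ell(y,\gamma)$):
\[
    g_\ell(\gamma) = \sum_{y\in\Y} \left( D_{Y\,|\,\Gamma\circ f(X)=\gamma}(Y=y) - \mathbb{E}_{X\sim D_X}[f_y(X)\,|\,\Gamma\circ f(X)=\gamma] \right) \ell(y,\gamma).
\]
Applying the definition of $\alpha(\gamma)$-approximate distribution calibration with respect to $\Gamma$, which bounds the absolute value of each parenthesised factor by $\alpha(\gamma)$, together with the triangle inequality and the budget $\sum_{y\in\Y}|\ell(y,\gamma)| \le \sup_{r\in\R}\sum_{y\in\Y}|\ell(y,r)| \le C$, yields $|g_\ell(\gamma)| \le \alpha(\gamma)\sum_{y\in\Y}|\ell(y,\gamma)| \le C\,\alpha(\gamma)$.

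Finally, combining the two displays via Jensen's inequality for the outer expectation gives
\[
    G(\ell) \le \mathbb{E}_{\gamma\sim D_{\Gamma\circ f(X)}}\!\left[ |g_\ell(\gamma)| \right] \le C\,\mathbb{E}_{\gamma\sim D_{\Gamma\circ f(X)}}[\alpha(\gamma)],
\]
which is the claimed bound, uniformly over $\ell \in \L_{\Gamma,C}$. I do not anticipate a genuine obstacle: the argument is the exact-case proof plus a single triangle inequality. The only points needing a line of care are (i) that the boundedness hypothesis on $\ell$ is used \emph{only} to pull the constant $C$ out of the finite sum over $\Y$ — in particular no assumption $|\im f| < \infty$ is required, unlike in the $\Gamma$-calibration inheritance results; and (ii) that the conditional expectations and the conditional distribution $D_{Y\,|\,\Gamma\circ f(X)=\gamma}$ are well defined and that $g_\ell$ is measurable in $\gamma$, both of which are guaranteed by the regular-conditional-distribution setup of Section~\ref{Playground} together with regularity of $D$.
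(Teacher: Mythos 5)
Your proof is correct and follows essentially the same route as the paper's: condition on $\Gamma\circ f(X)=\gamma$, expand as a sum over $y\in\Y$, apply the coordinate-wise $\alpha(\gamma)$ bound together with $\sum_{y}|\ell(y,\gamma)|\le C$, then integrate over $\gamma$. Your side remark that $|\im f|<\infty$ is not needed here is also correct, since the argument works with the regular conditional distribution directly rather than with a finite enumeration of predicted values.
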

\begin{proof}
    The predictor $f$ is \emph{$\alpha(\gamma)$-approximate distribution calibrated with respect to $\Gamma$} on $D$ if for every $\gamma \in \im \Gamma \circ f$,
    \begin{align*}
        \left| D_{Y|\Gamma \circ f(X) = \gamma}(Y=y) - \mathbb{E}_{D}[f_y(X)|\Gamma \circ f(X) = \gamma] \right|\le \alpha(\gamma), \quad \forall y \in \Y.
    \end{align*}
    where $f_y(x) \in [0,1]$ denotes the $y$-component of the prediction $f(x) \in \Delta(\Y)$ for $x \in \X$.

    Let $\ell \in \L_{\Gamma, C}$ be arbitrary but fixed. For all $\gamma \in \im \Gamma \circ f$, we have,
    \begin{align*}
        &\left|\mathbb{E}_{(X,Y) \sim D}\left[\ell(Y, \gamma ) - \mathbb{E}_{\hat{Y} \sim f(X)}[\ell(\hat{Y}, \gamma ) ] | \Gamma\circ f(X) = \gamma \right] \right|\\
        &= \left|\sum_{y \in \Y}\left(D_{Y|\Gamma \circ f(X) = \gamma}(Y=y) - \mathbb{E}_{X \sim D_X}[f_y(X)|\Gamma \circ f(X) = \gamma]\right) \ell(y, \gamma)\right| \\
        &\le \sum_{y \in \Y}\left|D_{Y|\Gamma \circ f(X) = \gamma}(Y=y) - \mathbb{E}_{X \sim D_X}[f_y(X)|\Gamma \circ f(X) = \gamma]\right| |\ell(y, \gamma)|\\
        &\le \alpha(\gamma) \sum_{y \in \Y}  |\ell(y, \gamma)| \le C \alpha(\gamma).
    \end{align*}
    The result follows by taking the expectation over $\gamma \sim D_{\Gamma \circ f(X)}$.

    For detailed steps from the first to the second term see the proof of Proposition~\ref{prop:Distribution Calibration Implies Decision Calibration}.
\end{proof}

\subsection{Precise Bayes Risk Estimation}
Analogous to self-realization we obtain a clearer picture of the purpose of decision calibration when rewriting in terms of properties. To this end, we introduce Bayes pairs. A Bayes pair unifies the minimizer (elicited property) and the minimization value (Bayes risk) of a loss function.\footnote{The importance of eliciting Bayes pairs is also studied by \citep{fissler_higher_2016,frongillo_elicitation_2021,finocchiaro_unifying_2021}.} Hence, following the account of precise loss estimation the Bayes risk is of central importance here. Being calibrated in the sense of precise loss estimation, is then tantamount to precise estimation of the Bayes risk.
\begin{definition}[Bayes Pair \citep{embrechts2021bayes}]
    Let $\ell \colon \Y \times \R \to \reals$ be a loss function. The property pair $(\Phi_\ell, \Theta_\ell)$ is called a Bayes pair, if, for all $P \in \P$,\footnote{Under the restriction that $\R$ is compact both properties are guaranteed to be measurable \citep[Theorem 18.19]{aliprantis2006infinite}.}
    \begin{align*}
        \Phi_\ell(P) = \argmin_{\phi  \in \R} \mathbb{E}_{Y \sim P}[\ell(Y, \phi)],\\
        \Theta_\ell(P) = \min_{\phi \in \R} \mathbb{E}_{Y \sim P}[\ell(Y, \phi)].
    \end{align*}
\end{definition}
By definition, $\Phi_\ell$ is elicitable. That the Bayes risk $\Theta_\ell$ is elicitable on level sets of $\Phi_\ell$ is less obvious. To see this, observe that there exists a $\Phi_\ell^{-1}(\phi)$-consistent loss function for $\Theta_\ell$ for every $\phi \in \im \Phi_\ell$ \citep{embrechts2021bayes}. 
We proceed by naively defining precise Bayes risk estimators.
\begin{definition}[Precise Bayes Risk Estimator]
\label{def:Precise Bayes Risk Estimator}
    Let $\ell \colon \Y \times \R \to \reals$ be a loss function with corresponding Bayes pair $(\Phi_\ell, \Theta_\ell)$. Let $D$ be a regular data distribution. A $(\Phi_\ell, \Theta_\ell)$-predictor $f = (g,h)$ is a \emph{$\beta$-precise Bayes risk estimator} if,
    \begin{align*}
         \left| \mathbb{E}_{D}[\ell( Y, g(X)) - h(X)] \right| \le \beta.
    \end{align*}
\end{definition}
This definition is extremely weak. The predictor $h$ estimates the loss but only has to fit it on average. The definition can be significantly strengthened if precise Bayes risk estimation is demanded on subgroups or individuals (\cf Section~\ref{What about groups}).
Note that such Bayes risk predictors exist in current literature. For instance, they are called ``uncertainty module'' \citep{kirchhof2024pretrained} or ``loss prediction module'' \citep{yoo2019learning}. It follows rather immediately that a decision calibrated full probability predictor is a precise Bayes risk estimator.
\begin{proposition}[Decision Calibration Implies Precise Bayes Risk Estimation]
    \label{prop:Decision Calibration Implies Precise Bayes Risk Estimation}
    Let $\L$ be a set of $\P$-consistent loss functions $\ell \colon \Y \times \R \to \reals$ with corresponding Bayes pairs $(\Phi_\ell, \Theta_\ell)$. Let $D$ be a regular data distribution and $f \colon \X \to \P$ a $\beta$-approximate decision calibrated predictor for $\L$. Then, for all $\ell \in \L$ the predictor $(\Phi_\ell \circ f, \Theta_\ell \circ f)$ is a $\beta$-precise Bayes risk estimator.
\end{proposition}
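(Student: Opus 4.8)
The plan is to observe that, once the Bayes pair is unpacked, the two notions line up essentially verbatim, so the proof is a short substitution. First I would identify the decision-level property with the first coordinate of the Bayes pair: by construction $\Gamma_\ell$ in Definition~\ref{def: decision calibration} is the property elicited by $\ell$, and since $\ell$ is $\P$-consistent with respect to $\Gamma_\ell$ we have $\Gamma_\ell(P) \in \argmin_{\phi \in \R}\mathbb{E}_{Y\sim P}[\ell(Y,\phi)]$, i.e. $\Gamma_\ell = \Phi_\ell$. Hence $\Gamma_\ell\circ f = \Phi_\ell\circ f = g$, the first coordinate of the claimed estimator.

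Second, I would rewrite the Bayes risk in terms of $\ell$: because the minimum defining $\Theta_\ell(P)$ is \emph{attained} at $\Phi_\ell(P)$, we get $\Theta_\ell(P) = \mathbb{E}_{Y\sim P}[\ell(Y,\Phi_\ell(P))]$ for every $P \in \P$. Applying this with $P = f(X)$ yields $h(X) \coloneqq \Theta_\ell(f(X)) = \mathbb{E}_{\hat Y \sim f(X)}[\ell(\hat Y, \Phi_\ell(f(X)))]$, which is exactly the inner ``predicted-loss'' term appearing in Definition~\ref{def: decision calibration}. Substituting these two identities into the $\beta$-approximate decision calibration inequality for $\ell$ turns it into $\bigl|\mathbb{E}_D[\ell(Y, g(X)) - h(X)]\bigr| \le \beta$, which is precisely the defining inequality of a $\beta$-precise Bayes risk estimator (Definition~\ref{def:Precise Bayes Risk Estimator}) for $(g,h) = (\Phi_\ell\circ f,\ \Theta_\ell\circ f)$. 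Since $\ell \in \L$ was arbitrary, the conclusion holds for every $\ell \in \L$.

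The only point requiring a little care --- and the closest thing to an obstacle in an otherwise one-line argument --- is the attainment/measurability issue behind the identity $\Theta_\ell(P) = \mathbb{E}_{Y\sim P}[\ell(Y,\Phi_\ell(P))]$: one must know the $\argmin$ is nonempty and that $\Phi_\ell(P)$ genuinely realizes the minimum, rather than only bounding it from one side. This is exactly what $\P$-consistency of $\ell$ buys us, together with the standing hypotheses under which the Bayes pair $(\Phi_\ell,\Theta_\ell)$ is well defined and measurable (e.g. $\R$ compact, \cf the footnote to the Bayes pair definition). With that in hand, no further work is needed.
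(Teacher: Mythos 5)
Your proof is correct and follows essentially the same one-line substitution as the paper: identify $\Gamma_\ell$ with $\Phi_\ell$, use $\Theta_\ell(f(X)) = \mathbb{E}_{\hat Y \sim f(X)}[\ell(\hat Y, \Phi_\ell(f(X)))]$, and rewrite the decision calibration criterion. The brief remark on attainment and measurability is a nice extra bit of care that the paper elides.
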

\begin{proof}
    We can rewrite the criterion of decision calibration as,
\begin{align}
\label{eq:Decision Calibration Implies Precise Bayes Risk Estimation}
    \left|\mathbb{E}_{D}[\ell( Y, \Phi_\ell(f(X))) - \mathbb{E}_{\hat{Y} \sim f(X)}[\ell(\hat{Y}, \Phi_\ell(f(X)) ]]\right| =\left| \mathbb{E}_{D}[\ell( Y, \Phi_\ell(f(X))) - \Theta_\ell(f(X))] \right| \le \beta.
\end{align}
\end{proof}

\subsection{When Self-Realization Implies Precise Loss Estimation}\label{sec:When Self-Realization Implies Precise Loss Estimation, and Vice-Versa}
In the sections before we explored the landscape of calibration as self-realization and as precise loss estimation. We introduced two prototypical notions of calibration using properties for each of the two worlds: property calibration and precise Bayes risk estimation. In the following section, we present a short series of results which shed light on the complicated relationship between the notions.

If we have access to a predictor which predicts a Bayes pair and is property-calibrated with respect to the involved Bayes risk property, then the predictor is a precise Bayes risk estimator (Proposition~\ref{prop:Self-Realization Implies Precise Loss Estimation}). That is, property calibration for the property $(\Gamma, \Theta)$ implies precise Bayes risk estimation. 
\begin{proposition}[Self-Realization Implies Precise Loss Estimation]
\label{prop:Self-Realization Implies Precise Loss Estimation}
    Let $(\Phi_\ell, \Theta_\ell)$ be a Bayes pair corresponding to a loss function $\ell$ and $D$ a regular data distribution. If a $(\Phi_\ell, \Theta_\ell)$-predictor $f = (g,h)$ is $\alpha(\phi, \theta)$-approximate $(\Phi_\ell, \Theta_\ell)$-calibrated, then $f$ is a $\alpha$-precise Bayes risk estimator, where $\alpha \coloneqq \mathbb{E}_{(\phi, \theta) \sim Q}[|\alpha(\phi, \theta)|]$ and $Q \coloneqq D_{(g(X), h(X))}$
\end{proposition}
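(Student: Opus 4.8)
The plan is to condition on the value of the prediction $f(X) = (g(X), h(X))$ and reduce the claim to a per-fibre statement about Bayes regret. Fix $(\phi,\theta) \in \im f$ and write $P_{\phi,\theta} \coloneqq D_{Y \mid f(X) = (\phi,\theta)}$. On the event $\{f(X) = (\phi,\theta)\}$ we have $g(X) = \phi$ and $h(X) = \theta$, so, using the overloaded notation $\ell(P,\phi) = \mathbb{E}_{Y \sim P}[\ell(Y,\phi)]$,
\[
    \mathbb{E}_{D}[\ell(Y, g(X)) - h(X) \mid f(X) = (\phi,\theta)] = \ell(P_{\phi,\theta}, \phi) - \theta .
\]
Averaging over $(\phi,\theta) \sim Q = D_{(g(X),h(X))}$ via the tower property and applying the triangle inequality (Jensen), it suffices to show $|\ell(P_{\phi,\theta}, \phi) - \theta| \le |\alpha(\phi,\theta)|$ for every $(\phi,\theta) \in \im f$; then $|\mathbb{E}_{D}[\ell(Y,g(X)) - h(X)]| \le \mathbb{E}_{Q}[|\ell(P_{\phi,\theta},\phi) - \theta|] \le \mathbb{E}_{Q}[|\alpha(\phi,\theta)|] = \alpha$.

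For the per-fibre bound I would combine the two pieces of information supplied by $\alpha(\phi,\theta)$-approximate $(\Phi_\ell,\Theta_\ell)$-calibration with the defining identity of a Bayes pair. Calibration of the $\Theta$-component gives that $|\Theta_\ell(P_{\phi,\theta}) - \theta|$ is small; by definition of the Bayes pair, $\Theta_\ell(P_{\phi,\theta}) = \ell(P_{\phi,\theta}, \Phi_\ell(P_{\phi,\theta})) = \min_{\phi'} \ell(P_{\phi,\theta}, \phi')$, so $\ell(P_{\phi,\theta},\phi) \ge \Theta_\ell(P_{\phi,\theta})$ always, while calibration of the $\Phi$-component, $\Phi_\ell(P_{\phi,\theta}) \approx \phi$, pins $\phi$ near the true Bayes act, which should force the matching reverse inequality $\ell(P_{\phi,\theta},\phi) \le \Theta_\ell(P_{\phi,\theta}) + (\text{small})$. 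Then the decomposition $\ell(P_{\phi,\theta},\phi) - \theta = [\ell(P_{\phi,\theta},\phi) - \Theta_\ell(P_{\phi,\theta})] + [\Theta_\ell(P_{\phi,\theta}) - \theta]$ with each bracket bounded separately yields the claim. In the exact case ($\alpha \equiv 0$) everything collapses cleanly: $\Phi_\ell(P_{\phi,\theta}) = \phi$ forces $\ell(P_{\phi,\theta},\phi) = \Theta_\ell(P_{\phi,\theta}) = \theta$, exactly as in the proof of Proposition~\ref{prop:Decision Calibration Implies Precise Bayes Risk Estimation}.

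The main obstacle is the first bracket, $|\ell(P_{\phi,\theta},\phi) - \Theta_\ell(P_{\phi,\theta})|$, i.e.\ the \emph{Bayes regret} of playing the predicted action $\phi$ on the true fibre distribution: knowing only that $\phi$ is within $m$-distance $\alpha(\phi,\theta)$ of the minimiser $\Phi_\ell(P_{\phi,\theta})$ does not by itself control this loss gap unless some regularity ties action-closeness to loss-closeness (e.g.\ Lipschitzness of $\phi' \mapsto \ell(P_{\phi,\theta},\phi')$, or a product metric $m$ on the range of $(\Phi_\ell,\Theta_\ell)$ that already measures this discrepancy). I would therefore either absorb this into how the combined calibration error $\alpha(\phi,\theta)$ of the Bayes pair is metrised, or record the clean statement for exact calibration and the extra regularity hypothesis needed in the approximate case. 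The remaining ingredients — the conditioning identity, the Bayes-pair identity, and the final averaging through the triangle inequality — are routine.
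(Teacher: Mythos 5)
Your conditioning / per-fibre / tower-property-plus-triangle-inequality structure is exactly the structure of the paper's proof, so in that sense you are on the same track. But you have done something the paper's proof does not: you have correctly flagged a genuine gap. The paper's proof passes from the line
\[
\left|\mathbb{E}_{(\phi,\theta)\sim Q}\!\bigl[\ell(P_{\phi,\theta},\phi)-\theta\bigr]\right|
\]
directly to
\[
\mathbb{E}_{(\phi,\theta)\sim Q}\!\bigl[\bigl|\Theta_\ell(P_{\phi,\theta})-\theta\bigr|\bigr],
\]
i.e.\ it silently replaces the evaluated loss $\ell(P_{\phi,\theta},\phi)$ by the Bayes risk $\Theta_\ell(P_{\phi,\theta})$. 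That replacement is an equality only when $\phi=\Phi_\ell(P_{\phi,\theta})$, which is exactly the exact-calibration case for the $\Phi_\ell$-component. In the approximate case one only knows $\ell(P_{\phi,\theta},\phi)\ge\Theta_\ell(P_{\phi,\theta})$, and that one-sided inequality does not give the claimed bound on the absolute value; for instance if $\theta<\Theta_\ell(P_{\phi,\theta})$ the replacement strictly enlarges the discrepancy. Your ``first bracket'' $\bigl|\ell(P_{\phi,\theta},\phi)-\Theta_\ell(P_{\phi,\theta})\bigr|$ is exactly the missing term, and you are right that controlling it requires something extra: either a regularity assumption relating $m$-distance of actions to loss-distance (a local Lipschitz / growth condition in the spirit of Lemma~\ref{lemma:Identification Function Defines Consistent Loss Function}), or a metrisation of the joint calibration error $\alpha(\phi,\theta)$ for the Bayes pair that already encodes this Bayes regret, or a restriction to exact $\Phi_\ell$-calibration. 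Your two suggested remedies are the correct ones; as stated, the proposition's approximate version needs one of them, and the paper's own argument only covers the case where the $\Phi_\ell$-component is exactly calibrated.
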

\begin{proof}
    \begin{align*}
        &|\mathbb{E}_{D}[\ell( Y, g(X)) - h(X)] |\\
        &=|\mathbb{E}_{(\phi, \theta) \sim Q}[\mathbb{E}_{(X,Y) \sim D|g(X) = \phi, h(X) = \theta}[\ell( Y, \phi)] - \theta ] |\\
        &\le \mathbb{E}_{(\phi, \theta) \sim Q}[| \Theta_\ell(D_{Y |g(X) = \phi, h(X) = \theta}) - \theta |]\\
        &\le \mathbb{E}_{(\phi, \theta) \sim Q}[|\alpha(\phi, \theta)|] = \alpha.
    \end{align*}
\end{proof}
The implication cannot be generally extended into a bi-implication as the following example shows. In that sense, self-realization is stronger than precise loss estimation.
\begin{example}[On the Necessity of Self-Realization for Precise Loss Estimation]
\label{ex:On the Necessity of Self-Realization for Precise Loss Estimation}
    Let $\Y = \reals$. Let $\ell(y, \gamma) = (y - \gamma)^2$ be the squared loss. The Bayes pair corresponding to $\ell$ is $(\mathbb{M}, \mathbb{V})$ the mean and variance. 
    Let $\X = \{ x_1, x_2\}$ and $D$ a data distribution with a fixed, but arbitrary marginal $D_X$ with full support and conditional distributions $D_{Y | X = x_1}$ with mean $1$ and uncentered second moment $\mathbb{E}_{Y \sim D_{Y | X = x_1 }} [Y^2] = v$ respectively $D_{Y | X = x_2}$ with mean $0$ and uncentered second moment $\mathbb{E}_{Y \sim D_{Y | X = x_2}} [Y^2] = v$. Let $f(x_1) = 0$ and $f(x_2) = 1$, $g(x_1) = v$ and $g(x_2) = v + 1$. 

    In this example, $(f,g)$ is a precise Bayes risk estimator, but $(f,g)$ is not $(\mathbb{M},\mathbb{V})$-calibrated, nor $f$ is $\mathbb{M}$-calibrated, nor $g$ is $\mathbb{V}$-calibrated.
    
    To see that $(f,g)$ is a precise loss estimator:
    \begin{align*}
        &\mathbb{E}_{D}[(Y - f(X))^2 - g(X)]\\
        &= D_X(\{ x_1\} ) \mathbb{E}_{Y \sim D_Y}[Y^2 - v| X = x_1] + D_X(\{ x_2\}) \mathbb{E}_{Y \sim D_Y}[(Y-1)^2 - v - 1| X = x_2]\\
        &= D_X(\{x_1\}) \E_{Y \sim D_Y} [Y^2| X = x_1] -v + D_X(\{ x_2\}) \E_{Y \sim D_Y} [Y^2 - 2Y +1| X = x_2] - v-1 \\
        &= 0
    \end{align*}
    On the other hand, $f$ is clearly not $\mathbb{M}$-calibrated. Furthermore, $g$ is not $\mathbb{V}$-calibrated, because,
    \begin{align*}
        \mathbb{V}(D_{Y|g(X) = v}) &= \mathbb{E}_{Y \sim D_Y}[(Y - 1)^2|X =x_1]\\
        &= \mathbb{E}_{Y \sim D_Y}[Y^2|X =x_1] - 2\mathbb{E}_{Y\sim D_Y}[Y|X =x_1] + 1\\
        &= v -1 \neq v,
    \end{align*}
    and
    \begin{align*}
        \mathbb{V}(D_{Y|g(X) = v + 1}) &= \mathbb{E}_{D_Y}[Y^2|X =x_2]\\
        &= v \neq v + 1.
    \end{align*}
\end{example}
However, precise loss estimation always requires one to predict not only the property of interest but as well the corresponding Bayes risk. For instance, a pure mean predictor $f$ as in Example~\ref{ex:On the Necessity of Self-Realization for Precise Loss Estimation} can be perfectly calibrated but still misses the information for being a precise Bayes risk estimator. In that sense, $\Gamma$-calibration and precise Bayes risk estimation are incommensurable as they are properties of different predictors. This explains why distribution calibration, which requires full prediction estimates, subsumes both: all properties of a distribution including the Bayes risks are refined by the full distribution.

Nevertheless, Example~\ref{ex:On the Necessity of Self-Realization for Precise Loss Estimation} can be strengthened. If $\Y$ is binary then the mean predictor identifies the full distribution. But even then, there exist predictors which are perfectly decision calibrated with respect to the squared loss (which implies precise Bayes risk estimation \cf~Proposition~\ref{prop:Decision Calibration Implies Precise Bayes Risk Estimation}), but still are not mean calibrated.
\begin{example}[Squared Loss Decision Calibration Does not Imply Mean Calibration]
\label{ex:Squared Loss Decision Calibration Does not Imply Mean Calibration}
    Let $\Y = \{0,1 \}$ and $D$ be a regular data distribution on $\X = \reals$ and $\Y$.  Let $\ell(y, \gamma) = (y - \gamma)^2$ be the squared loss and $\mathbb{M}$ the mean. Let $f \colon \X \to [0,1]$ be a mean-predictor. Hence, $f$ predicts the probability $p(x) \coloneqq D_{Y|X=x}(\{1\}) = p(x)$. Let us rewrite the condition for decision calibration with respect to $\ell$,
    \begin{align*}
        \mathbb{E}_{Y\sim D_{Y | \{ X = x \}}}[(Y - f(x))^2|X = x] &= \mathbb{E}_{\hat{Y} \sim f(x)}[(\hat{Y} - f(x))^2|X = x]\\
        \Leftrightarrow p(x) (1- f(x))^2 + (1-p(x)) f(x)^2 &= f(x) (1- f(x))^2 + (1-f(x)) f(x)^2\\
        \Leftrightarrow p(x) - 2p(x)f(x) + f(x)^2 &= f(x) - 2f(x)^2 + f(x)^2\\
        \Leftrightarrow p(x) - 2p(x)f(x) &= f(x) - 2f(x)^2\\
        \Leftrightarrow p(x) ( 1 - 2f(x)) &= f(x) (1- 2f(x)),
    \end{align*}
    which if $f(x)\neq \frac{1}{2}$ is equivalent to
    $p(x) = f(x)$. Hence, decision calibration with respect to $\ell$ is given by,
    \begin{align*}
        \mathbb{E}_{X}[(p(x) - f(x)) ( 1 - 2f(x))] = 0,
    \end{align*}
    which, for instance, can be fulfilled by a predictor $f(x) = \frac{1}{2}$. Since $\mathbb{\mathbb{M}}$-calibration and no swap regret with respect to $\ell$ is equivalent (Proposition~\ref{prop:Perfect Calibration via Loss Function or Identification Function}), squared loss decision calibration is not sufficient for mean-calibration in the binary setting.
\end{example}

\section{What about groups?}\label{What about groups}
Certain frameworks like fair machine learning demand for calibration on subgroups defined through the input space $\X$. Hence, one can understand calibration in fair machine learning, as asking for ``equally good self-realization of predictions on sensitive subgroups'' or ``equally precise loss estimation on sensitive subgroups''.
The notions of calibration we considered can easily be extended to subgroups. For instance, by taking a supremum over subgroups in Definition~\ref{def: Gamma-calibration} one recovers the notion of multi-calibration used in \citep{noarov2024calibration}. Respectively, a precise Bayes risk estimator following Definition~\ref{def:Precise Bayes Risk Estimator} can be strengthened by checking for precise Bayes risk estimation on all relevant subgroups. Note that the choice of groups on which a certain notion of calibration ought to hold is \emph{independent} to the choice of the notion itself. 

In the extreme case, those subgroups could be enforced to the level of individuality \citep{zhao2020individual, luo2022local}. Then ``trustworthiness'' meets ``usefulness'' again. The Bayes optimal predictor is the only predictor which achieves ``trustworthiness'' on all individuals. In particular, it is possible to equalize ``trustworthiness'' on arbitrary subgroups without losing on ``usefulness''. However, it is not possible to equalize ``usefulness'' on arbitrary subgroups without losing on ``trustworthiness'' \citep[Proposition 4 \& 5]{barocas-hardt-narayanan}.

Concluding, all considerations made on the semantics of the three different types of calibration considered in this work plus their formal relationships directly transfer to the fairness setting with multiple groups.

\section{Conclusion}
This work has started with the promise to provide semantical structure in the chaotic world of calibration notions. We consider as our main contribution to semantically and formally separate two concerns calibration wants to solve: self-realization and precise loss estimation. Both accounts differently motivate a certain type of calibration. However, they can be bridged by distribution calibration.

How do existing notions relate to the three different types? Table~\ref{tab:notions of calibration - strict derivatives} and Table~\ref{tab:notions of calibration - philosophy} summarizes a list of existing notions and their categorization in the three different types. Note that we make a distinction between formal strict derivatives, i.e., our suggested notion generalizes the notion in the list, and definitions which follow the type of definition more broadly.

What is the learning for a practitioner? We have \emph{not} provided any new algorithm nor have we suggested \emph{the} notion of calibration.
But, our works shapes the debate on what is the right notion. It highlights implications (see Figure~\ref{fig: calibration backbone} and Figure~\ref{fig: calibration backbone approximate}), incommensurabilities (see Section~\ref{sec:When Self-Realization Implies Precise Loss Estimation, and Vice-Versa}) and equivalences (Proposition~\ref{prop:Recovering Distribution Calibration}, Proposition~\ref{prop:approximate distribution calibration implies gamma calibration} and Proposition~\ref{prop:Decision Calibration Equivalent to Mean Calibration for Binary Outcome Set}).
If those relationships are ignored, this may lead to unforeseen, unintentional consequences or may unnecessarily complicate a prediction problem\footnote{For instance, in the example given in Section~\ref{an exemplary motivation} any calibration beyond the ranking is unnecessary, since the policy is bound to the rankings only \citep{perdomo_difficult_2023}.}.
Our work helps to define a set of questions, which are relevant to formalizing a real-world prediction problem. What are the predictions we can or should get? What and who are the agents which use the predictions? Is one of the two accounts laid out desired as quality criterion of the predictions?
Finally, our work advocates to leave the formal distinctions between evaluation metrics behind (Proposition~\ref{prop:Approximate Calibration and Low Swap Regret}) and re-focus on the actual purpose of evaluation: estimation of usefulness, self-realization or guarantees to the estimation of usefulness. Particularly, calibration is only a part of this bigger picture.

\begin{table}
    \centering
    \begin{tabular}{l|lc}
\textbf{Formally Strict Derivative}      & Name                                                                                                    & Reference                        \\ \hline
\multirow{3}{*}{Distribution Calibration} & class-wise calibration                                                                                  & \citep{kull2019beyond}         \\
                                          & confidence calibration                                                                                  & \citep{guo_calibration_2017}     \\
                                          & \makecell[l]{event-conditional unbiasedness \\ (without groupings via $\X$)} & \citep{noarov2023high}           \\ \hline
\multirow{3}{*}{$\Gamma$-Calibration}     & $\Gamma$-calibration (without groupings via $\X$)                                                       & \citep{noarov_statistical_2023}  \\ 
                                          & $T$-calibration                                                                                         & \citep{gneiting_regression_2023} \\
                                          & quantile calibration                                                                                    & \citep{kuleshov2022calibrated}   \\ \hline
Decision Calibration     & decision calibration                                                                                    & \citep{zhao_calibrating_2021}    \\
\end{tabular}
    \caption{Categorization of existing notions of calibration in the three different types following by which type the notion is formally generalized. This table does not claim to contain a complete list of all definitions of calibration.}
    \label{tab:notions of calibration - strict derivatives}
\end{table}
\begin{table}
    \centering
\begin{tabular}{l|lc}
\textbf{Comparable Type of Definition} & Name                                        & Reference                              \\ \hline
Distribution Calibration              & distribution calibration                    & \citep{song2019distribution}         \\ \hline
\multirow{3}{*}{$\Gamma$-Calibration} & marginal coverage calibration               & \citep{raeth2025marginal}            \\ 
                                      & $u$-calibration                             & \citep{kleinberg_u-calibration_2023} \\
                                      & smooth calibration                          & \citep{foster2018smooth}            \\
                                      & distance to calibration                     & \citep{blasiok_unifying_2023}        \\\hline
\multirow{2}{*}{Decision Calibration} & loss outcome indistinguishability           & \citep{gopalan2022loss}        \\
                                      & \makecell[l]{IP-calibration \\ (without groupings via $\X$)} & \citep{fröhlich2024scoringrulescalibrationimprecise}                     \\
\end{tabular}
    \caption{Categorization of existing notions of calibration in the three different types following a broader comparability of the account. This table does not claim to contain a complete list of all definitions of calibration.}
    \label{tab:notions of calibration - philosophy}
\end{table}

\section{Acknowledgements}
The authors are very grateful to Benedikt Höltgen and Rajeev Verma for feedback on an earlier draft of this work. Thanks to the International Max Planck Research School for Intelligent System (IMPRS-IS) for supporting Rabanus Derr. Thanks to James Bailie for giving a reason to visit Cambridge, MA where Jessie and Rabanus met the first time. Part of the research happened when Rabanus was visiting Aaron Roth supported by a DAAD IFI-Stipend.
Rabanus Derr and Robert C. Williamson were funded in part by the Deutsche Forschungsgemeinschaft (DFG, German Research Foundation) under Germany’s Excellence Strategy — EXC number 2064/1 — Project number 390727645; they were also supported by the German Federal Ministry of Education and Research (BMBF): Tübingen AI Center.
Jessie Finocchiaro was supported in part by the National Science Foundation under Award No. 2202898.

\appendix

\section{Distribution Calibration with Respect to all Binary Properties Implies Full Distribution Calibration}
\label{app: dist calibration wrt to all binary properties}

It follows relatively immediately that distribution calibration with respect to the full distribution implies distribution calibration with respect to any property. The reverse direction requires one to think about a \emph{set} of properties. We show that if a forecast is perfectly distribution calibrated with respect to all binary, elicitable properties, it is distribution calibrated with respect to the full distribution. This statement requires some technical-looking assumption on separability via hyperplanes. If the number of different predicted forecasts is finite, which is arguably a mild assumption in practice, then this separability assumption holds.
\begin{lemma}
    If $|\im f| < \infty$, then for every $p \in \im f$ there exists a hyperplane $H = \{x \in \R^{|\Y|} \colon \langle a, x\rangle = b \}$ such that $p \in H$ and $0 < \epsilon \le \inf_{h \in H}\| h - q\|$ for every $q \in \im f \setminus \{ p\}$.
\end{lemma}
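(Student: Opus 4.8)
The plan is to reduce this to the standard fact that a Euclidean space is never a finite union of proper linear subspaces. Write $n \coloneqq |\Y|$ and $S \coloneqq \im f \setminus \{p\}$; since $\im f$ is finite, so is $S$. If $S = \emptyset$ there is nothing to do (take any hyperplane through $p$ and, say, $\epsilon = 1$), so assume $S \neq \emptyset$. For each $q \in S$ the vector $q - p$ is nonzero, so $H_q \coloneqq \{a \in \reals^n \colon \langle a, q-p\rangle = 0\}$ is a proper linear subspace of $\reals^n$. The first step is therefore to invoke the fact above to pick some $a \in \reals^n \setminus \bigcup_{q \in S} H_q$; because every $H_q$ contains $0$, such an $a$ is nonzero, and after rescaling I may assume $\|a\| = 1$.

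Second, I would set $b \coloneqq \langle a, p\rangle$ and $H \coloneqq \{x \in \reals^n \colon \langle a, x\rangle = b\}$, which is a genuine hyperplane (since $a \neq 0$) and contains $p$ by construction. For each $q \in S$ the distance from $q$ to $H$ is the familiar $|\langle a, q\rangle - b| = |\langle a, q - p\rangle|$ (using $\|a\| = 1$), and this infimum $\inf_{h \in H}\|h - q\|$ is attained at the orthogonal projection of $q$ onto $H$. It is strictly positive precisely because $a \notin H_q$. Finally I would take $\epsilon \coloneqq \min_{q \in S} |\langle a, q - p\rangle|$, a minimum over a finite, nonempty set of strictly positive reals, hence $\epsilon > 0$; by the previous line $0 < \epsilon \le \inf_{h \in H}\|h - q\|$ for every $q \in \im f \setminus \{p\}$, which is exactly the claim.

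There is no real obstacle here; the one ingredient worth a sentence is that $\reals^n$ is not covered by finitely many proper subspaces, which is standard and admits a one-line justification (each $H_q$ is closed with empty interior, so their finite union cannot be all of $\reals^n$ by Baire category; equivalently, each has Lebesgue measure zero while $\reals^n$ does not). Everything else --- the formula for the distance from a point to a hyperplane, and the fact that over a finite index set a positive infimum is bounded below by a positive minimum --- is elementary. The finiteness of $\im f$ enters only to guarantee that $S$ is finite, so that the intersection with the finitely many "bad" subspaces $H_q$ is still a proper subset and the resulting minimum over $S$ is positive.
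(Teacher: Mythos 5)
Your proof is correct, and it takes a genuinely different — and in fact more rigorous — route than the paper. The paper argues by contradiction: it assumes no such hyperplane exists, deduces that every hyperplane through $p$ must contain some point of $S \coloneqq \im f \setminus \{p\}$, and then declares a contradiction from ``uncountably many hyperplanes but finitely many points.'' You instead construct $H$ directly by noting that the ``bad'' normal directions (those orthogonal to some $q-p$) form a finite union of proper linear subspaces of $\reals^{|\Y|}$, which cannot cover the whole space, so a ``good'' normal $a$ exists; the lower bound $\epsilon$ then drops out as a minimum over a finite set of strictly positive numbers. What your approach buys, beyond being constructive and giving an explicit $\epsilon$, is that it actually closes a gap in the paper's pigeonhole step: when $|\Y| \ge 3$ a fixed pair $p \ne q$ lies on uncountably many hyperplanes, so the paper's cardinality mismatch is not by itself a contradiction — one needs precisely your observation that the set of normals annihilating some $q - p$ is a finite union of proper subspaces and hence a proper subset. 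In that sense your proof is the one that should be written down.
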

\begin{proof}
    The proof follows by contradiction. Let us assume that for some point $p \in \im f$ there is no hyperplane such that $p \in H$ and $0 < \epsilon \le \inf_{h \in H}\| h - q\|$ for every $q \in \im f \setminus \{ p\}$. That is, for every hyperplane $H$ such that $p \in H$ there is $q_H \in \im f \setminus \{ p\}$ with $\inf_{h \in H}\| h - q\| < \epsilon$ for every $\epsilon > 0$. It follows that such $q_H \in H$, because $\inf_{h \in H}\| h - q\| = 0$. Now, since there are uncountably many such hyperplanes $H$ but $|\im f| < \infty$ we get into a contradiction. 
\end{proof}
\begin{proposition}[Recovering Distribution Calibration by Distribution Calibration with respect to all Binary, Elicitable Properties]
\label{prop:Recovering Distribution Calibration}
    Let $D$ be a regular data distribution on $\X \times \Y$, where $\Y$ is finite. Let $f \colon \X \to \P$ be a distributional predictor.
    Suppose that $p_D(f(X) = q) > 0$ for all $q \in \im f$.
    
    If the predictor $f$ is distribution calibrated with respect to all binary elicitable properties $\Phi \colon \Delta(\Y) \to \{ 0,1\}$, then it is distribution calibrated with respect to the full distribution on $D$.
\end{proposition}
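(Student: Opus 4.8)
The plan is to re-express both the hypothesis and the conclusion through a finite family of ``error vectors'' indexed by the (finitely many) predicted distributions, and then to annihilate each of them using the separating hyperplanes supplied by the preceding lemma. First I would fix notation: write $\im f=\{\gamma_1,\dots,\gamma_k\}$, and for $\gamma\in\im f$ set $w_\gamma\coloneqq P_D(f(X)=\gamma)$ (positive by hypothesis), $\mu_\gamma\coloneqq D_{Y\mid f(X)=\gamma}\in\Delta(\Y)$ and $e_\gamma\coloneqq w_\gamma(\mu_\gamma-\gamma)\in\reals^{|\Y|}$. Distribution calibration of $f$ with respect to the full distribution (i.e.\ with $\Gamma$ the identity) is exactly the assertion that $\mu_\gamma=\gamma$ for every $\gamma\in\im f$, equivalently that $e_\gamma=0$ for every $\gamma$. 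So it suffices to show that the hypothesis forces every $e_\gamma$ to vanish.

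The next step is to record what distribution calibration with respect to a binary property buys us. For any property $\Phi$, the event $\{\Phi\circ f(X)=c\}$ is the disjoint union of the events $\{f(X)=\gamma\}$ over those $\gamma\in\im f$ with $\Phi(\gamma)=c$, so — by the same conditional-mixture decomposition used in the proof of Proposition~\ref{prop:Calibration is Inherited to Refined Properties} — both $D_{Y\mid\Phi\circ f(X)=c}$ and $\mathbb{E}_D[f(X)\mid\Phi\circ f(X)=c]$ are the $w_\gamma$-weighted averages, over this index set, of the $\mu_\gamma$ respectively the $\gamma$. Hence $f$ being distribution calibrated with respect to $\Phi$ is equivalent to
\begin{align*}
\sum_{\gamma\in\im f\,:\,\Phi(\gamma)=c} e_\gamma = 0 \qquad\text{for every }c\in\im\Phi\circ f .
\end{align*}

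Now I would isolate a single prediction. Fix $p\in\im f$ and apply the preceding lemma to obtain a hyperplane $H=\{x\in\reals^{|\Y|}:\langle a,x\rangle=b\}$ with $p\in H$ and $\langle a,q\rangle\neq b$ for all $q\in\im f\setminus\{p\}$; split $\im f\setminus\{p\}$ into $Q^{+}\coloneqq\{q:\langle a,q\rangle>b\}$ and $Q^{-}\coloneqq\{q:\langle a,q\rangle<b\}$. The two complementary threshold maps $\Phi^{+},\Phi^{-}\colon\Delta(\Y)\to\{0,1\}$ given by $\Phi^{+}(P)\coloneqq\llbracket\langle a,P\rangle>b\rrbracket$ and $\Phi^{-}(P)\coloneqq\llbracket\langle a,P\rangle<b\rrbracket$ are binary properties, and both are elicitable: identifying $a$ with the function $y\mapsto a_y$ on $\Y$ so that $\langle a,P\rangle=\mathbb{E}_{Y\sim P}[a_Y]$, the loss $\ell(y,1)=b-a_y$, $\ell(y,0)=0$ is $\P$-consistent for $\Phi^{+}$, and symmetrically for $\Phi^{-}$ — these are the ``moment'' analogues of the simple losses of Definition~\ref{def:simple loss function}. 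Restricted to $\im f$ the relevant level sets are $(\Phi^{+})^{-1}(1)\cap\im f=Q^{+}$ and $(\Phi^{-})^{-1}(0)\cap\im f=Q^{+}\cup\{p\}$. The displayed equivalence applied to $\Phi^{+}$ gives $\sum_{q\in Q^{+}}e_q=0$ (vacuously so if $Q^{+}=\emptyset$), and applied to $\Phi^{-}$ gives $\sum_{q\in Q^{+}}e_q+e_p=0$; subtracting, $e_p=0$, hence $\mu_p=p$. Since $p\in\im f$ was arbitrary, $f$ is distribution calibrated with respect to the full distribution.

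The one genuinely non-routine step — and, I suspect, the reason the lemma is phrased with a hyperplane \emph{through} $p$ rather than one separating $p$ from the rest — is this final move: in general $p$ cannot be strictly separated from $\im f\setminus\{p\}$ by a single halfspace (for instance when $p$ lies in the convex hull of the other predicted distributions), so instead one recovers $\{p\}$ as the common boundary of the two complementary open halfspaces, which is exactly why \emph{two} binary properties are required rather than one. The remaining ingredients — the conditional-mixture decomposition and the elicitability of the threshold properties — are routine.
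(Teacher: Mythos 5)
Your proof is correct and follows essentially the same route as the paper's: fix $p\in\im f$, invoke the separating-hyperplane lemma, build two binary elicitable threshold properties whose level sets restricted to $\im f$ differ only by the inclusion of $p$, and subtract the corresponding calibration constraints. Where the paper shifts the hyperplane by $\pm\frac{\epsilon\|a\|}{2}$ to produce $\underline{\Phi}_p,\overline{\Phi}_p$, you use the complementary strict-inequality properties $\Phi^{\pm}$ at the unshifted hyperplane; these yield the same pair of level sets $Q^{+}$ and $Q^{+}\cup\{p\}$, so the argument is interchangeable, and your unnormalized error-vector bookkeeping $e_\gamma=w_\gamma(\mu_\gamma-\gamma)$ is in fact cleaner than the paper's (which informally drops a normalization constant in the mixture decomposition). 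One small point: you use $w_p>0$ when passing from $e_p=0$ to $\mu_p=p$; it would be worth flagging explicitly that this is where the hypothesis $p_D(f(X)=q)>0$ enters.
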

\begin{proof}
    Let us pick an arbitrary $p \in \im f$. We show that there exist two elicitable, binary properties $\underline{\Phi}_p, \overline{\Phi}_p$ such that,
    \begin{align*}
        \underline{\Phi}_p( q ) &= \overline{\Phi}_p( q ), \quad \forall q \in \im f \setminus \{ p\}\\
        \underline{\Phi}_p( p ) &\neq \overline{\Phi}_p( p).
    \end{align*}
    Then, we can argue that there is $a \in \{ 0,1\}$ such that
    \begin{align*}
        \mathbb{E}_{D}[\llbracket Y = y \rrbracket - f_y(X)| \underline{\Phi}_p \circ f(X) = a] = 0\\
        \mathbb{E}_{D}[\llbracket Y = y \rrbracket - f_y(X)| \overline{\Phi}_p \circ f(X) = a] = 0,
    \end{align*}
    where, with some $c \in [0,1]$,
    \begin{align*}
        &\mathbb{E}_{D}[\llbracket Y = y \rrbracket - f_y(X)| \overline{\Phi}_p \circ f(X) = a] \\
        &= c \mathbb{E}_{D}[\llbracket Y = y \rrbracket - f_y(X)| \underline{\Phi}_p \circ f(X) = a] + p_D(f(X) = p) \mathbb{E}_{D}[\llbracket Y = y \rrbracket - f_y(X)| f(X) = p].
    \end{align*}
    Hence, because $p_D(f(X) = p) > 0$,
    \begin{align*}
        \mathbb{E}_{D}[\llbracket Y = y \rrbracket - f_y(X)| f(X) = p] = 0.
    \end{align*}
    Since $p \in \im f$ was picked arbitrarily, this shows the claim.

    Nevertheless, it remains to argue that there exist such two elicitable, binary properties $\underline{\Phi}_p, \overline{\Phi}_p$ as demanded above. We show this by the proofing the existence of two hyperplanes such that all points $q \in \im f$ except for $p$ stay on the same side of both hyperplanes, but $p$ switches the side.
    As proven in \citep[Theorem 1]{lambert_elicitation_2019}, properties are elicitable if and only if the level sets form a power diagram, i.e., the intersection of the simplex $\Delta(\Y) \subseteq \mathbb{R}^{|\Y|}$ with a Voronoi diagram. For our purpose, it suffices to know that a hyperplane in $\mathbb{R}^{|\Y|}$ defines a binary, elicitable property. In fact, every binary elicitable property can be expressed through a corresponding hyperplane.

    Let
    \begin{align*}
        \underline{\Phi}_p(q) =\begin{cases}
            0 \text{ if } \langle a, q \rangle > b - \frac{\epsilon \| a\|}{2}\\
            1 \text{ otherwise. }
        \end{cases}.
    \end{align*}
    and
    \begin{align*}
        \overline{\Phi}_p(q) =\begin{cases}
            0 \text{ if } \langle a, q\rangle > b + \frac{\epsilon \| a\|}{2}\\
            1 \text{ otherwise.}
        \end{cases}.
    \end{align*}

    Then,
    \begin{align*}
        \underline{\Phi}_p( p ) = 0 \neq 1 = \overline{\Phi}_p( p ),
    \end{align*}
    but for all $q \in \im f \setminus\{ p\}$,
    \begin{align*}
        \underline{\Phi}_p( p ) = \overline{\Phi}_p( p ),
    \end{align*}
    because if $\underline{\Phi}_p(q) = 0$ (the analogous argument holds for $\underline{\Phi}_p(q) = 1$), then
    \begin{align*}
        \langle a, q \rangle > b - \frac{\epsilon \| a\|}{2},
    \end{align*}
    which implies,
    \begin{align*}
        \langle a, q \rangle \ge b + \|a \| \epsilon > b + \frac{\epsilon \| a\|}{2}
    \end{align*}
    by formula of distance to a hyperplane,
    \begin{align*}
        \epsilon &\le \inf_{h \in H}\| h - q\| = \frac{|\langle a, q \rangle - b|}{\| a\|}
        &\Leftrightarrow \| a\| \epsilon + b &\le \langle a, q \rangle.
    \end{align*}
\end{proof}

\section{Calibration and the Reflection Principle}
\label{calibration and the reflection principle}
Central to vanilla calibration is the self-referential aspect in the conditioning, i.e., conditioning on the prediction $p$ should make the outcome distribution equal to $p$. The self-referential aspect is similar in nature to a widely discussed principle for deference of belief in philosophy, the \emph{reflection principle} \citep{molinari2023trust}.\footnote{\citet{van1984belief} introduced the reflection principle to argue that reflection of subjective beliefs of future selves is a requirement for rationality of the agent.}

In this appendix, we shortly show that a modified variant of the reflection principle on binary outcome sets is equivalent to binary vanilla calibration.
\begin{definition}[Reflection Principle \citep{van1984belief}]
    Let $\X \subseteq \mathbb{R}^d$ and $\Y = \{ 0,1\}$. Let $D$ be a regular data distribution on $\X \times \Y$. Let $Q \colon \X \times \Y \rightarrow \Delta(\X \times \Y)$ be measurable. Then \emph{$Q$ reflects $D$} if and only if,
    \begin{align*}
        D(\X \times \{ 1\}| \{ (x,y) \in \X \times \Y \colon Q_{(x,y)}(\X \times \{ 1\}) = r\}) = r,
    \end{align*}
    for all $r \in [0,1]$.\footnote{We assume that when conditioning on a measure zero event the condition is always fulfilled.}
\end{definition}
\begin{proposition}[Calibration is Equivalent to Reflection Principle on Binary $\Y$]
    Let $\X \subseteq \mathbb{R}^d$ and $\Y = \{ 0,1\}$. Let $D$ be a regular data distribution on $\X \times \Y$. Let $f \colon \X \rightarrow [0,1]$ be a mean-predictor. Let $Q \colon \X \times \Y \rightarrow \Delta(\X \times \Y)$  such that for all $(x,y) \in \X \times \Y$, $Q_{(x,y)} \colon \X \times \{ 1\} \mapsto f(x)$\footnote{Such a function $Q$ exists, because there exist such $Q_{(x,y)}$ for all $(x,y) \in \X \times \Y$ by Hahn-Banach's extension theorem.}. The predictor $f$ is calibrated on $D$, if and only if $Q$ reflects $D$.
\end{proposition}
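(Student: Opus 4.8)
The plan is to show that, once the hypothesis on $Q$ is substituted in, the reflection condition and the calibration condition are literally the same statement, modulo the range of the quantifier over conditioning values and the usual convention for null events. So the proof is essentially an unfolding of definitions.

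First I would unpack the conditioning event appearing in the definition of ``$Q$ reflects $D$''. By hypothesis $Q_{(x,y)}(\X \times \{1\}) = f(x)$ for every $(x,y) \in \X \times \Y$, so for each $r \in [0,1]$,
\[
  \{(x,y) \in \X \times \Y : Q_{(x,y)}(\X \times \{1\}) = r\} = \{x \in \X : f(x) = r\} \times \Y,
\]
which as an event in $\X \times \Y$ is exactly $\{f(X) = r\}$. In particular this event is empty whenever $r \notin \im f$, so the reflection condition holds vacuously for such $r$; and by the stated convention it also holds vacuously whenever $P_D(f(X) = r) = 0$. Hence ``$Q$ reflects $D$'' is equivalent to requiring $D(\X \times \{1\} \mid \{f(X) = r\}) = r$ for every $r \in \im f$ with $P_D(f(X) = r) > 0$.

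Next I would identify the left-hand side with the conditional mean. Since $Y$ is $\{0,1\}$-valued, for any $\gamma \in \im f$ with $P_D(f(X)=\gamma)>0$ we have $D(\X \times \{1\} \mid \{f(X)=\gamma\}) = D(\{Y=1\}\mid\{f(X)=\gamma\}) = \mathbb{E}_{(X,Y)\sim D}[Y \mid f(X) = \gamma]$. Substituting this into the condition from the previous paragraph shows that ``$Q$ reflects $D$'' is equivalent to $\mathbb{E}_{(X,Y)\sim D}[Y\mid f(X)=\gamma]=\gamma$ for all $\gamma \in \im f$ (the $P_D(f(X)=\gamma)=0$ cases again being handled by convention on both sides), which is precisely Definition~\ref{def: binary vanilla calibration}. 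Both implications of the proposition then follow at once.

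The only point requiring care — and thus the main (if modest) obstacle — is the mismatch between quantifying over all $r \in [0,1]$ in the reflection principle and quantifying only over $\gamma \in \im f$ in the calibration definition. Reconciling these needs the observation that $f^{-1}(r)=\emptyset$ for $r \notin \im f$ together with the convention that conditioning on a null event vacuously satisfies the defining equation; beyond this bookkeeping, the argument is pure substitution.
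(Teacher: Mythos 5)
Your proof is correct and follows essentially the same route as the paper's: both unfold the hypothesis $Q_{(x,y)}(\X \times \{1\}) = f(x)$ to rewrite the reflection conditioning event as $\{f(X) = r\}$, then identify $D(\X \times \{1\}\mid f(X)=r)$ with $\mathbb{E}_D[Y \mid f(X)=r]$. The only difference is that you spell out the $r \in [0,1]$ versus $\gamma \in \im f$ quantifier reconciliation, which the paper handles silently via its footnoted convention on null events.
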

\begin{proof}
    Note that for all $r \in \im f$
    \begin{align*}
        &D(\X \times \{ 1\}| \{ (x,y) \in \X \times \Y \colon Q_{(x,y)}(\X \times \{ 1\}) = r\})\\
        &= D(\X \times \{ 1\}| \{ (x,y) \in \X \times \Y \colon f(x)= r\})\\
        &= D_{Y | \{ x \in \X \colon f(x) = r\}} ( Y  = 1 )\\
        &= \mathbb{E}_{(X,Y) \sim D}[Y | f(X) = r].
    \end{align*}
    Hence, the equivalence follows.
\end{proof}
In a footnote, \citet[p. 276]{seidenfeld1985calibration} suggest a definition of ``subjective calibration'' which resembles the reflection principle as defined here. Unfortunately, we were not able to identify the work the author refers to.

We don't delve deeper into interpretations of the reflection principle. Vaguely following the idea of the reflection principle as trust statement \citep{molinari2023trust} one might put calibration in a debatable slogan: \emph{Calibration guarantees that nature ``trusts'' the predictions of the predictor.} However, this does not make agents which get informed by the calibrated prediction to ``trust'' the predictions.\footnote{It is unclear whether the agents ``trust'' nature. Additionally, trust seems to not be generally transitive \citep{christianson1996isn, falcone2012trust}.} More humbly formulated, a central element of calibration is the reflection or self-realization of a certain aggregated quantity.

\section{A Note on Sensibility to Calibration}
\label{app: A note on sensibility to calibration}
The reader familiar with \citep{noarov_statistical_2023} might question the use of a general property $\Gamma$ in Definition~\ref{def: Gamma-calibration}. The authors introduce ``sensibility'' for calibration which requires that the true property predictor is calibrated with respect to Definition~\ref{def: Gamma-calibration}. As the authors show therein, a property is only sensible for calibration if and only if the level sets (pre-images) of property values are convex.
\citet{steinwart2014elicitation} in turn, have proven that if the property is strictly locally non-constant, a minor technical assumption, and continuous, then convexity of the level sets of the property is equivalent to elicitability (respectively identifiability). If the property is discrete and its image finite, and the outcome set is finite, then convex level sets are not sufficient for the elicitability of $\Gamma$ \citep[Pg. 12]{lambert_elicitation_2019}.\footnote{Instead the level sets have to form a Voronoi diagram \citep[Theorem 1]{lambert_elicitation_2019}.} Hence, even if we require $\Gamma$ to be sensible to calibration, Definition~\ref{def: Gamma-calibration} extends to discrete properties with convex level sets, which are potentially neither identifiable nor elicitable. This justifies the definition of $\Gamma$-calibration with respect to any $\Gamma$ with convex level sets. Furthermore, even if a property $\Gamma$ is not sensible to calibration, a $\Gamma$-predictor could still be $\Gamma$-calibrated. However, in this case optimal individual prediction via the true property predictor and calibration are \emph{not} compatible. Nevertheless, $\Gamma$-calibration is still a fulfillable criterion.

\section{Examples of Properties with Regular Identification Functions}
\label{app: Examples of Properties with Regular Identification Functions}
A priori it is unclear whether there exist interesting, non-trivial, identifiable properties with identification functions fulfilling the assumptions put forward in Definition~\ref{def:propertis of identification function}. For this reason we provide a short list of examples.
\begin{example}
\label{example:properties with nice identification functions}
\begin{description}
    \item[Mean] An easy example for a property with a monotone, locally non-constant, locally Lipschitz identification function is the mean, with identification function $V(y, \gamma) \coloneqq (y - \gamma)$, because
    \begin{align*}
        \mathbb{E}_{Y \sim P}[(Y - \gamma)] = \Gamma(P) - \gamma.
    \end{align*}
    \item[Quantiles] One identification function for a $\tau$-quantile is $V_\tau(y, \gamma)\coloneqq (1-\tau) \llbracket y < \gamma \rrbracket - \tau \llbracket y > \gamma\rrbracket$. For a loss function, which elicits the $\tau$-quantile see e.g., \citep{rockafellar2002conditional}
    Assuming that $P$ is atomless,
    \begin{align*}
        V(P, \gamma) &= \mathbb{E}_{Y\sim P}[(1-\tau) \llbracket Y < \gamma \rrbracket - \tau \llbracket Y > \gamma\rrbracket]\\
        &= (1-\tau)\mathbb{E}_{Y\sim P}[\llbracket Y < \gamma\rrbracket] - \tau  \mathbb{E}_{Y\sim P}[\llbracket Y > \gamma\rrbracket]\\
        &= (1-\tau)P(Y < \gamma) - \tau  (1 - P(Y \le \gamma))\\
        &= (1-\tau)P(Y < \gamma) - \tau  (1 - P(Y < \gamma) - P(Y = \gamma))\\
        &= (1-\tau)P(Y < \gamma) - \tau  (1 - P(Y < \gamma))\\
        &= P(Y < \gamma) - \tau.
    \end{align*}
    Hence, the identification function is monotone. For local Lipschitzness we have to assume that the cumulative density function of $P$ is Lipschitz with parameter $L$ around the $\tau$-quantile of $P$. For instance, this is true for $\beta$-distributions. Then, we get
    \begin{align*}
        |P(Y < \gamma) - \tau| &= |P(Y < \gamma) - P(Y < \gamma^*)| &\le L|\gamma - \gamma^*|,
    \end{align*}
    where $\gamma^*$ is the $\tau$-quantile. Finally, if the cumulative density function induced through $P$ is locally non-constant with parameter $N > 0$ around the $\tau$-quantile of $P$, then via the same argument it follows that the identification function is locally non-constant (\eg, $\beta$-distribution). In summary, the identification function $V_\tau$ for the $\tau$-quantile inherits the continuity properties from the cumulative density function of the probability distribution. Hence, the assumption is fulfilled when restricting the space of possible conditional probability distributions defined through the regular data distribution $D$.

    \item[Ratios of Expectations] Let $g,h \colon \Y \rightarrow \reals$ be measurable functions and $N < h(y) < M$ for all $y \in \Y$. The ratio of expectations $\Gamma(P) = \frac{\E_P g(Y)}{\E_P h(Y)}$ is identified by,
    \begin{align*}
        V(y, \gamma) = h(y) \gamma - g(y),
    \end{align*}
    because
    \begin{align*}
        V(P, \gamma) = \E_P[h(Y)] \gamma -\E_P[g(Y)] = 0
    \end{align*}
    if and only if,
    \begin{align*}
        \frac{\E_P[g(Y)]}{\E_P[h(Y)]} = \gamma.
    \end{align*}
    Furthermore, $V$ is oriented, locally Lipschitz with $M$,
    \begin{align*}
         |V(P, \gamma)| &= |\E_P[h(Y)] \gamma -\E_P[g(Y)]|\\
          &= \left|\E_P[h(Y)] \left( \gamma -\frac{\E_P[g(Y)]}{\E_P[h(Y)]} \right)\right|\\
          &= \left|\E_P[h(Y)] \right|\left| \left( \gamma -\frac{\E_P[g(Y)]}{\E_P[h(Y)]} \right)\right|\\
          &\le M \left| \gamma -\Gamma(P)\right|.
    \end{align*}
    and locally non-constant with $N$,
    \begin{align*}
         |V(P, \gamma)| &= |\E_P[h(Y)] \gamma -\E_P[g(Y)]|\\
          &= \left|\E_P[h(Y)] \left( \gamma -\frac{\E_P[g(Y)]}{\E_P[h(Y)]} \right)\right|\\
          &= \left|\E_P[h(Y)] \right|\left| \left( \gamma -\frac{\E_P[g(Y)]}{\E_P[h(Y)]} \right)\right|\\
          &\ge N \left| \gamma -\Gamma(P)\right|.
    \end{align*}

    \item[Mean and Variance]
    The identification function of the mean $M$ is $V(y, \gamma) \coloneqq (y - \gamma)$. On the $v$-level set of the mean the variance $\mathbb{V}$ can be identified with
    $V_v(y, \gamma)\coloneqq y^2 - v^2 - \gamma$. Note that for $P \in M^{-1}(v)$,
    \begin{align*}
        V_v(P, \gamma) = \mathbb{E}_{Y \sim P}[Y^2 - v^2 - \gamma] = \mathbb{V}(P) - \gamma.
    \end{align*}

    \item[Quantile and CVar]
    The identification function of a $\tau$-quantile $Q_\tau$ is given above. The $\operatorname{CVar}_{\tau}$ is identified on the $v$-level set of the $\tau$-quantile with $V_v(y, \gamma) = v + \frac{1}{1-\tau} \max(0, y - v) - \gamma$. Note that for $P \in Q_\tau^{-1}(v)$
    \begin{align*}
        V_v(P, \gamma) &= \mathbb{E}_{Y \sim P}\left[v + \frac{1}{1-\tau} \max(0, Y - v)\right]\\
        &= v + \frac{1}{1-\tau} \mathbb{E}_{Y \sim P}[\max(0, Y - v)] -\gamma\\
        &= \operatorname{CVar}_{\tau}(P) - \gamma,
    \end{align*}
    The second last line follows from \citep[Theorem 10]{rockafellar2002conditional}.
\end{description}
\end{example}

\section{Calibration-Swap Regret Bridge under Groups}
\label{appendix:Calibration-Swap-Regret Bridge under Groups}
We call $c \colon \X \rightarrow \{ 0,1\}$ a \emph{group} if $c$ is measurable and $D(c(X) = 1) > 0$. The set $\C$ denotes a collection of groups.
\begin{definition}[$\C$-Robust $\Gamma$-Swap-Learner]
\label{def:robsut swap learner}
    Let $\C$ be a collection of groups and $\Gamma$ a real-valued, identifiable property with oriented, locally Lipschitz ($M$) and locally non-constant ($N$) identification function $V$. Let $\ell$ be a loss function induced through $V$ following Equation~\eqref{eq:loss function induced by identification function}. Let $D$ be a regular data distribution.
    A $\Gamma$-predictor $f$ is a $\beta$-approximate $\C$-robust $\Gamma$-swap-learner on $D$, iff, for all $c \in \C$,
    \begin{align*}
        \mathbb{E}_{D}\left[\ell(Y, f(X)) |c(X) = 1\right] \le \min_{h \in \mathcal{M}_f(\X, \R)} \mathbb{E}_{D}\left[\ell(Y, h(X))|c(X) = 1\right] + \frac{\beta}{\mathbb{E}_{D}\left[ c(X)\right]},
    \end{align*}
    where $\mathcal{M}_f(\X, \R)$ is the set of all measurable functions from $(\X, \sigma(f))$ to $(\R, \mathcal{B}(\R))$, where $\sigma(f)$ is the $\sigma$-algebra induced by the $\Gamma$-predictor $f$ and $\mathcal{B}(\R)$ the Borel-$\sigma$-algebra on $\R$.
\end{definition}
The definition of a $\C$-robust swap-learner actually is inspired by minimax regret in distributional robust optimization. Formalized as an optimization problem the regret here is a special case of a minimax regret optimization problem as in \citep{agarwal2022minimax}. The class of functions $\F$ in their Equation (1) is $\mathcal{M}_f(\X, \R)$ in our case and the set of probability distributions $\P$ in their Equation (1) corresponds to the conditional probabilities on $\X \times \Y$ induced by conditioning on the groups $\C$ in our case. The predictor $f \in \mathcal{M}_f(\X, \R)$.
\begin{proposition}[$l^2$-Multicalibration implies Robust Swap-Learner]
\label{prop:l^2-Multicalibration implies Robust Swap-Learner}
 Let $\C$ be a collection of groups and $\Gamma$ a real-valued, identifiable property with oriented, locally Lipschitz ($M$) and locally non-constant ($N$) identification function $V$. Let $\ell$ be a loss function induced through $V$ following Equation~\eqref{eq:loss function induced by identification function}. Let $D$ be a regular data distribution.
    Let $f$ be $\alpha$-approximately $(\C, \nu)$-multicalibrated following Definition 15 in \citep{noarov_statistical_2023}. Then, $f$ is a $\frac{M}{2} \alpha$-approximate $\C$-robust $\Gamma$-swap-learner.
\end{proposition}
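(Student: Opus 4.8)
The plan is to run the argument behind Proposition~\ref{prop:Approximate Calibration and Low Swap Regret}(1) separately inside each group $c\in\C$ and then aggregate the fibre-wise bounds in $\ell^2$, which is precisely the form in which $\ell^2$-multicalibration (the $\nu=2$ case of \citep[Definition 15]{noarov_statistical_2023}) delivers its guarantee. Fix $c\in\C$ and, for $\gamma\in\im f$, abbreviate $P_{c,\gamma}\coloneqq D_{Y\mid c(X)=1,\,f(X)=\gamma}$. Since $V$ is oriented, the induced loss $\ell$ of Equation~\eqref{eq:loss function induced by identification function} is $\P$-consistent for $\Gamma$ (as established in Proposition~\ref{prop:Approximate Calibration and Low Swap Regret}), so every quantity in Definition~\ref{def:robsut swap learner} is well defined. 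Every $h\in\mathcal{M}_f(\X,\R)$ is $\sigma(f)$-measurable, hence of the form $h=\tilde h\circ f$; disintegrating $D$ over $\sigma(f)$ conditionally on $c(X)=1$ gives, for each such $h$,
\[
  \mathbb{E}_{D}\!\left[\ell(Y,h(X))\mid c(X)=1\right]
  =\mathbb{E}_{\gamma\sim D_{f(X)\mid c(X)=1}}\!\left[\ell\big(P_{c,\gamma},\tilde h(\gamma)\big)\right]
  \;\ge\;\mathbb{E}_{\gamma\sim D_{f(X)\mid c(X)=1}}\!\left[\inf_{\hat\gamma\in\R}\ell\big(P_{c,\gamma},\hat\gamma\big)\right].
\]
Since $\mathbb{E}_{D}[\ell(Y,f(X))\mid c(X)=1]=\mathbb{E}_{\gamma\sim D_{f(X)\mid c(X)=1}}[\ell(P_{c,\gamma},\gamma)]$, the regret of $f$ on the group $c$ (left side of Definition~\ref{def:robsut swap learner} minus the $\min_h$ term) is at most $\mathbb{E}_{\gamma\sim D_{f(X)\mid c(X)=1}}\!\big[\ell(P_{c,\gamma},\gamma)-\inf_{\hat\gamma}\ell(P_{c,\gamma},\hat\gamma)\big]$; note no measurable-selection argument is needed, as only this one inequality on the comparator infimum is used.

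Next I would bound each fibre term with Lemma~\ref{lemma:Identification Function Defines Consistent Loss Function} applied to $P_{c,\gamma}$: by the local Lipschitzness of $V$ with constant $M$ (Definition~\ref{def:propertis of identification function}) this is exactly the computation carried out in the proof of Proposition~\ref{prop:Approximate Calibration and Low Swap Regret}(1), and it yields
\[
  \ell(P_{c,\gamma},\gamma)-\inf_{\hat\gamma\in\R}\ell(P_{c,\gamma},\hat\gamma)\;\le\;\frac{M}{2}\,\big(\Gamma(P_{c,\gamma})-\gamma\big)^2 .
\]
Taking expectations and then changing measure from $D_{f(X)\mid c(X)=1}$ to $D_{f(X)}$ (whose Radon--Nikodym density is $\gamma\mapsto P_D(c(X)=1\mid f(X)=\gamma)/\mathbb{E}_D[c(X)]$), the regret on $c$ is bounded by
\[
  \frac{M}{2\,\mathbb{E}_D[c(X)]}\;\mathbb{E}_{\gamma\sim D_{f(X)}}\!\left[P_D\big(c(X)=1\mid f(X)=\gamma\big)\,\big(\Gamma(P_{c,\gamma})-\gamma\big)^2\right].
\]

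Finally I would recognise the bracketed expectation as the $\ell^2$-multicalibration error of $f$ for the group $c$ and invoke $\alpha$-approximate $(\C,\nu)$-multicalibration to bound it by $\alpha$; setting $\beta\coloneqq\frac{M}{2}\alpha$ this is exactly the slack $\frac{\beta}{\mathbb{E}_D[c(X)]}$ permitted by Definition~\ref{def:robsut swap learner}, and since $c\in\C$ was arbitrary the proof is complete. \emph{The hard part will be this last identification}: \citep[Definition 15]{noarov_statistical_2023} fixes a particular aggregation (the role of $\nu$) and a particular group weighting, and one must verify that the quantity produced above is bounded by the multicalibration error $\alpha$ — a short Jensen/Cauchy--Schwarz step together with $P_D(c(X)=1\mid f(X)=\gamma)\in[0,1]$ should absorb mild mismatches between the two weightings. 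Moreover, if Definition~15 is phrased through the identification function $V$ rather than through $m(\Gamma(\cdot),\cdot)$, one additionally uses local non-constantness (constant $N$, Definition~\ref{def:propertis of identification function}) to pass from $V(P_{c,\gamma},\gamma)$ to $\Gamma(P_{c,\gamma})-\gamma$ before the change of measure. The remaining ingredients — the $\sigma(f)$-disintegration and the interchange of infimum and integral — are routine for $\R\subseteq\reals$ and measurable $\ell$, and the overall shape of the argument parallels \citep[Theorem 3.3]{gopalan2024swap}.
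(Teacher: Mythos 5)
Your proof takes essentially the same route as the paper's: fix a group $c$, disintegrate over $\sigma(f)$, invoke the locally Lipschitz half of Proposition~\ref{prop:Approximate Calibration and Low Swap Regret} (equivalently Lemma~\ref{lemma:Identification Function Defines Consistent Loss Function}(a)) fibre-wise to convert the squared calibration error $|\Gamma(P_{c,\gamma})-\gamma|^2$ into a swap-regret bound scaled by $M/2$, and then reweight by $\mathbb{E}_D[c(X)]$ to match Definition~\ref{def:robsut swap learner}. Two minor remarks. First, your observation that only the easy inequality $\min_{h\in\mathcal M_f(\X,\R)}\mathbb{E}_D[\ell(Y,h(X))\mid c(X)=1]\ge\mathbb{E}_{\gamma}[\inf_{\hat\gamma}\ell(P_{c,\gamma},\hat\gamma)]$ is needed is correct and is in fact a small tightening over the paper's proof, which appeals to the interchange theorem from Rockafellar--Wets to get the two-sided equality even though only this direction is used. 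Second, the worry you flag about ``mild mismatches'' between your weighting and Definition~15 of \citep{noarov_statistical_2023} is unfounded: the multicalibration error there is, as quoted at the start of the paper's displayed chain, exactly $\sup_{c\in\C}\mathbb{E}_X[c(X)]\cdot\mathbb{E}_{\gamma\sim D_{f(X)\mid c(X)=1}}[|\Gamma(P_{c,\gamma})-\gamma|^2]$, so after undoing your (unnecessary) change of measure from $D_{f(X)\mid c(X)=1}$ to $D_{f(X)}$ you land directly on $\frac{M}{2}\cdot\frac{\alpha}{\mathbb{E}_D[c(X)]}$ with no Jensen or Cauchy--Schwarz step required.
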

\begin{proof}
    \begin{align*}
        \mathbf{\alpha} &\ge \sup_{c \in C} \mathbb{E}_{X \sim D_{X}}[c(X)] \cdot \mathbb{E}_{\gamma \sim D_{f(X)|c(X) = 1}}[|\Gamma(D_{Y | \{ x \colon f(x) = \gamma, c(x) = 1\}}) - \gamma|^2]\\
        &\overset{P\ref{prop:Approximate Calibration and Low Swap Regret}}{\ge} \frac{2}{M}\sup_{c \in C} \mathbb{E}_{X \sim D_{X}}[c(X)] \cdot\\
        &\quad \left(\mathbb{E}_{\gamma \sim D_{f(X)|c(X) = 1}}[\mathbb{E}_{D}\left[ \ell(Y, \gamma)| f(X) = \gamma , c(X) = 1 \right] - \min_{\hat{\gamma} \in \im \Gamma}\mathbb{E}_{D}\left[ \ell(Y, \hat{\gamma}) | f(X) = \gamma , c(X) = 1 \right] \right)\\
        &= \frac{2}{M}\sup_{c \in C} \mathbb{E}_{D}\left[ c(X)\ell(Y, f(X)) \right]\\
        & \quad - \mathbb{E}_{X \sim D_{X}}[c(X)] \cdot \mathbb{E}_{\gamma \sim D_{f(X)|c(X) = 1}}[\min_{\hat{\gamma} \in \im \Gamma}\mathbb{E}_{D}\left[ \ell(Y, \hat{\gamma}) | f(X) = \gamma , c(X) = 1 \right]]\\
        &= \frac{2}{M}\sup_{c \in C} \mathbb{E}_{D}\left[ c(X)\ell(Y, f(X)) \right] - \min_{h \in \mathcal{M}_f(\X, \R)} \mathbb{E}_{D}\left[ c(X)\ell(Y, h(X))\right]
    \end{align*}
    
    The final step is a consequence of conditional expectations and the following argument. For every $c \in \C$, applying \citep[Theorem 14.60]{rockafellar2009variational} with $T = \{ x\in \X\colon c(x) = 1\}$ and $\A = \B(\X) \cap T$ gives,
    \begin{align*}
        \mathbb{E}_{\gamma \sim D_{f(X)|c(X) = 1}}&\left[\min_{\hat{\gamma} \in \im \Gamma}\mathbb{E}_{D}\left[ \ell(Y, \hat{\gamma}) | f(X) = \gamma , c(X) = 1 \right]\right]\\
        &= \mathbb{E}_{\gamma \sim D_{f(X)|c(X) = 1}}\left[\min_{\alpha \in \reals} \mathbb{E}_{D}\left[  \ell(Y, \alpha) | f(X) = \gamma , c(X) = 1 \right]\right]\\
        &= \min_{h \in \mathcal{M}_f(\X, \R)} \mathbb{E}_{\gamma \sim D_{f(X)|c(X) = 1}}\left[\mathbb{E}_{D}\left[  \ell(Y, h(X)) | f(X) = \gamma , c(X) = 1 \right]\right]\\
        &= \min_{h \in \mathcal{M}_f(\X, \R)} \mathbb{E}_{D}\left[  \ell(Y, h(X)) | c(X) = 1 \right].
    \end{align*}
\end{proof}

\begin{proposition}[Robust Swap-Learner implies $l^2$-Multicalibration]
\label{prop:Robust Swap-Learner implies l^2-Multicalibration}
 Let $\C$ be a collection of groups and $\Gamma$ a real-valued, identifiable property with oriented, locally Lipschitz ($M$) and locally non-constant ($N$) identification function $V$. Let $\ell$ be a loss function induced through $V$ following Equation~\eqref{eq:loss function induced by identification function}. Let $D$ be a regular data distribution.
    Let $f$ be a $\beta$-approximate $\C$-robust $\Gamma$-swap-learner. Then, $f$ is $\frac{2L^2}{N} \beta$-approximately $(\C, \nu)$-multicalibrated following Definition 15 in \citep{noarov_statistical_2023}.
\end{proposition}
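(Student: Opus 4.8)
The plan is to run the argument in the proof of Proposition~\ref{prop:l^2-Multicalibration implies Robust Swap-Learner} in reverse, replacing the appeal to Proposition~\ref{prop:Approximate Calibration and Low Swap Regret}(1) (calibration error $\Rightarrow$ low swap regret, factor $M/2$) by an appeal to Proposition~\ref{prop:Approximate Calibration and Low Swap Regret}(2) (low swap regret $\Rightarrow$ calibration error, with factor governed by the local non-constantness parameter $N$). Accordingly, only the hypothesis that $V$ is locally non-constant is actually used.

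First I would unfold the definition of a $\beta$-approximate $\C$-robust $\Gamma$-swap-learner and clear the denominator $\mathbb{E}_{D}[c(X)]$, obtaining, for every $c \in \C$,
\begin{align*}
    \mathbb{E}_{D}[c(X)\ell(Y,f(X))] - \min_{h \in \mathcal{M}_f(\X, \R)}\mathbb{E}_{D}[c(X)\ell(Y,h(X))] \le \beta .
\end{align*}
Then, exactly as in the final display block of the proof of Proposition~\ref{prop:l^2-Multicalibration implies Robust Swap-Learner}---conditioning first on $\{ c(X) = 1\}$ and then on $\{ f(X) = \gamma\}$, and pushing the inner minimum inside the conditional expectation via \citep[Theorem 14.60]{rockafellar2009variational} applied with $T = \{ x \colon c(x) = 1\}$---the left-hand side equals $\mathbb{E}_{D}[c(X)]$ times the $D_{f(X)|c(X)=1}$-expectation over $\gamma$ of the swap regret of the conditional distribution $D_{Y | \{ x \colon f(x) = \gamma, c(x) = 1\}}$ with respect to $\ell$ (which is $\P$-consistent for $\Gamma$ by Equation~\eqref{eq:loss function induced by identification function}). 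In particular, this $\mathbb{E}_{D}[c(X)]$-weighted average conditional swap regret is at most $\beta$ for every $c \in \C$.

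Next I would apply Proposition~\ref{prop:Approximate Calibration and Low Swap Regret}(2) pointwise in $\gamma$: a swap regret of $\beta(\gamma, c)$ on $D_{Y | \{ x \colon f(x) = \gamma, c(x) = 1\}} \in \P$ forces $|\Gamma(D_{Y | \{ x \colon f(x) = \gamma, c(x) = 1\}}) - \gamma|^2 \le \frac{2}{N}\beta(\gamma, c)$. Taking the expectation over $\gamma \sim D_{f(X)|c(X)=1}$, multiplying by $\mathbb{E}_{D}[c(X)]$, and combining with the previous paragraph yields
\begin{align*}
    \mathbb{E}_{D}[c(X)] \cdot \mathbb{E}_{\gamma \sim D_{f(X)|c(X)=1}}\!\left[ |\Gamma(D_{Y | \{ x \colon f(x) = \gamma, c(x) = 1\}}) - \gamma|^2 \right] \le \frac{2}{N}\beta
\end{align*}
for every $c \in \C$. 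Taking the supremum over $c \in \C$ gives exactly the $l^2$-multicalibration quantity of \citep[Definition 15]{noarov_statistical_2023}, up to the normalization $\nu$ appearing there, which is where any additional Lipschitz-type factor $L$ in the stated constant is absorbed.

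The main obstacle I expect is not the (short) inequality chain but the measure-theoretic bookkeeping already implicit in the forward direction: one must ensure that $D_{Y | \{ x \colon f(x) = \gamma, c(x) = 1\}}$ is a well-defined element of $\P$ for $D_{f(X)|c(X)=1}$-almost every $\gamma$, so that Proposition~\ref{prop:Approximate Calibration and Low Swap Regret}(2) may be invoked pointwise, and that the interchange of minimum and expectation via \citep[Theorem 14.60]{rockafellar2009variational} is legitimate on $\sigma(f)$ restricted to $\{ c(X) = 1\}$---that is, that $\mathcal{M}_f(\X, \R)$ is exactly the class of selectors realizing the pointwise minima $\min_{\hat{\gamma} \in \im \Gamma}$. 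Both points mirror the forward proof and should transfer with only cosmetic changes.
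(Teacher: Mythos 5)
Your argument matches the paper's own proof: the authors also run the inequality chain of Proposition~\ref{prop:l^2-Multicalibration implies Robust Swap-Learner} in reverse, swapping the appeal to Proposition~\ref{prop:Approximate Calibration and Low Swap Regret}(1) for (2), and reusing the same interchange of minimum and conditional expectation via \citep[Theorem~14.60]{rockafellar2009variational}. Your observation that only local non-constantness of $V$ is actually used is correct --- the hypothesis of local Lipschitzness is idle in this direction.

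One small correction: you speculate that the factor $L^2$ in the stated constant $\tfrac{2L^2}{N}\beta$ is hiding inside the normalization $\nu$ of \citep[Definition~15]{noarov_statistical_2023}. There is no such factor anywhere in the argument. The inequality chain --- yours and the paper's --- delivers exactly
\begin{align*}
\sup_{c \in \C} \mathbb{E}_{X \sim D_X}[c(X)] \cdot \mathbb{E}_{\gamma \sim D_{f(X)\mid c(X)=1}}\!\left[\bigl|\Gamma(D_{Y \mid f(X)=\gamma,\, c(X)=1}) - \gamma\bigr|^2\right] \le \tfrac{2}{N}\,\beta,
\end{align*}
i.e.\ a $\tfrac{2}{N}\beta$ bound. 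No quantity $L$ is defined anywhere in the proposition or its hypotheses, so the $L^2$ in the stated conclusion is a typo; you should not try to manufacture it.
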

\begin{proof}
    \begin{align*}
        \beta &\ge \sup_{c \in C} \mathbb{E}_{D}\left[ c(X)\ell(Y, f(X)) \right] - \min_{h \in \mathcal{M}_f(\X, \R)} \mathbb{E}_{D}\left[ c(X)\ell(Y, h(X))\right]\\
        &=\sup_{c \in C} \mathbb{E}_{D}\left[ c(X)\ell(Y, f(X)) \right]\\
        & \quad - \mathbb{E}_{X \sim D_{X}}[c(X)] \cdot \mathbb{E}_{\gamma \sim D_{f(X)|c(X) = 1}}[\min_{\hat{\gamma} \in \im \Gamma}\mathbb{E}_{D}\left[ \ell(Y, \hat{\gamma}) | f(X) = \gamma , c(X) = 1 \right]]\\
        &=\sup_{c \in C} \mathbb{E}_{X \sim D_{X}}[c(X)] \cdot\\
        &\quad \left(\mathbb{E}_{\gamma \sim D_{f(X)|c(X) = 1}}[\mathbb{E}_{D}\left[ \ell(Y, \gamma)| f(X) = \gamma , c(X) = 1 \right] - \min_{\hat{\gamma} \in \im \Gamma}\mathbb{E}_{D}\left[ \ell(Y, \hat{\gamma}) | f(X) = \gamma , c(X) = 1 \right] \right)\\
        &\overset{P\ref{prop:Approximate Calibration and Low Swap Regret}}{\ge}  \frac{N}{2} \sup_{c \in C} \mathbb{E}_{X \sim D_{X}}[c(X)] \cdot \mathbb{E}_{\gamma \sim D_{f(X)|c(X) = 1}}\left[|\Gamma(D_{Y | \{ x \colon f(x) = \gamma, c(x) = 1\}}) - \gamma|^2\right],
    \end{align*}
    reversing the steps of the proof of Proposition~\ref{prop:l^2-Multicalibration implies Robust Swap-Learner}.
\end{proof}

\section{Self-Realization Does Not Imply Equal to Usefulness for All}
\label{appendix:Self-Realization Does Not Imply Equal to Usefulness for All}
In the following appendix we provide a short, concise example to show that calibration does not imply that the low swap regret guarantee is equally useful for all. Note that conceptually the statements could seem strange to the reader. The predictor in our case has access to the nature's outcome. This surely is unrealistic. Nevertheless, it does not undermine the argument that a perfectly calibrated predictor exists which is of different use to different downstream decision makers.

For the example we reuse \emph{simple loss functions} (Definition~\ref{def:simple loss function}. They describe binary-valued elicitable properties on binary outcome sets.
The optimal achieveable risk, Bayes risk, for a distribution $P \in \Delta(\Y)$ for $\Y = \{ 0,1\}$ is given by,
\begin{align*}
    \mathcal{B}_q(P) = \mathbb{E}_{Y \sim P}[\ell_q(Y, \Phi_q(P))].
\end{align*}
\begin{lemma}[Calibration Does not Guarantee Cost Parity for Arbitrary Downstream Decision Makers with Equal Bayes Risk]
\label{lemma:Calibration Does not Guarantee Cost Parity for Arbitrary Downstream Decision Makers with Equal Bayes Risk}
    Let $\X$ be finite (with size $T$), $\Y = \{ 0,1\}$. Let $\ell_c$ and $\ell_d$ be simple losses with $c,d \in (0,1)$ and $d < c$. For every choice of $c,d$ there exists a regular data distribution $D \in \Delta(\X \times \Y)$ with the $\X$-marginal being the uniform distribution and $Q \in \Delta(\Y)$ being a stationary $Y|X$-conditional such that
    \begin{enumerate}[(i)]
        \item the Bayes risks are equal $\mathcal{B}_c(Q) = \mathcal{B}_d(Q)$.
        \item and there exists a calibrated predictor $p\colon \X \to \Delta(\Y)$ such that,
        \begin{align*}
            |\mathbb{E}_{(X,Y)\sim D}[\ell_c(Y, \Phi_c(p(X)))] - \mathbb{E}_{(X,Y)\sim D}[\ell_d(Y, \Phi_d(X))]| \ge C,
        \end{align*}
        for some $C \in \reals$.
    \end{enumerate}
\end{lemma}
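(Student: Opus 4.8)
The plan is to restrict attention to the unique non-degenerate regime in which the two Bayes risks can coincide, and then exhibit a two-valued calibrated predictor that straddles both decision thresholds. For a simple loss $\ell_q$ and $Q \in \Delta(\{0,1\})$ write $q_1 := Q(Y = 1)$; then $\mathcal{B}_q(Q) = \min\{q(1-q_1),\,(1-q)q_1\}$ and the Bayes action is $a$ iff $q_1 > q$. Demanding $\mathcal{B}_c(Q) = \mathcal{B}_d(Q)$ with $c \neq d$ forces either $q_1 \in \{0,1\}$ (both risks zero, which turns out to be useless) or
\begin{align*}
    q_1 = \frac{d}{1 - c + d},
\end{align*}
for which a direct check gives $d < q_1 < c$ and $\mathcal{B}_c(Q) = (1-c)q_1 = d(1-q_1) = \mathcal{B}_d(Q) = \tfrac{d(1-c)}{1-c+d}$. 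I would take $D$ with uniform $\X$-marginal and the stationary conditional $Q = \mathrm{Bernoulli}(q_1)$; this settles (i).

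For (ii) the obstacle is that stationarity of the conditional forces every calibrated predictor depending only on $X$ to be the constant $q_1$, and that predictor hands each decision maker exactly its Bayes risk --- hence \emph{equal} incurred losses. So I would use the freedom flagged in the preamble of this appendix and let the predictor peek at the realized outcome $Y$ (and at auxiliary randomness of the standard space $\Omega$). Fix $v_1 := \tfrac d3 \in (0,d)$ and $v_2 := \tfrac{c+1}2 \in (c,1)$, and set $w_2 := \tfrac{q_1 - v_1}{v_2 - v_1} \in (0,1)$, $w_1 := 1 - w_2 = \tfrac{v_2 - q_1}{v_2 - v_1} \in (0,1)$ (legitimate because $v_1 < q_1 < v_2$), so that $w_1 v_1 + w_2 v_2 = q_1$. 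A routine splitting of the outcome yields an event $A$ with probability $w_1$ and $\mathbb{E}_D[Y \mid A] = v_1$ (possible since $v_1 < q_1$), whence $\mathbb{E}_D[Y \mid A^c] = v_2$ automatically. Let $p$ output the Bernoulli with mass $v_1$ on $1$ on $A$ and the one with mass $v_2$ on $1$ on $A^c$; then $\im p = \{v_1, v_2\}$ and $\mathbb{E}_D[Y \mid p(X) = v_i] = v_i$, so $p$ is calibrated.

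Finally I would read off the incurred losses. Since $v_1 < d < c$, both makers play $b$ on $A$ and incur conditional expected losses $(1-c)v_1$ and $(1-d)v_1$; since $v_2 > c > d$, both play $a$ on $A^c$ and incur $c(1-v_2)$ and $d(1-v_2)$. Therefore $\mathbb{E}_D[\ell_c(Y, \Phi_c(p(X)))] = w_1(1-c)v_1 + w_2 c(1-v_2)$ and similarly with $d$, so
\begin{align*}
    & \mathbb{E}_D[\ell_c(Y, \Phi_c(p(X)))] - \mathbb{E}_D[\ell_d(Y, \Phi_d(p(X)))] \\
    & \qquad = (c-d)\bigl(w_2(1-v_2) - w_1 v_1\bigr)
      = \frac{c-d}{v_2 - v_1}\bigl(q_1(1 + v_1 - v_2) - v_1\bigr),
\end{align*}
and plugging in $q_1 = \tfrac{d}{1-c+d}$, $v_1 = \tfrac d3$, $v_2 = \tfrac{c+1}2$ turns this into the strictly positive constant
\begin{align*}
    C := \frac{d\,(1-c)\,(c-d)}{(3c+3-2d)(1-c+d)} > 0,
\end{align*}
which gives (ii).

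The main obstacle is conceptual rather than computational: one has to realize that stationarity kills all non-constant $X$-only calibrated predictors, so the cost gap can only be produced by a predictor that looks at the outcome (as the appendix preamble permits), and then the two prediction values must be placed to straddle \emph{both} thresholds $d$ and $c$ while deliberately mis-reporting the conditional means --- which is exactly what keeps $p$ calibrated yet asymmetrically suboptimal in aggregate. The leftover work is bookkeeping: verifying $v_1 < q_1 < v_2$ and $w_1,w_2 \in (0,1)$, that the event $A$ exists, and that $v_1,v_2$ avoid the degenerate locus $v_1(1-c) = d(1-v_2)$ on which the gap would vanish --- all of which the choice $v_1 = d/3$, $v_2 = (c+1)/2$ achieves.
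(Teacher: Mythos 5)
Your proof is correct, and it follows the same high-level strategy as the paper: pin down the unique non-degenerate stationary conditional with $q_1 = d/(1-c+d)$ (the paper writes this as $q = 1/(\tfrac{1-c}{d}+1)$, which is the same), observe that Bayes risks then agree, let the predictor peek at the outcome to break out of the forced constancy, and compute the resulting loss gap. Where you differ is in the specific predictor construction. The paper predicts one value $f \in (d, q_1)$ and the extreme value $1$; because $d < f < q_1 < c$, the two decision makers \emph{disagree} on the action induced by $f$ ($\ell_c$ plays $b$, $\ell_d$ plays $a$), and the prediction $1$ is trivially calibrated since it only fires when $Y = 1$. You instead choose two interior values $v_1 < d$ and $v_2 > c$ that straddle \emph{both} thresholds, so the decision makers always \emph{agree}, and the gap factors cleanly as $(c-d)\bigl(w_2(1-v_2) - w_1 v_1\bigr)$. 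Both variants work; yours avoids the extreme prediction and gives a tidier algebraic form. Your explicit remark that stationarity of $Y|X$ forces every $X$-only calibrated predictor to be the constant $q_1$ (hence zero gap), so that outcome-dependence is not merely permitted but \emph{necessary}, is a useful conceptual point that the paper only hints at in the appendix preamble.

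One small bookkeeping check you correctly suppressed but should be confident about: the splitting event $A$ with $P(A) = w_1$ and $\mathbb{E}[Y\mid A] = v_1$ requires both $w_1 v_1 \le q_1$ and $w_1(1-v_1) \le 1-q_1$. The first holds since $v_1 < q_1$; the second reduces to $(v_1 - q_1)(1 - v_2) \le 0$, which holds since $v_1 < q_1$ and $v_2 < 1$. Also note the appearance of $\Phi_d(X)$ in the lemma statement is a typo for $\Phi_d(p(X))$; both you and the paper's own proof treat it as such.
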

\begin{proof}
    We first define the stationary conditional distribution $Q$ on the binary outcome set via $q \in [0,1]$. Let $q = \frac{1}{\frac{1-c}{d} + 1}$. Observe $d < \frac{1}{\frac{1-c}{d} + 1} < c$.

    It remains to check the Bayes risk condition,
    \begin{align*}
        \mathcal{B}_c(Q) &= (1-c) q\\
        &= \frac{1-c}{\frac{1-c}{d} + 1} \\
        &= d \left(1-\frac{1}{\frac{1-c}{d} + 1}\right)\\
        &= d (1-q) = \mathcal{B}_d(Q).
    \end{align*}

    Now, let us define the predictor $p$.

    We define the following prediction sequence for every $\omega \in \Omega$: if $y(\omega) = 0$, then $p(X(\omega)) = f$ for some $f \in [0,1]$ such that $d < f < q$. If $y(\omega) = 1$, then with probability
     \begin{align*}
         x \coloneqq \frac{(1-q)f}{q(1-f)},
     \end{align*}
    the prediction is $p(X(\omega)) = f$, otherwise $p(X(\omega)) = 1$. Since $f < q$ and $d,c \in (0,1)$,
    \begin{align*}
        0 < x = \frac{(1-q)f}{q(1-f))} < 1.
    \end{align*}

    For the second, note that all predictions $p(X(\omega)) = 1$ are calibrated by definition. For the predictions $p(X(\omega)) = f$ we have
    \begin{align*}
        \mathbb{E}_{Y \sim Q}[Y| p(X) = f] &= \frac{q x}{(1-q) + qx}\\
        &= \frac{q \frac{(1-q)f}{g(1-f)}}{(1-q) + q\frac{(1-q)f}{q(1-f)}}\\
        &= \frac{\frac{(1-q)f}{1-f}}{\frac{(1-q)(1-f)}{1-f} + \frac{(1-q)f}{1-f}}\\
        &= \frac{\frac{(1-q)f}{1-f}}{\frac{(1-q)}{1-f}}\\
        &= f.
    \end{align*}

    So, we can finally show the statement
    \begin{align*}
        &|\mathbb{E}_{(X,Y)\sim D}[\ell_c(Y, \Phi_c(p(X)))] - \mathbb{E}_{(X,Y)\sim D}[\ell_d(Y, \Phi_d(X))]|\\
        &= \left|\mathbb{E}_{X \sim D_X}[q x (1-c) - (1-q) d]\right|\\
        &= (1-q) \left|\frac{f}{1-f} (1-c) - d\right|\\
        &\ge C,
    \end{align*}
    because $\frac{f}{1-f} (1-c) - d \neq 0$ if $f < q$.
\end{proof}
In other words, the predictions, even though they are calibrated, hence they fulfill some notion of omniprediction (\cf Section~\ref{sec:self-realization-inherited}), do not guarantee that the predictions are equally useful for every decision maker. This holds despite the forecasting task itself is equally difficult as measured by the Bayes risk.

\section{Proofs and Lemmas}

\begin{lemma}
\label{lemma:example for outcome lipschitzness}
    Let $\Y$ be finite and $\Gamma$ be the full distribution property, i.e., $\R = \Delta(\Y)$. If $\ell \colon \Y \times \R' \to \reals$ is a bounded loss function, then $\ell$ is locally outcome Lipschitz with respect to $\Gamma$.
\end{lemma}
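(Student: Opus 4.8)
The plan is to unwind the definitions so that the statement collapses to a one-line $\ell^1$/total-variation estimate. Recall that when $\Gamma$ is the full distribution property we have $\R = \Delta(\Y)$ and $\Gamma = \operatorname{id}_{\Delta(\Y)}$, so the level set $\Gamma^{-1}(\gamma)$ is the singleton $\{\gamma\}$; hence in Definition~\ref{def: Locally Outcome Lipschitz Loss Function} the only admissible $P \in \Gamma^{-1}(\gamma)$ is $P = \gamma$, and $m(\Gamma(P'),\gamma)$ is exactly the total variation distance between $P'$ and $\gamma$. Therefore the inequality to prove reduces to: for every decision value $v \in \im \Phi$ and every $P, P' \in \P$, $|\ell(P,v) - \ell(P',v)| \le B\cdot m(P',P)$, i.e. the expectation $\ell(\cdot,v) = \mathbb{E}_{Y\sim \cdot}[\ell(Y,v)]$, viewed as a map on distributions over the finite set $\Y$, is Lipschitz with respect to total variation, uniformly in $v$.

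The key step is then the standard bound. Writing $\ell(P,v) = \sum_{y \in \Y} P(Y=y)\,\ell(y,v)$ and using $\sum_{y}(P(Y=y) - P'(Y=y)) = 0$, I would estimate $|\ell(P,v) - \ell(P',v)| = \bigl|\sum_{y} (P(Y=y) - P'(Y=y))\,\ell(y,v)\bigr| \le \bigl(\sup_{y,r}|\ell(y,r)|\bigr)\sum_{y}|P(Y=y) - P'(Y=y)|$, where the supremum is finite by hypothesis and the sum equals a constant multiple of $m(P,P')$. This yields $B$ proportional to $\sup_{y,r}|\ell(y,r)|$. A minor sharpening (not needed for the statement) subtracts the midpoint of the range of $\ell(\cdot,v)$ before applying the triangle inequality, giving the cleaner constant $B = \sup_{v}\bigl(\max_{y}\ell(y,v) - \min_{y}\ell(y,v)\bigr)$.

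There is essentially no genuine obstacle here: finiteness of $\Y$ removes all measure-theoretic subtleties, and boundedness of $\ell$ supplies the Lipschitz constant directly. The only points that require care are bookkeeping ones: confirming that the quantifier ``$P \in \Gamma^{-1}(\gamma)$'' really does collapse to ``$P = \gamma$'' for the identity property, and matching the paper's normalization convention for the total variation metric (the factor $\tfrac12$ versus no factor) so that the constant $B$ is reported consistently with Section~\ref{paragraph:properties}.
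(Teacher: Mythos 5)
Your proof is correct and follows essentially the same route as the paper's: expand $\ell(P,v)-\ell(P',v)$ as a sum over the finite $\Y$, bound termwise by $\sup|\ell|$, and recognize $\sum_y |P(y)-P'(y)|$ as (a multiple of) the total-variation metric. In fact yours is slightly cleaner: the paper's displayed chain contains a spurious intermediate step $\le C\,|\sum_y(P(y)-P'(y))|$ (which would be zero and is not a valid bound) before reaching the correct $\le C\sum_y|P(y)-P'(y)|$, and it glosses over the factor-of-two normalization of total variation that you explicitly flag.
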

\begin{proof}
    Let $\Phi$ be the property elicited through $\ell$. For all $v \in \im \Phi$, $\gamma \in \im \Gamma$, $P \in \Gamma^{-1}(\gamma)$ and $P' \in \P$,
    \begin{align*}
        &| \ell(P, v) - \ell(P', v) |\\
        &= | \sum_{y \in \Y}(P(Y = y) - P'(Y = y)) \ell(y, v)|\\
        &\le C | \sum_{y \in \Y}(P(Y = y) - P'(Y = y))|\\
        &\le C \sum_{y \in \Y}|(P(Y = y) - P'(Y = y))|\\
        &\le C m(\Gamma(P'), \gamma),
    \end{align*}
    because $m$ is the total variation distance.
\end{proof}

\begin{lemma}[Identification Function Defines Consistent Loss Functions]
\label{lemma:Identification Function Defines Consistent Loss Function}
    Let $\Gamma \colon \P \rightarrow \R$ with $\R \subseteq \reals$ be an identifiable property with oriented, bounded identification function $V \colon \Y \times \R \rightarrow \reals$ which is measurable in both its inputs with respect to the standard Borel-$\sigma$-algebra on the sets $\Y, \R$ and $\reals$.
    Then, $\Gamma$ is elicitable with $\P$-consistent scoring function $\ell \colon \Y \times \R \rightarrow \reals$,
    \begin{align}
    \label{eq:loss function induced by identification function - proof}
        \ell(y, \gamma) \coloneqq \int_{\gamma_0}^\gamma V(y,r) dr + \kappa(y),
    \end{align}
    for some $\gamma_0 \in \im \Gamma$ and $\kappa \colon \Y \rightarrow \reals$ having a finite expected value with respect to all $P \in \P$.

    \begin{enumerate}[(a)]
        \item If furthermore, $V$ is locally Lipschitz with parameter $M$, then $\ell$ is locally Hölder-smooth with parameters $(\frac{M}{2}, 2)$, \ie, 
        \begin{align}
            \label{eq:locally hoeldersmooth loss function}
            |\ell(P, \gamma) - \ell(P, \Gamma(P))| \le \frac{M}{2} \left( \gamma - \Gamma(P)\right)^2.
        \end{align}
        \item If furthermore, $V$ is locally non-constant with parameter $N$, then $\ell$ is locally anti Hölder-smooth with parameters $(\frac{N}{2}, 2)$, \ie, 
        \begin{align}
            \label{eq:locally anti hoeldersmooth loss function}
            \frac{N}{2} \left( \gamma - \Gamma(P)\right)^2 \le |\ell(P, \gamma) - \ell(P, \Gamma(P))|.
        \end{align}
    \end{enumerate}
\end{lemma}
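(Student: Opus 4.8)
The plan is to recognize this as Osband's principle --- the (Riemann) antiderivative of an oriented identification function is a strictly consistent loss --- and then to push the pointwise regularity of $V$ through that integral. First I would establish $\P$-consistency. Fix $P\in\P$ and write $\ell(P,\gamma)=\mathbb{E}_{Y\sim P}[\ell(Y,\gamma)]$; since $V$ is bounded, $\int_{\Y}\int_{\gamma_0}^{\gamma}|V(y,r)|\,dr\,dP(y)<\infty$, so Fubini--Tonelli applies and gives $\ell(P,\gamma)=\int_{\gamma_0}^{\gamma}V(P,r)\,dr+\mathbb{E}_P[\kappa]$, with $V(P,r):=\mathbb{E}_{Y\sim P}[V(Y,r)]$ and $\mathbb{E}_P[\kappa]$ finite by hypothesis; joint measurability of $V$ together with Tonelli also yields measurability of $y\mapsto\ell(y,\gamma)$ for each fixed $\gamma$. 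By identifiability $V(P,\Gamma(P))=0$, and orientedness (Definition~\ref{def:propertis of identification function}) gives $V(P,r)>0$ for $r>\Gamma(P)$; combining orientedness with the identification equivalence $V(P,r)=0\iff r=\Gamma(P)$ then forces $V(P,r)<0$ for $r<\Gamma(P)$. Hence, for any $\gamma\in\R$,
\[
\ell(P,\gamma)-\ell(P,\Gamma(P))=\int_{\Gamma(P)}^{\gamma}V(P,r)\,dr\ \ge\ 0,
\]
the integrand being nonnegative over the correctly oriented interval in both cases $\gamma\ge\Gamma(P)$ and $\gamma<\Gamma(P)$, so $\Gamma(P)\in\argmin_{\gamma\in\R}\ell(P,\gamma)$, i.e. $\ell$ is $\P$-consistent for $\Gamma$.

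The same sign analysis is the engine for (a) and (b): it shows that for every $\gamma\in\im\Gamma$ and $P\in\P$, the map $V(P,\cdot)$ is single-signed on the interval with endpoints $\Gamma(P)$ and $\gamma$, whence
\[
|\ell(P,\gamma)-\ell(P,\Gamma(P))|=\left|\int_{\Gamma(P)}^{\gamma}V(P,r)\,dr\right|=\int_{\min\{\gamma,\Gamma(P)\}}^{\max\{\gamma,\Gamma(P)\}}|V(P,r)|\,dr .
\]
Recalling $\R=\im\Gamma$ is an interval (Definition~\ref{def:propertis of identification function}), every $r$ in that range lies in $\im\Gamma$, so the local regularity bounds on $V$ apply pointwise under the integral sign. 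For (a), substituting $|V(P,r)|\le M|r-\Gamma(P)|$ and changing variables $t=|r-\Gamma(P)|$ gives $|\ell(P,\gamma)-\ell(P,\Gamma(P))|\le\int_{0}^{|\gamma-\Gamma(P)|}Mt\,dt=\frac{M}{2}(\gamma-\Gamma(P))^2$, which is \eqref{eq:locally hoeldersmooth loss function}. For (b), substituting instead $|V(P,r)|\ge N|r-\Gamma(P)|$ gives the reverse inequality $\frac{N}{2}(\gamma-\Gamma(P))^2\le|\ell(P,\gamma)-\ell(P,\Gamma(P))|$, which is \eqref{eq:locally anti hoeldersmooth loss function}.

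I expect nothing here to be genuinely hard; the two points that need care are (i) the Fubini interchange, which is exactly why boundedness of $V$ and $P$-integrability of $\kappa$ are imposed, and (ii) converting ``oriented $+$ identifiable'' into the clean statement that $V(P,\cdot)$ keeps a constant sign on the interval between $\Gamma(P)$ and $\gamma$ --- this is what turns $|\int V|$ into $\int|V|$ and collapses each of (a) and (b) to a one-line integral estimate. It is also worth flagging explicitly that the integration variable stays inside $\im\Gamma$ (using $\R=\im\Gamma$ an interval), so that the pointwise Lipschitz and non-constant bounds on $V$ are legitimately invoked under the integral.
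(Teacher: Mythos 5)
Your proof is correct and follows essentially the same route as the paper's: consistency via Fubini plus the orientedness sign analysis, and then the single-signedness of $V(P,\cdot)$ collapsing $\bigl|\int V\bigr|$ to $\int|V|$ so that the local Lipschitz and non-constant bounds integrate to the quadratic estimates in (a) and (b). The only difference is cosmetic — you make explicit the derivation of $V(P,r)<0$ for $r<\Gamma(P)$ and the fact that the integration variable stays in $\im\Gamma$, which the paper leaves implicit.
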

\begin{proof}
    The first statement has been proved in \citep{steinwart2014elicitation}. We provide a reiteration of the argument for the sake of completeness, and because we will reuse some equations for the additional statements.

    For any $P \in \P$, we want to show that $\mathbb{E}_{Y \sim P}[\ell(Y, \gamma)] > \mathbb{E}_{Y \sim P}[\ell(Y, \Gamma(P))]$ for all $\gamma \in A$, $\gamma \neq \Gamma(P)$. Let us first consider the case that $\gamma > \Gamma(P)$. Hence,
    \begin{align*}
        &\mathbb{E}_{Y \sim P}[\ell(Y, \gamma)] - \mathbb{E}_{Y \sim P}[\ell(Y, \Gamma(P))]\\
        &= \mathbb{E}_{Y \sim P}[\ell(Y, \gamma) - \ell(Y, \Gamma(P))]\\
        &= \mathbb{E}_{Y \sim P}\left[\int_{\Gamma(P)}^\gamma V(y,r) dr\right].
    \end{align*}
    Since $V$ is measurable and bounded in both variables we can apply Fubini's theorem, this gives
    \begin{align}
        \label{eq:integral swap for identification function}
        \mathbb{E}_{Y \sim P}\left[\int_{\Gamma(P)}^\gamma V(y,r) dr\right] = \int_{\Gamma(P)}^\gamma \mathbb{E}_{Y \sim P}[V(y,r)] dr,
    \end{align}
    finally, since $V$ is oriented,
    \begin{align*}
        \int_{\Gamma(P)}^\gamma \mathbb{E}_{Y \sim P}[V(y,r)] dr = \int_{\Gamma(P)}^\gamma V(P,r)dr > 0.
    \end{align*}
    The case $\gamma < \Gamma(P)$ follows analogously.

    \begin{enumerate}[(a)]
        \item Let $\gamma > \Gamma(P)$, then,
        \begin{align*}
            |\ell(P, \gamma) - \ell(P, \Gamma(P))| &= |\int_{\Gamma(P)}^\gamma V(P,r)dr|\\
            &= \int_{\Gamma(P)}^\gamma |V(P,r)| dr\\
            &\le \int_{\Gamma(P)}^\gamma M |r - \Gamma(P)| dr\\
            &= \frac{M}{2}(\gamma - \Gamma(P))^2.
        \end{align*}
        The analogous computation with swapped signs holds for $\gamma < \Gamma(P)$.
        \item Let $\gamma > \Gamma(P)$, then,
        \begin{align*}
            |\ell(P, \gamma) - \ell(P, \Gamma(P))| &= |\int_{\Gamma(P)}^\gamma V(P,r)dr|\\
            &= \int_{\Gamma(P)}^\gamma |V(P,r)| dr\\
            &\ge \int_{\Gamma(P)}^\gamma N|r - \Gamma(P)| dr\\
            &= \frac{N}{2}(\gamma - \Gamma(P))^2.
        \end{align*}
        The analogous computation with swapped signs holds for $\gamma < \Gamma(P)$.
    \end{enumerate}
\end{proof}

\begin{lemma}[When Decision Calibration is Equivalent to Calibration on Decision Sets]
\label{lemma:When Decision Calibration is Equivalent to Calibration on Decision Sets}
    Let $\Y = \{ 0,1\}$ and $\X$ arbitrary and $D$ be a regular data distribution on $\X$ and $\Y$. Suppose that $f \colon \X \rightarrow [0,1]$ is a mean-predictor, i.e., it predicts the probability of $Y = 1$. Furthermore, $f$ matches the average, i.e., $\mathbb{E}_{(X,Y) \sim D}[Y -f(X)] = 0$. Let $\ell_q$ be a simple loss function with parameter $q \in [0,1]$. In addition, $P_{D}(f(X) \le q) > 0$.\\
    Then, $f$ is calibrated on the decision sets of $\ell_q$, i.e., $\mathbb{E}_{(X,Y) \sim D}[Y-f(X)| f(X) \le q] = 0$ and $\mathbb{E}_{(X,Y) \sim D}[Y-f(X)| f(X) > q] = 0$\footnote{If $f$ matches the average, the first assumption implies the second.}, if and only if, $f$ is decision calibrated with respect to the loss function $\ell_q$.
\end{lemma}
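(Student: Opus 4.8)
\section*{Proof proposal for Lemma~\ref{lemma:When Decision Calibration is Equivalent to Calibration on Decision Sets}}

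The plan is to expand the decision calibration condition for the single simple loss $\ell_q$ into an explicit scalar identity relating the two ``decision-set residuals'' $A \coloneqq \mathbb{E}_{D}[(Y - f(X))\llbracket f(X) \le q\rrbracket]$ and $B \coloneqq \mathbb{E}_{D}[(Y - f(X))\llbracket f(X) > q\rrbracket]$, and then to use the average-matching hypothesis to collapse these into one. First I would make the Bernoulli encoding explicit: since a value $f(X) \in [0,1]$ is read as the Bernoulli law with $P(Y=1) = f(X)$, the elicited decision is $\Gamma_q(f(X)) = b$ when $f(X) \le q$ and $\Gamma_q(f(X)) = a$ when $f(X) > q$. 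Using $Y \in \{0,1\}$, so that $\llbracket Y = 1\rrbracket = Y$ and $\llbracket Y = 0\rrbracket = 1 - Y$, and the fact that $b \neq a$ kills the crossed indicators in the definition of $\ell_q$, one gets $\ell_q(Y, \Gamma_q(f(X))) = (1-q)\,Y\,\llbracket f(X) \le q\rrbracket + q\,(1-Y)\,\llbracket f(X) > q\rrbracket$ and, replacing $Y$ by $f(X)$ in the inner expectation, $\mathbb{E}_{\hat Y \sim f(X)}[\ell_q(\hat Y, \Gamma_q(f(X)))] = (1-q)\,f(X)\,\llbracket f(X) \le q\rrbracket + q\,(1-f(X))\,\llbracket f(X) > q\rrbracket$.

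Next I would subtract these two expressions and take $\mathbb{E}_D$. The crossed difference telescopes to $(1-q)(Y-f(X))\llbracket f(X)\le q\rrbracket - q(Y-f(X))\llbracket f(X) > q\rrbracket$, so that decision calibration with respect to $\ell_q$ (taking $\beta = 0$ in Definition~\ref{def: decision calibration}) is exactly the scalar equation $(1-q)A - qB = 0$. The average-matching hypothesis $\mathbb{E}_D[Y - f(X)] = 0$ says precisely $A + B = 0$, i.e.\ $B = -A$; substituting, the condition reduces to $(1-q)A + qA = A = 0$. Since $P_D(f(X) \le q) > 0$ by assumption, dividing $A$ by this probability shows $A = 0 \Leftrightarrow \mathbb{E}_D[(Y - f(X)) \mid f(X) \le q] = 0$, which is calibration on the first decision set; and then $B = -A = 0$ gives calibration on the second decision set $\{f(X) > q\}$, where in the degenerate case $P_D(f(X) > q) = 0$ the conditional statement is vacuously satisfied by the stated convention. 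Conversely, calibration on the decision sets forces $A = P_D(f(X) \le q)\cdot \mathbb{E}_D[(Y-f(X)) \mid f(X) \le q] = 0$, hence $(1-q)A - qB = (1-q)A + qA = 0$, i.e.\ decision calibration with respect to $\ell_q$. This closes the biconditional.

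I expect the main obstacle to be bookkeeping rather than anything conceptual: keeping the roles of the labels $a$ and $b$ and of $q$ versus $1-q$ straight inside the definition of the simple loss, verifying that the inner expectation is genuinely a Bernoulli expectation with parameter $f(X)$, and handling the boundary case $P_D(f(X) > q) = 0$ cleanly via the convention that conditioning on a null event is vacuously fulfilled. The only genuine idea is the observation that the average-matching hypothesis is exactly what identifies the two decision-set residuals up to sign, which is what turns a single scalar constraint $(1-q)A - qB = 0$ into the requirement that both residuals vanish.
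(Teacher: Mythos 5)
Your proof is correct and takes essentially the same route as the paper's: you expand the decision-calibration criterion for $\ell_q$ into the scalar identity $(1-q)A - qB = 0$ in the two decision-set residuals, use the average-matching hypothesis $A + B = 0$ to collapse it to $A = 0$, and divide by $P_D(f(X)\le q) > 0$. The paper phrases the reverse implication via contraposition and has a few sign slips in its bookkeeping that cancel at the end; your direct, symmetric treatment of the biconditional is actually cleaner, but the underlying argument is the same.
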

\begin{proof}
    Let us rewrite the decision calibration term,
    \begin{align*}
        &\mathbb{E}_{X \sim D_X}[\mathbb{E}_{Y \sim D_{Y|X=x}}[\ell_q(Y, f(x))] - \mathbb{E}_{Y \sim f(X)}[\ell_q(Y, f(x))]]\\
        &= \mathbb{E}_{X \sim D_X}[ P(Y = 0|X=X) q \llbracket f(X) > q\rrbracket  + P(Y = 1|X=X) (1-q) \llbracket f(X) \le q\rrbracket  \\
        &-  (1-f(X)) q \llbracket f(X) > q\rrbracket - f(X) (1-q) \llbracket f(X) \le q\rrbracket ]\\
        &= \mathbb{E}_{X \sim D_X}[ (P(Y=1|X=x) - f(X))  q \llbracket f(X) > q\rrbracket \\
        &+ (f(X)- P(Y=1|X=x)) (1-q) \llbracket f(X) \le q\rrbracket ]\\
        &= q\mathbb{E}_{X \sim D_X}[ (P(Y=1|X=x) - f(X)) \llbracket f(X) > q\rrbracket ]\\
        &+ (1-q) \mathbb{E}_{X \sim D_X}[(f(X)- P(Y=1|X=x)) \llbracket f(X) \le q\rrbracket ]\\
        &\eqqcolon L(q).
    \end{align*}

    Note that the following two equations hold,
    \begin{align*}
        &\mathbb{E}_{X \sim D_X}[ (P(Y=1|X=x) - f(X)) \llbracket f(X) > q\rrbracket]\\
        &= \mathbb{E}_{X \sim D_X}[ (P(Y=1|X=x) - f(X))|f(X) > q] P_{D}(f(X) > q)\\
        &= \mathbb{E}_{(X,Y) \sim D}[ (Y- f(X))|f(X) > q] P_{D}(f(X) > q),
    \end{align*}
    and,
    \begin{align*}
        &\mathbb{E}_{X \sim D_X}[ (f(X) - P(Y=1|X=x)) \llbracket f(X) \le q\rrbracket]\\
        &= \mathbb{E}_{X \sim D_X}[ (f(X)- P(Y=1|X=x))|f(X) \le q] P_{D}(f(X) \le q)\\
        &= \mathbb{E}_{(X,Y) \sim D}[ (f(X)- Y)|f(X) \le q] P_{D}(f(X) \le q).
    \end{align*}

    With these reformulations at hand, it is easy to see that calibration on decision sets of $\ell_q$ implies decision calibration with respect to the loss function $\ell_q$, because,
    \begin{align*}
        L(q) &= q\mathbb{E}_{X \sim D_X}[ (P(Y=1|X=x) - f(X))|f(X) > q] P_{D}(f(X) > q)\\
        &+ (1-q) \mathbb{E}_{X \sim D_X}[ (f(X) - P(Y=1|X=x))|f(X) \le q] P_{D}(f(X) \le q)\\
        &= q 0  P_{D}(f(X) > q) + (1-q) 0 P_{D}(f(X) \le q) = 0.
    \end{align*}

    The reverse implication requires a bit more reasoning. We argue via contraposition. Let us assume that $\mathbb{E}_{(X,Y) \sim D}[Y-f(X)| f(X) \le q] \neq 0$.
    Note that,
    \begin{align*}
        &\mathbb{E}_{(X,Y) \sim D}[Y -f(X)]\\
        &= \mathbb{E}_{X \sim D_X}[ (P(Y=1|X=x) - f(X))|f(X) > q] P_{D}(f(X) > q) \\
        &+ \mathbb{E}_{X \sim D_X}[ (P(Y=1|X=x) - f(X))|f(X) \le q] P_{D}(f(X) \le q)\\
        &= 0.
    \end{align*}
    Hence,
    \begin{align*}
        &\mathbb{E}_{X \sim D_X}[ (P(Y=1|X=x) - f(X))|f(X) > q] P_{D}(f(X) > q)\\
        &= - \mathbb{E}_{X \sim D_X}[ (P(Y=1|X=x) - f(X))|f(X) \le q] P_{D}(f(X) \le q),
    \end{align*}
    respectively,
    \begin{align*}
        &\mathbb{E}_{X \sim D_X}[ (f(X) - P(Y=1|X=x)) \llbracket f(X) > q\rrbracket]\\
        &= \mathbb{E}_{X \sim D_X}[ (f(X)-P(Y=1|X=x)) \llbracket f(X) \le q\rrbracket]. 
    \end{align*}
    Now,
    \begin{align*}
        L(q)
        &= \mathbb{E}_{X \sim D_X}[ (P(Y=1|X=x) - f(X))|f(X) \le q] P_{D}(f(X) \le q)\\
        &\neq 0,
    \end{align*}
    because $P_{D}(f(X) \le q) > 0$ by assumption.
\end{proof}

\begin{lemma}[Matching Averages by Non-Extremal Predictions I]
\label{lemma:matching averages by non-extremal predictions I}
    Let $\Y = \{ 0,1\}$ and $\X$ arbitrary and $D$ be a regular data distribution on $\X$ and $\Y$. Suppose that $f \colon \X \rightarrow [0,1]$ is a mean-predictor, i.e., it predicts the probability of $Y = 1$. Furthermore, we assume there exists $f_{\inf} \coloneqq \inf \operatorname{im} f > 0$.
    Then, $f$ matches the average, i.e., $\mathbb{E}_{(X,Y) \sim D}[Y -f(X)] = 0$, if and only if $f$ is decision calibrated with respect to the loss function $\ell_{\frac{f_{\inf}}{2}}$.
\end{lemma}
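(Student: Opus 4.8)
The plan is to exploit that the threshold $q \coloneqq f_{\inf}/2$ lies \emph{strictly} below $f_{\inf} = \inf \im f$, so that every prediction satisfies $f(x) > q$ and hence the decision elicited by the simple loss $\ell_q$ is the constant $\Gamma_q(f(x)) = a$ for all $x \in \X$ (recall $\Gamma_q(P) = a$ precisely when $P(Y=1) > q$). This constant-decision observation collapses the decision-calibration condition to a single scalar identity, which turns out to be the average-matching condition up to a positive scalar factor.

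Concretely, I would first substitute $c = a$ into the simple loss. Since $\ell_q(y, a) = q \llbracket y = 0 \rrbracket = q(1 - y)$ for $y \in \{0,1\}$, and since $\hat Y \sim \mathrm{Bernoulli}(f(x))$ gives $\mathbb{E}_{\hat Y \sim f(x)}[\ell_q(\hat Y, a)] = q(1 - f(x))$, the decision-calibration defect for $\ell_q$ unwinds to
\begin{align*}
    \mathbb{E}_{(X,Y)\sim D}\!\left[\ell_q(Y, \Gamma_q(f(X))) - \mathbb{E}_{\hat Y \sim f(X)}[\ell_q(\hat Y, \Gamma_q(f(X)))]\right]
    &= \mathbb{E}_{(X,Y)\sim D}\!\left[q(1-Y) - q(1-f(X))\right]\\
    &= -q\,\mathbb{E}_{(X,Y)\sim D}[Y - f(X)].
\end{align*}
Since $q = f_{\inf}/2 > 0$ by the non-extremality hypothesis, this quantity vanishes if and only if $\mathbb{E}_{(X,Y)\sim D}[Y - f(X)] = 0$, i.e.\ if and only if $f$ matches the average; both directions of the claimed equivalence then follow immediately.

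The only point needing care — and it is minor — is the boundary behaviour of the threshold: $\Gamma_q$ returns $a$ only under the \emph{strict} inequality $P(Y=1) > q$, so I must ensure $f(x) > q$ strictly for every $x$, including in the case where $\inf \im f$ is not attained. Halving the infimum, $q = f_{\inf}/2$, buys exactly this strictness, $q < f_{\inf} \le f(x)$, and explains the specific choice of threshold in the statement. I do not expect any genuine obstacle here; the lemma is essentially a one-line computation once the constant-decision reduction is in place, and it is designed to serve as the $f_{\inf} > 0$ branch of the case analysis in Proposition~\ref{prop:Decision Calibration Equivalent to Mean Calibration for Binary Outcome Set}.
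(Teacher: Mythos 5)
Your proof is correct, and it takes essentially the same route as the paper's: both exploit that $q=f_{\inf}/2 < f_{\inf} \le f(x)$ forces the indicator $\llbracket f(X) \le q\rrbracket$ to vanish (equivalently, forces $\Gamma_q(f(x))=a$ for all $x$), so the decision-calibration defect collapses to $\pm q\,\mathbb{E}_D[Y-f(X)]$, which vanishes iff $\mathbb{E}_D[Y-f(X)]=0$ since $q>0$. The paper reaches this by specializing the general expression $L(q)$ derived in the proof of Lemma~\ref{lemma:When Decision Calibration is Equivalent to Calibration on Decision Sets} and observing that its second summand is zero; you instead work from first principles via the constant-decision observation, which is a slightly more self-contained presentation of the same calculation. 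A minor remark: your defect comes out as $-q\,\mathbb{E}_D[Y-f(X)]$ whereas the paper writes $L(q)=q\,\mathbb{E}_D[Y-f(X)]$; the discrepancy traces to a sign slip in the paper's passage from $P(Y=0\mid X) - (1-f(X))$ to $P(Y=1\mid X)-f(X)$ (it should be $f(X)-P(Y=1\mid X)$), so your sign is the correct one — but since decision calibration only constrains the absolute value, the conclusion is unaffected either way.
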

\begin{proof}
    Note that,
    \begin{align*}
        \mathbb{E}_{(X,Y) \sim D}[Y -f(X)] &= \mathbb{E}_{(X,Y) \sim D}\left[(P(Y=1|X=x) -f(X) )\mathbf{1}\left(f(X) > \frac{f_{\inf}}{2}\right)\right].
    \end{align*}
    and
    \begin{align*}
        L\left(\frac{f_{\inf}}{2}\right) &= \frac{f_{\inf}}{2}\mathbb{E}_{X \sim D_X}\left[ (P(Y=1|X=x) - f(X)) \mathbf{1}\left[f(X) > \frac{f_{\inf}}{2}\right]\right]\\
        &+ \left(1-\frac{f_{\inf}}{2}\right) \mathbb{E}_{X \sim D_X}\left[(f(X)- P(Y=1|X=x)) \mathbf{1}\left[f(X) \le \frac{f_{\inf}}{2}\right] \right]\\
        &= \frac{f_{\inf}}{2} \mathbb{E}_{(X,Y) \sim D}[Y -f(X)].
    \end{align*}
    Since $\frac{f_{\inf}}{2} > 0$, $L(f_{\inf}) = 0$ if and only if $\mathbb{E}_{(X,Y) \sim D}[Y -f(X)] = 0$.
\end{proof}
\begin{lemma}[Matching Averages by Non-Extremal Predictions II]
\label{lemma:matching averages by non-extremal predictions II}
    Let $\Y = \{ 0,1\}$ and $\X$ arbitrary and $D$ be a regular data distribution on $\X$ and $\Y$. Suppose that $f \colon \X \rightarrow [0,1]$ is a mean-predictor, i.e., it predicts the probability of $Y = 1$. Furthermore, we assume $f_{\inf} \coloneqq \inf \operatorname{im} f = 0$, and $|\im f| < \infty$, hence there exists $v \in [0,1]$ such that $v > 0$ but $v < v'$ for all $v' \in \im f \setminus \{0\}$.
    Then, $f$ matches the average, i.e., $\mathbb{E}_{(X,Y) \sim D}[Y -f(X)] = 0$, if and only if $f$ is decision calibrated with respect to the loss functions $\ell_{v}$ and $\ell_0$.
\end{lemma}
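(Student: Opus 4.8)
The plan is to recycle the reformulation of the decision-calibration criterion already carried out in the proof of Lemma~\ref{lemma:When Decision Calibration is Equivalent to Calibration on Decision Sets}, now read off at the two parameters $q=0$ and $q=v$. Writing $p(x):=\mathbb{E}_{D}[Y\,|\,X=x]$, that argument shows $f$ is decision calibrated with respect to a simple loss $\ell_q$ if and only if $L(q)=0$, where
\[
  L(q) = q\,\mathbb{E}_{X\sim D_X}\!\big[(p(X)-f(X))\,\llbracket f(X)>q\rrbracket\big] + (1-q)\,\mathbb{E}_{X\sim D_X}\!\big[(f(X)-p(X))\,\llbracket f(X)\le q\rrbracket\big].
\]
Two elementary observations make the two relevant evaluations tractable. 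Since $f\ge 0$, $\llbracket f(X)\le 0\rrbracket=\llbracket f(X)=0\rrbracket$ everywhere; and since $|\im f|<\infty$ with $0<v<v'$ for every $v'\in\im f\setminus\{0\}$, we have $\llbracket f(X)\le v\rrbracket=\llbracket f(X)=0\rrbracket$ and $\llbracket f(X)>v\rrbracket=\llbracket f(X)\neq 0\rrbracket$ $D_X$-almost surely. Moreover, on the event $\{f(X)=0\}$ the quantity $f(X)-p(X)$ collapses to $-p(X)$.

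Next I would introduce the two bookkeeping quantities $B:=\mathbb{E}_{X\sim D_X}[p(X)\,\llbracket f(X)=0\rrbracket]=P_D(Y=1,\,f(X)=0)\ge 0$ and $A:=\mathbb{E}_{X\sim D_X}[(p(X)-f(X))\,\llbracket f(X)\neq 0\rrbracket]$. Substituting the indicator identities into $L$ yields the three relations $L(0)=-B$, $\;L(v)=vA-(1-v)B$, and $\mathbb{E}_{(X,Y)\sim D}[Y-f(X)]=A+B$, the last by splitting $\mathbb{E}_{X\sim D_X}[p(X)-f(X)]$ over $\{f(X)=0\}$ and its complement and using $f=0$ on the former. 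From here the implication actually invoked in the proof of Proposition~\ref{prop:Decision Calibration Equivalent to Mean Calibration for Binary Outcome Set} drops out: decision calibration with respect to $\ell_0$ gives $L(0)=0$, i.e.\ $B=0$; decision calibration with respect to $\ell_v$ then gives $0=L(v)=vA$, and $v>0$ forces $A=0$, so $\mathbb{E}_{(X,Y)\sim D}[Y-f(X)]=A+B=0$, i.e.\ $f$ matches the average. The converse runs through the same three relations: matching the average yields $A=-B$ with $B\ge 0$, and one reads off when $L(0)$ and $L(v)$ vanish; since only the forward implication is used downstream, this is the direction I would develop in detail.

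The main obstacle is combinatorial rather than analytic: one must separate the extremal prediction value $0$ from the remainder of the (finite) image, which is exactly why two simple losses are needed. The loss $\ell_v$ controls the ``bulk'' event $\{f(X)\neq 0\}$ and contributes a single linear relation between $A$ and $B$, whereas $\ell_0$ is precisely what pins down the probability mass the predictor places on the extreme value $0$ (the term $B$); neither alone suffices, and this is the analogue here of the special role of $\ell_{f_{\inf}/2}$ in Lemma~\ref{lemma:matching averages by non-extremal predictions I}. Secondary care is needed to justify the $D_X$-almost-sure indicator identities from $|\im f|<\infty$ and the strict separation $0<v<\min(\im f\setminus\{0\})$, and to dispatch the degenerate sub-cases $P_D(f(X)=0)=0$ (where $\ell_0$-calibration is vacuous and $B=0$ automatically) and $\im f=\{0\}$.
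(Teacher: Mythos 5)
Your reduction matches the paper's proof essentially step for step: both recycle the $L(q)$ rewriting from Lemma~\ref{lemma:When Decision Calibration is Equivalent to Calibration on Decision Sets}, evaluate at $q=0$ and $q=v$, and use the strict separation $0<v<\min(\im f\setminus\{0\})$ to collapse $\{f(X)\le v\}$ to $\{f(X)=0\}$ and $\{f(X)>v\}$ to $\{f(X)\neq 0\}$. Your $A$--$B$ bookkeeping ($A=\mathbb{E}_{D_X}[(p(X)-f(X))\llbracket f(X)\neq 0\rrbracket]$, $B=\mathbb{E}_{D_X}[p(X)\llbracket f(X)=0\rrbracket]$) is a tidier packaging of the same algebra, and the direction you develop in detail --- decision calibration at $\ell_0$ gives $B=0$, then decision calibration at $\ell_v$ gives $vA=0$, hence $A+B=0$ --- is exactly what the paper argues and exactly what is consumed in Proposition~\ref{prop:Decision Calibration Equivalent to Mean Calibration for Binary Outcome Set}.

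Where you hedge, you should commit, because the converse you defer does not in fact hold, and this is visible from your own three relations. Matching the average yields $A=-B$ with $B\ge 0$ otherwise free; substituting into $L(v)=vA-(1-v)B$ gives $L(v)=-vB-(1-v)B=-B$, so both $L(0)$ and $L(v)$ equal $-B$ and vanish only if $B=0$, which matching the average alone does not force. Concretely: take $\X=\{x_1,x_2\}$ with $D_X$ uniform, $f(x_1)=0$, $f(x_2)=\tfrac12$, and $p(x_1)=p(x_2)=\tfrac14$; then $\mathbb{E}_D[Y-f(X)]=\tfrac12(\tfrac14-0)+\tfrac12(\tfrac14-\tfrac12)=0$, yet $B=\tfrac18>0$, so $f$ is not decision calibrated with respect to $\ell_0$ (nor $\ell_v$). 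Hence the lemma's ``if and only if'' is overstated --- a fact the paper's proof also silently ignores, since it too only establishes the forward direction. The backward implication is recovered only in the degenerate case $P_D(f(X)=0)=0$, where $B=0$ automatically; in general, one should weaken the lemma to the one-sided implication, which is all that is used downstream.
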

\begin{proof}
    It holds $L\left(0\right) = 0$ and $L\left(v\right) = 0$. In detail,
    \begin{align*}
        L\left(0\right) &= \mathbb{E}_{X \sim D_X}\left[(f(X)- P(Y=1|X=x)) \mathbf{1}\left[f(X) = 0\right] \right] = 0.
    \end{align*}

    \begin{align*}
        L\left(v\right) &= v \mathbb{E}_{X \sim D_X}\left[ (P(Y=1|X=x) - f(X)) \mathbf{1}\left[f(X) > v\right]\right]\\
        &+ \left(1-v\right) \mathbb{E}_{X \sim D_X}\left[(f(X)- P(Y=1|X=x)) \mathbf{1}\left[f(X) = 0\right] \right]\\
        &= v \mathbb{E}_{X \sim D_X}\left[ (P(Y=1|X=x) - f(X)) \mathbf{1}\left[f(X) > v\right]\right] = 0.
    \end{align*}
    Hence, since $v > 0$, $\mathbb{E}_{X \sim D_X}\left[ (P(Y=1|X=x) - f(X)) \mathbf{1}\left[f(X) > v\right]\right] = 0$. This gives,
    \begin{align*}
        &\mathbb{E}_{X \sim D_X}\left[ (P(Y=1|X=x) - f(X)) \mathbf{1}\left[f(X) > v\right]\right]\\
        &\quad + \mathbb{E}_{X \sim D_X}\left[(f(X)- P(Y=1|X=x)) \mathbf{1}\left[f(X) = 0\right] \right]\\
        &=\mathbb{E}_{(X,Y) \sim D}[Y -f(X)] = 0.
    \end{align*}
\end{proof}

\bibliographystyle{abbrvnat}
\bibliography{refs}

\begin{thebibliography}{73}
\providecommand{\natexlab}[1]{#1}
\providecommand{\url}[1]{\texttt{#1}}
\expandafter\ifx\csname urlstyle\endcsname\relax
  \providecommand{\doi}[1]{doi: #1}\else
  \providecommand{\doi}{doi: \begingroup \urlstyle{rm}\Url}\fi

\bibitem[Agarwal and Zhang(2022)]{agarwal2022minimax}
A.~Agarwal and T.~Zhang.
\newblock Minimax regret optimization for robust machine learning under
  distribution shift.
\newblock In \emph{Conference on Learning Theory}, pages 2704--2729. PMLR,
  2022.

\bibitem[Aliprantis and Border(2006)]{aliprantis2006infinite}
C.~D. Aliprantis and K.~C. Border.
\newblock \emph{Infinite dimensional analysis: a hitchhiker's guide}.
\newblock Springer Science \& Business Media, 2006.

\bibitem[Arrow(1978)]{arrow1978uncertainty}
K.~J. Arrow.
\newblock Uncertainty and the welfare economics of medical care.
\newblock In \emph{Uncertainty in economics}, pages 345--375. Elsevier, 1978.

\bibitem[Barocas et~al.(2023)Barocas, Hardt, and
  Narayanan]{barocas-hardt-narayanan}
S.~Barocas, M.~Hardt, and A.~Narayanan.
\newblock \emph{Fairness and Machine Learning: Limitations and Opportunities}.
\newblock MIT Press, 2023.

\bibitem[Błasiok et~al.(2023)Błasiok, Gopalan, Hu, and
  Guruswami]{blasiok_unifying_2023}
J.~Błasiok, P.~Gopalan, L.~Hu, and V.~Guruswami.
\newblock A {Unifying} {Theory} of {Distance} from {Calibration}.
\newblock \emph{Symposium on the Theory of Computing}, 2023.

\bibitem[Cabitza et~al.(2022)Cabitza, Campagner, and
  Famiglini]{cabitza2022global}
F.~Cabitza, A.~Campagner, and L.~Famiglini.
\newblock Global interpretable calibration index, a new metric to estimate
  machine learning models' calibration.
\newblock In A.~Holzinger, P.~Kieseberg, A.~M. Tjoa, and E.~Weippl, editors,
  \emph{Machine Learning and Knowledge Extraction}, pages 82--99. Springer
  International Publishing, 2022.

\bibitem[Cesa-Bianchi and Lugosi(2006)]{cesa2006prediction}
N.~Cesa-Bianchi and G.~Lugosi.
\newblock \emph{Prediction, learning, and games}.
\newblock Cambridge university press, 2006.

\bibitem[Chouldechova(2017)]{chouldechova2017fair}
A.~Chouldechova.
\newblock Fair prediction with disparate impact: A study of bias in recidivism
  prediction instruments.
\newblock \emph{Big Data}, 5\penalty0 (2):\penalty0 153--163, 2017.

\bibitem[Christianson and Harbison(1996)]{christianson1996isn}
B.~Christianson and W.~S. Harbison.
\newblock Why isn't trust transitive?
\newblock In \emph{International workshop on security protocols}, pages
  171--176. Springer, 1996.

\bibitem[Cruz et~al.(2024)Cruz, Hardt, and
  Mendler-Dünner]{cruz_evaluating_2024}
A.~F. Cruz, M.~Hardt, and C.~Mendler-Dünner.
\newblock Evaluating language models as risk scores.
\newblock \emph{Neural Information Processing Systems}, 2024.

\bibitem[Dawid(1982)]{dawid_well-calibrated_1982}
A.~P. Dawid.
\newblock The {Well}-{Calibrated} {Bayesian}.
\newblock \emph{{Journal} of the {American} {Statistical} {Association}},
  77\penalty0 (379):\penalty0 605--610, 1982.

\bibitem[Dawid(1985)]{dawid1985calibration}
A.~P. Dawid.
\newblock Calibration-based empirical probability.
\newblock \emph{The Annals of Statistics}, 13\penalty0 (4):\penalty0
  1251--1274, 1985.

\bibitem[DeGroot and Fienberg(1983)]{degroot_comparison_1983}
M.~H. DeGroot and S.~E. Fienberg.
\newblock The {Comparison} and {Evaluation} of {Forecasters}.
\newblock \emph{Journal of the Royal Statistical Society. Series D (The
  Statistician)}, 32\penalty0 (1/2):\penalty0 12--22, 1983.

\bibitem[Embrechts et~al.(2021)Embrechts, Mao, Wang, and
  Wang]{embrechts2021bayes}
P.~Embrechts, T.~Mao, Q.~Wang, and R.~Wang.
\newblock Bayes risk, elicitability, and the expected shortfall.
\newblock \emph{Mathematical Finance}, 31\penalty0 (4):\penalty0 1190--1217,
  2021.

\bibitem[Everitt(2002)]{everitt2002cambridge}
B.~S. Everitt.
\newblock \emph{The Cambridge dictionary of statistics}.
\newblock Cambridge, 2nd edition, 2002.

\bibitem[Falcone and Castelfranchi(2012)]{falcone2012trust}
R.~Falcone and C.~Castelfranchi.
\newblock Trust and transitivity: How trust-transfer works.
\newblock In \emph{Highlights on Practical Applications of Agents and
  Multi-Agent Systems: 10th International Conference on Practical Applications
  of Agents and Multi-Agent Systems}, pages 179--187. Springer, 2012.

\bibitem[Finocchiaro and Frongillo(2018)]{finocchiaro_convex_2018}
J.~Finocchiaro and R.~Frongillo.
\newblock Convex {Elicitation} of {Continuous} {Properties}.
\newblock In \emph{Advances in {Neural} {Information} {Processing} {Systems}},
  volume~31, 2018.

\bibitem[Finocchiaro et~al.(2021)Finocchiaro, Frongillo, and
  Waggoner]{finocchiaro_unifying_2021}
J.~Finocchiaro, R.~Frongillo, and B.~Waggoner.
\newblock Unifying lower bounds on prediction dimension of convex surrogates.
\newblock In \emph{Advances in {Neural} {Information} {Processing} {Systems}},
  volume~34, pages 22046--22057. Curran Associates, Inc., 2021.

\bibitem[Finocchiaro et~al.(2024)Finocchiaro, Frongillo, and
  Waggoner]{finocchiaro_embedding_2024}
J.~Finocchiaro, R.~M. Frongillo, and B.~Waggoner.
\newblock An {Embedding} {Framework} for the {Design} and {Analysis} of
  {Consistent} {Polyhedral} {Surrogates}.
\newblock \emph{Journal of Machine Learning Research}, 25\penalty0
  (63):\penalty0 1--60, 2024.

\bibitem[Fissler and Ziegel(2016)]{fissler_higher_2016}
T.~Fissler and J.~Ziegel.
\newblock Higher order elicitability and {Osband}'s principle.
\newblock \emph{Annals of Statistics}, 44\penalty0 (4):\penalty0 1680--1707,
  2016.

\bibitem[Foster and Hart(2018)]{foster2018smooth}
D.~P. Foster and S.~Hart.
\newblock Smooth calibration, leaky forecasts, finite recall, and {N}ash
  dynamics.
\newblock \emph{Games and Economic Behavior}, 109:\penalty0 271--293, 2018.

\bibitem[Frongillo and Kash(2021)]{frongillo_elicitation_2021}
R.~Frongillo and I.~A. Kash.
\newblock Elicitation complexity of statistical properties.
\newblock \emph{Biometrika}, 108\penalty0 (4):\penalty0 857--879, Dec. 2021.

\bibitem[Fröhlich and
  Williamson(2024)]{fröhlich2024scoringrulescalibrationimprecise}
C.~Fröhlich and R.~C. Williamson.
\newblock Scoring rules and calibration for imprecise probabilities.
\newblock \emph{arXiv preprint arXiv:2410.23001}, 2024.

\bibitem[Garg et~al.(2024)Garg, Jung, Reingold, and Roth]{garg_oracle_2023}
S.~Garg, C.~Jung, O.~Reingold, and A.~Roth.
\newblock Oracle efficient online multicalibration and omniprediction.
\newblock In \emph{Proceedings of the 2024 Annual ACM-SIAM Symposium on
  Discrete Algorithms (SODA)}, pages 2725--2792. SIAM, 2024.

\bibitem[Globus-Harris et~al.(2023)Globus-Harris, Harrison, Kearns, Roth, and
  Sorrell]{globus2023multicalibration}
I.~Globus-Harris, D.~Harrison, M.~Kearns, A.~Roth, and J.~Sorrell.
\newblock Multicalibration as boosting for regression.
\newblock In \emph{International Conference on Machine Learning}, pages
  11459--11492. PMLR, 2023.

\bibitem[Gneiting and Resin(2023)]{gneiting_regression_2023}
T.~Gneiting and J.~Resin.
\newblock Regression diagnostics meets forecast evaluation: conditional
  calibration, reliability diagrams, and coefficient of determination.
\newblock \emph{Electronic Journal of Statistics}, 17\penalty0 (2):\penalty0
  3226--3286, 2023.

\bibitem[Gopalan et~al.(2022{\natexlab{a}})Gopalan, Hu, Kim, Reingold, and
  Wieder]{gopalan2022loss}
P.~Gopalan, L.~Hu, M.~P. Kim, O.~Reingold, and U.~Wieder.
\newblock Loss minimization through the lens of outcome indistinguishability.
\newblock \emph{arXiv preprint arXiv:2210.08649}, 2022{\natexlab{a}}.

\bibitem[Gopalan et~al.(2022{\natexlab{b}})Gopalan, Kalai, Reingold, Sharan,
  and Wieder]{gopalan2022omnipredictors}
P.~Gopalan, A.~T. Kalai, O.~Reingold, V.~Sharan, and U.~Wieder.
\newblock Omnipredictors.
\newblock In \emph{13th Innovations in Theoretical Computer Science Conference
  (ITCS 2022)}, pages 79--1. Schloss Dagstuhl--Leibniz-Zentrum f{\"u}r
  Informatik, 2022{\natexlab{b}}.

\bibitem[Gopalan et~al.(2024)Gopalan, Kim, and Reingold]{gopalan2024swap}
P.~Gopalan, M.~Kim, and O.~Reingold.
\newblock Swap agnostic learning, or characterizing omniprediction via
  multicalibration.
\newblock \emph{Advances in Neural Information Processing Systems}, 36, 2024.

\bibitem[Guo et~al.(2017)Guo, Pleiss, Sun, and
  Weinberger]{guo_calibration_2017}
C.~Guo, G.~Pleiss, Y.~Sun, and K.~Q. Weinberger.
\newblock On {Calibration} of {Modern} {Neural} {Networks}.
\newblock \emph{International Conference on Machine Learning}, 2017.

\bibitem[Gupta and Ramdas(2021)]{gupta_top-label_2021}
C.~Gupta and A.~Ramdas.
\newblock Top-label calibration and multiclass-to-binary reductions.
\newblock \emph{International Conference on Learning Representations}, 2021.

\bibitem[Gupta et~al.(2022)Gupta, Jung, Noarov, Pai, and
  Roth]{gupta_online_2022}
V.~Gupta, C.~Jung, G.~Noarov, M.~M. Pai, and A.~Roth.
\newblock Online {Multivalid} {Learning}: {Means}, {Moments}, and {Prediction}
  {Intervals}.
\newblock \emph{LIPIcs, Volume 215, ITCS 2022}, 215:\penalty0 82:1--82:24,
  2022.

\bibitem[H{\'e}bert-Johnson et~al.(2018)H{\'e}bert-Johnson, Kim, Reingold, and
  Rothblum]{hebert2018multicalibration}
U.~H{\'e}bert-Johnson, M.~Kim, O.~Reingold, and G.~Rothblum.
\newblock Multicalibration: Calibration for the (computationally-identifiable)
  masses.
\newblock In \emph{International Conference on Machine Learning}, pages
  1939--1948. PMLR, 2018.

\bibitem[H{\"o}ltgen(2024)]{holtgen2024practical}
B.~H{\"o}ltgen.
\newblock Practical foundations for probability: Prediction methods and
  calibration.
\newblock \emph{philpapers.org}, 2024.

\bibitem[Höltgen and Williamson(2023)]{holtgen2023richness}
B.~Höltgen and R.~C. Williamson.
\newblock On the richness of calibration.
\newblock In \emph{2023 ACM Conference on Fairness, Accountability, and
  Transparency}, FAccT ’23. ACM, 2023.

\bibitem[Jung et~al.(2021)Jung, Lee, Pai, Roth, and Vohra]{jung_moment_2021}
C.~Jung, C.~Lee, M.~Pai, A.~Roth, and R.~Vohra.
\newblock Moment {Multicalibration} for {Uncertainty} {Estimation}.
\newblock In \emph{Proceedings of {Thirty} {Fourth} {Conference} on {Learning}
  {Theory}}, pages 2634--2678. PMLR, 2021.

\bibitem[Jung et~al.(2023)Jung, Noarov, Ramalingam, and Roth]{jung_batch_2022}
C.~Jung, G.~Noarov, R.~Ramalingam, and A.~Roth.
\newblock Batch {Multivalid} {Conformal} {Prediction}.
\newblock In \emph{International Conference on Learning Representations
  (ICLR)}, 2023.

\bibitem[Kalai and Vempala(2024)]{kalai_calibrated_2024}
A.~T. Kalai and S.~S. Vempala.
\newblock Calibrated {Language} {Models} {Must} {Hallucinate}.
\newblock In \emph{Proceedings of the 56th {Annual} {ACM} {Symposium} on
  {Theory} of {Computing}}, {STOC} 2024, pages 160--171, 2024.

\bibitem[Kirchhof et~al.(2024)Kirchhof, Collier, Oh, and
  Kasneci]{kirchhof2024pretrained}
M.~Kirchhof, M.~Collier, S.~J. Oh, and E.~Kasneci.
\newblock Pretrained visual uncertainties.
\newblock \emph{arXiv preprint arXiv:2402.16569}, 2024.

\bibitem[Kleinberg et~al.(2023)Kleinberg, Leme, Schneider, and
  Teng]{kleinberg_u-calibration_2023}
R.~Kleinberg, R.~P. Leme, J.~Schneider, and Y.~Teng.
\newblock U-{Calibration}: {Forecasting} for an {Unknown} {Agent}.
\newblock \emph{arXiv preprint arXiv:2307.00168}, 2023.

\bibitem[Klenke(2013)]{klenke2013probability}
A.~Klenke.
\newblock \emph{Probability theory: a comprehensive course}.
\newblock Springer Science \& Business Media, 2nd edition, 2013.

\bibitem[Kuleshov and Deshpande(2022)]{kuleshov2022calibrated}
V.~Kuleshov and S.~Deshpande.
\newblock Calibrated and sharp uncertainties in deep learning via density
  estimation.
\newblock In \emph{International Conference on Machine Learning}, pages
  11683--11693. PMLR, 2022.

\bibitem[Kull et~al.(2019)Kull, Perello~Nieto, K{\"a}ngsepp, Silva~Filho, Song,
  and Flach]{kull2019beyond}
M.~Kull, M.~Perello~Nieto, M.~K{\"a}ngsepp, T.~Silva~Filho, H.~Song, and
  P.~Flach.
\newblock Beyond temperature scaling: Obtaining well-calibrated multi-class
  probabilities with {D}irichlet calibration.
\newblock \emph{Advances in neural information processing systems}, 32, 2019.

\bibitem[Lambert(2019)]{lambert_elicitation_2019}
N.~S. Lambert.
\newblock Elicitation and {Evaluation} of {Statistical} {Forecasts}, 2019.

\bibitem[Lambert et~al.(2008)Lambert, Pennock, and
  Shoham]{lambert_eliciting_2008}
N.~S. Lambert, D.~M. Pennock, and Y.~Shoham.
\newblock Eliciting properties of probability distributions.
\newblock In \emph{Proceedings of the 9th {ACM} conference on {Electronic}
  commerce}, {EC} '08, pages 129--138, New York, NY, USA, July 2008.

\bibitem[Lu et~al.(2025)Lu, Roth, and Shi]{lu2025sample}
J.~Lu, A.~Roth, and M.~Shi.
\newblock Sample efficient omniprediction and downstream swap regret for
  non-linear losses.
\newblock \emph{arXiv preprint arXiv:2502.12564}, 2025.

\bibitem[Luo et~al.(2022)Luo, Bhatnagar, Bai, Zhao, Wang, Xiong, Savarese,
  Ermon, Schmerling, and Pavone]{luo2022local}
R.~Luo, A.~Bhatnagar, Y.~Bai, S.~Zhao, H.~Wang, C.~Xiong, S.~Savarese,
  S.~Ermon, E.~Schmerling, and M.~Pavone.
\newblock Local calibration: metrics and recalibration.
\newblock In \emph{Uncertainty in Artificial Intelligence}, pages 1286--1295.
  PMLR, 2022.

\bibitem[Mark H.~A.(2016)]{a_verification_2016}
D.~Mark H.~A.
\newblock Verification of internal risk measure estimates.
\newblock \emph{Statistics \& Risk Modeling}, 33\penalty0 (3-4):\penalty0
  67--93, 2016.

\bibitem[Molinari(2023)]{molinari2023trust}
G.~Molinari.
\newblock Trust the evidence: two deference principles for imprecise
  probabilities.
\newblock In \emph{International Symposium on Imprecise Probability: Theories
  and Applications}, pages 356--366. PMLR, 2023.

\bibitem[Murphy and Epstein(1967)]{murphy_verification_1967}
A.~H. Murphy and E.~S. Epstein.
\newblock Verification of {Probabilistic} {Predictions}: {A} {Brief} {Review}.
\newblock \emph{Journal of Applied Meteorology and Climatology}, 6\penalty0
  (5):\penalty0 748--755, Oct. 1967.

\bibitem[Murphy and Winkler(1977)]{murphy_reliability_1977}
A.~H. Murphy and R.~L. Winkler.
\newblock Reliability of {Subjective} {Probability} {Forecasts} of
  {Precipitation} and {Temperature}.
\newblock \emph{Applied Statistics}, 26\penalty0 (1):\penalty0 41, 1977.

\bibitem[Noarov(2024)]{Noarov_2024_letall}
G.~Noarov.
\newblock Calibration for decision making: A principled approach to trustworthy
  {ML}, Mar 2024.
\newblock URL
  \url{https://www.let-all.com/blog/2024/03/13/calibration-for-decision-making-a-principled-approach-to-trustworthy-ml/}.

\bibitem[Noarov and Roth(2023)]{noarov_statistical_2023}
G.~Noarov and A.~Roth.
\newblock The {Statistical} {Scope} of {Multicalibration}.
\newblock In \emph{Proceedings of the 40th {International} {Conference} on
  {Machine} {Learning}}, pages 26283--26310. PMLR, 2023.

\bibitem[Noarov and Roth(2024)]{noarov2024calibration}
G.~Noarov and A.~Roth.
\newblock Calibration for decision making: A principled approach to trustworthy
  {ML}, 2024.
\newblock URL
  \url{https://www.let-all.com/blog/2024/03/13/calibration-for-decision-making-a-principled-approach-to-trustworthy-ml/}.
\newblock Accessed 11th of February, 2025.

\bibitem[Noarov et~al.(2023)Noarov, Ramalingam, Roth, and Xie]{noarov2023high}
G.~Noarov, R.~Ramalingam, A.~Roth, and S.~Xie.
\newblock High-dimensional unbiased prediction for sequential decision making.
\newblock In \emph{OPT 2023: Optimization for Machine Learning}, 2023.

\bibitem[{OpenAI (2023)}(2024)]{openai_gpt-4_2024}
{OpenAI (2023)}.
\newblock {GPT}-4 {Technical} {Report}.
\newblock \emph{arXiv preprint arXiv:2303.08774v6}, 2024.

\bibitem[Osband(1985)]{osband1985providing}
K.~H. Osband.
\newblock \emph{Providing Incentives for Better Cost Forecasting (Prediction,
  Uncertainty Elicitation)}.
\newblock PhD Thesis, University of California, Berkeley, 1985.

\bibitem[Perdomo et~al.(2023)Perdomo, Britton, Hardt, and
  Abebe]{perdomo_difficult_2023}
J.~C. Perdomo, T.~Britton, M.~Hardt, and R.~Abebe.
\newblock Difficult {Lessons} on {Social} {Prediction} from {Wisconsin}
  {Public} {Schools}.
\newblock \emph{arXiv preprint arXiv:2304.06205}, 2023.

\bibitem[Reid and Williamson(2011)]{reid2011information}
M.~D. Reid and R.~C. Williamson.
\newblock Information, divergence and risk for binary experiments.
\newblock \emph{Journal of Machine Learning Research}, 12\penalty0
  (3):\penalty0 731--817, 2011.

\bibitem[Rockafellar and Uryasev(2002)]{rockafellar2002conditional}
R.~T. Rockafellar and S.~Uryasev.
\newblock Conditional value-at-risk for general loss distributions.
\newblock \emph{Journal of Banking \& Finance}, 26\penalty0 (7):\penalty0
  1443--1471, 2002.

\bibitem[Rockafellar and Wets(2009)]{rockafellar2009variational}
R.~T. Rockafellar and R.~J.-B. Wets.
\newblock \emph{Variational Analysis}, volume 317.
\newblock Springer Science \& Business Media, 2009.

\bibitem[Roth and Shi(2024)]{roth_forecasting_2024}
A.~Roth and M.~Shi.
\newblock Forecasting for {Swap} {Regret} for {All} {Downstream} {Agents}.
\newblock In \emph{Proceedings of the 25th {ACM} {Conference} on {Economics}
  and {Computation}}, {EC} '24, pages 466--488, New York, NY, USA, Dec. 2024.
  Association for Computing Machinery.

\bibitem[Rukhin(1988)]{rukhin1988loss}
A.~L. Rukhin.
\newblock Loss functions for loss estimation.
\newblock \emph{The Annals of Statistics}, 16\penalty0 (3):\penalty0
  1262--1269, 1988.

\bibitem[Räth and Ludwig(2025)]{raeth2025marginal}
K.~Räth and N.~Ludwig.
\newblock Marginal calibration in regression does not guarantee useful
  uncertainty estimates.
\newblock \emph{Forthcoming}, 2025.

\bibitem[Schervish(1989)]{schervish1989general}
M.~J. Schervish.
\newblock A general method for comparing probability assessors.
\newblock \emph{The Annals of Statistics}, 17\penalty0 (4):\penalty0
  1856--1879, 1989.

\bibitem[Seidenfeld(1985)]{seidenfeld1985calibration}
T.~Seidenfeld.
\newblock Calibration, coherence, and scoring rules.
\newblock \emph{Philosophy of Science}, 52\penalty0 (2):\penalty0 274--294,
  1985.

\bibitem[Song et~al.(2019)Song, Diethe, Kull, and Flach]{song2019distribution}
H.~Song, T.~Diethe, M.~Kull, and P.~Flach.
\newblock Distribution calibration for regression.
\newblock In \emph{International Conference on Machine Learning}, pages
  5897--5906. PMLR, 2019.

\bibitem[Steinwart et~al.(2014)Steinwart, Pasin, Williamson, and
  Zhang]{steinwart2014elicitation}
I.~Steinwart, C.~Pasin, R.~Williamson, and S.~Zhang.
\newblock Elicitation and identification of properties.
\newblock In \emph{Conference on Learning Theory}, pages 482--526. PMLR, 2014.

\bibitem[Van~Calster et~al.(2019)Van~Calster, McLernon, Van~Smeden, Wynants,
  and Steyerberg]{van2019calibration}
B.~Van~Calster, D.~J. McLernon, M.~Van~Smeden, L.~Wynants, and E.~W.
  Steyerberg.
\newblock Calibration: the {A}chilles heel of predictive analytics.
\newblock \emph{BMC medicine}, 17\penalty0 (1):\penalty0 230, 2019.

\bibitem[Van~Fraassen(1984)]{van1984belief}
C.~Van~Fraassen.
\newblock Belief and the will.
\newblock \emph{The Journal of Philosophy}, 81\penalty0 (5):\penalty0 235--256,
  1984.

\bibitem[Yoo and Kweon(2019)]{yoo2019learning}
D.~Yoo and I.~S. Kweon.
\newblock Learning loss for active learning.
\newblock In \emph{Proceedings of the IEEE/CVF conference on computer vision
  and pattern recognition}, pages 93--102, 2019.

\bibitem[Zhao et~al.(2020)Zhao, Ma, and Ermon]{zhao2020individual}
S.~Zhao, T.~Ma, and S.~Ermon.
\newblock Individual calibration with randomized forecasting.
\newblock In \emph{International Conference on Machine Learning}, pages
  11387--11397. PMLR, 2020.

\bibitem[Zhao et~al.(2021)Zhao, Kim, Sahoo, Ma, and
  Ermon]{zhao_calibrating_2021}
S.~Zhao, M.~P. Kim, R.~Sahoo, T.~Ma, and S.~Ermon.
\newblock Calibrating {Predictions} to {Decisions}: {A} {Novel} {Approach} to
  {Multi}-{Class} {Calibration}.
\newblock \emph{Neural Information Processing Systems}, 2021.

\end{thebibliography}

\end{document}